\preto\subequations{\ifhmode\unskip\fi} 
\newif\ifarxivVersion
    \newtheorem{theorem}{Theorem}
    \newtheorem{lemma}[theorem]{Lemma} 
    \newtheorem{proposition}[theorem]{Proposition} 
    \newtheorem{corollary}[theorem]{Corollary}
    \newtheorem{definition}[theorem]{Definition}
    \newenvironment{keywords}{\bgroup\leftskip 20pt\rightskip 20pt \small\noindent{\bf Keywords:} }%
    {\par\egroup\vskip 0.25ex}
\newtheorem{assumption}{Assumption}
\crefname{assumption}{Assumption}{Assumptions}
\crefname{corollary}{Corollary}{Corollaries}
\crefname{theorem}{Theorem}{Theorems}
\crefname{lemma}{Lemma}{Lemmas}
\renewcommand{\bar}[1]{\mkern 1.5mu\overline{\mkern-1.5mu#1\mkern-1.5mu}\mkern 1.5mu}
\newcommand{\RNum}[1]{\uppercase\expandafter{\romannumeral #1\relax}}
\newcommand{\xindg}{\mathbf{o}^y_{w,s}}
\newcommand{\xindm}{\mathbf{o}^y_{M,s}}
\newcommand{\xindi}{\mathbf{o}^y_{I,s}}
\newcommand{\xrisk}{\mathbf{r}^y_{s}}
\newcommand{\xindmi}{\mathbf{f}^y_{s}}
\newcolumntype{Y}{>{\centering\arraybackslash}X}
\def\raisedotfill{%
  \leavevmode
  \cleaders \hb@xt@ .44em{\hss\raise0.5ex\hbox{.}\hss}\hfill
  \kern\z@}
\begin{document}
    \ifarxivVersion
    \title{{\textbf{Anticipating Gaming to Incentivize Improvement: Guiding Agents in (Fair) Strategic Classification}}}
    \author{Sura Alhanouti\textsuperscript{1}\textsuperscript{3}, Parinaz Naghizadeh\textsuperscript{2}\\           
        \date{}
        }
    \else
    \title{Anticipating Gaming to Incentivize Improvement:
    Guiding Agents in (Fair) Strategic Classification}

    \author{\name Sura Alhanouti \email alhanouti.1@osu.edu \\
           \addr Department of Integrated Systems Engineering, The Ohio State University.
           \AND
           \name Parinaz Naghizadeh \email parinaz@ucsd.edu \\
           \addr Department of Electrical and Computer Engineering, University of California, San Diego.
           }

    \editor{My editor}
    \fi

    \maketitle
    \ifarxivVersion
        
        \def\thefootnote{}\footnotetext{$^1$Department of Integrated Systems Engineering, The Ohio State University, alhanouti.1@osu.edu. $^2$Department of Electrical and Computer Engineering, University of California, San Diego, parinaz@ucsd.edu. $^3$Department of Industrial Engineering, Jordan University of Science \& Technology.}
        \setcounter{footnote}{0}
        \def\thefootnote{\arabic{footnote}}
    \fi

\begin{abstract}
As machine learning algorithms increasingly influence critical decision making in different application areas, understanding human strategic behavior in response to these systems becomes vital. We explore individuals' choice between genuinely improving their qualifications (``improvement'') vs. attempting to deceive the algorithm by manipulating their features (``manipulation'') in response to an algorithmic decision system. We further investigate an algorithm designer's ability to shape these strategic responses, and its fairness implications. Specifically, we formulate these interactions as a Stackelberg game, where a firm deploys a (fair) classifier, and individuals strategically respond. Our model incorporates both different costs and stochastic efficacy for manipulation and improvement. The analysis reveals different potential classes of agent responses, and characterizes optimal classifiers accordingly. Based on these, we identify when and why a (fair) strategic policy can not only prevent manipulation, but also incentivize agents to opt for improvement. Our findings shed light on the intertwined nature of microeconomic and ethical implications of firms' anticipation of strategic behavior when employing ML-driven decision systems. 
\end{abstract}

    \begin{keywords}
        strategic classification, game theory, fair machine learning, manipulation, improvement.
    \end{keywords}

\section{Introduction}\label{sec:intro}

Machine learning (ML) algorithms have come to play a pivotal role in guiding decision making in many application areas, including banking, hiring, social media, and resource allocation. While the use of ML-driven systems can enhance efficiency, it can also drive the humans who are subject to algorithmic decisions to adjust their behavior accordingly. For instance, candidates in school admissions, job recruitment, and research pools tend to systematically shift their self-presentation to appear more analytical when facing algorithmic assessments \citep{goergen2025ai}. In other algorithmic contexts, 
Uber drivers coordinate their behavior to trigger surge pricing \citep{andon2023uber,mohlmann2017hands}, and users change their posting and engagement behavior to influence content curation \citep{cloudresearch2024user,eslami2016first}. Such \emph{strategic} responses motivate a game-theoretic analysis of learning algorithms with humans in the loop.

Existing work on strategic classification have primarily focused on agents who \emph{manipulate} observable features to obtain a favorable prediction without changing their true qualification (e.g., opening credit lines, cosmetically editing resumes, or cheating to obtain better test scores).  
More recent works (e.g., \citep{efthymiou2025incentivizing,xie2024learning,tsirtsis2024optimal,wang2023algorithmic,horowitz2023causal}) have further incorporated the possibility of \emph{improvement}: individuals can invest effort to genuinely improve their qualification state, by, e.g., studying for exams or building financial stability. From the perspective of a decision-making institution, improvement is strictly preferable to manipulation, raising the question of when and how a classifier can incentivize improvement rather than gaming by its strategic users.

This paper examines such incentives in a strategic classification setting in which agents may choose between manipulation and improvement. We allow these actions to differ both in \emph{cost} and in the \emph{stochastic efficacy} of feature changes; the later is an aspect that, unlike prior deterministic treatments of strategic responses, plays a central role in our analysis (see Section~\ref{sec:related_work}). We formulate the interaction as a Stackelberg game: the firm selects a (possibly fairness-constrained) classifier, and agents best-respond strategically. In Proposition~\ref{prop:agents-br-generic}, we show that, under mild assumptions (improvement is costlier but more beneficial in expectation), equilibrium behavior depends on how the firm’s threshold interacts with agents’ cost and boost distributions. Proposition~\ref{prop:firm_impact_comp} then uses these findings to characterize how a strategic firm adjusts its policy relative to a non-strategic one. The main technical challenge is that the regions in which agents choose manipulation or improvement depend endogenously on the firm's decision variable, requiring the firm’s objective to be evaluated over moving integration limits; we address this using the Leibniz integral rule (with modifications for fairness constraints) and the linkage between thresholds and the agents' indifference points to obtain tractable optimality conditions.

Building on this characterization, we find that a strategic firm not only curbs manipulation (as also found in prior work) but also adopts stricter selection thresholds that encourage all agents, whether initially qualified or unqualified, to opt for improvement decisions instead. Specifically, incentivizing improvement by unqualified agents (driving them to improve both their qualification states and observable features), while still leaving the manipulation option open to qualified agents (who do not have sufficiently high observable features to be selected otherwise, but whose acceptance benefits the firm). Together, these findings lead to our first key takeaway: \emph{anticipating agents' strategic responses, which the firm does solely with the goal of optimizing its utility, has the added benefit of guiding agents' behavior in a desirable direction.}

Then, in Section~\ref{sec:fairness_effect}, we study the effects of fairness interventions when both manipulation and improvement are available. We argue that, in attempting to attain fair outcomes, a firm may inadvertently \emph{decrease} improvement incentives for disadvantaged groups. This negative by-product arises even when group disparities are purely \emph{historical} (in qualification rates or feature distributions) rather than \emph{current} differences in access to strategic resources. Our numerical results in Section~\ref{sec:numerical-exp} further show that fairness interventions, when imposed by a non-strategic firm, can unintentionally reduce improvement incentives for disadvantaged groups despite equalizing decision rates. In contrast, a strategic firm can satisfy the same fairness criteria while preserving (and even amplifying) improvement incentives, by jointly adjusting thresholds and leveraging endogenous behavioral responses. These findings point out a third lever for fairness beyond inclusiveness (flipping disadvantaged group predictions to positive) and selectivity (flipping advantaged group predictions to negative) \citep{estornell2023group,keswani2023addressing,hu2019disparate,liu2018delayed,zafar2017fairness}: \emph{anticipating strategic responses} to induce improvement and in turn improving fairness ``for free''. Our findings are supported (and expanded) by numerical experiments and illustrations in Section~\ref{sec:numerical-exp}, conducted on semi-synthetic data constructed from real-world FICO credit score statistics.

To summarize, our findings offer two new insights for algorithm designers. First, anticipating the users' strategic responses to algorithmic systems (which are increasingly observed, as noted in the examples in the beginning of this section) can not only help protect the planned operating points of these systems, but also will ultimately help enhance the system by increasing the number of agents (workers, employees, students, etc.) who select the preferred improvement actions over undesired manipulation actions, in turn leading to an increase in users' qualification/success levels. Second, as developing fair and responsible algorithmic decision systems is becoming of increased interest due to reputation effects and applicable laws, driving agents' improvement actions can be viewed as an additional lever in meeting these fairness goals, allowing firms to attain fairness desiderata without having to lower accuracy significantly. 
\section{Related Work}\label{sec:related_work}
Strategic machine learning has received growing attention in settings where agents alter their behavior in order to obtain more favorable algorithmic outcomes. Most existing work falls into two main categories: (i) models in which agents \emph{manipulate} observable features without changing their underlying qualification \citep[e.g.,][]{cohen2025bayesian, ebrahimi2025double, cohen2024learnability, ahmadi2024strategic, sundaram2023pac,  lechner2023strategic, ahmadi2023fundamental, shao2023strategic,zhang2022fairness, levanon_strategic_2021, ahmadi2021strategic}, and (ii) models where agents may also \emph{improve} their true qualification state through costly investment \citep[e.g.,][]{efthymiou2025incentivizing,xie2024algorithmic,jin2023collab, wang2023algorithmic, horowitz2023causal,ahmadi2022classification,jin_incentive_2022,haghtalab_maximizing_2020, bechavod2022information, zhang2020fair, kleinberg2020classifiers, bechavod2021gaming, harris2021stateful, shavit2020causal,alon2020multiagent}. A growing literature also considers fairness implications in these settings, though primarily under manipulation-only behavior \citep{milli2019social, hu2019disparate, zhang2022fairness, estornell2023group}.
Our work fits within this second category, but departs from prior approaches in two key respects. First, unlike causal or feature-based formulations that distinguish manipulation from improvement through structural relations among features \citep{chen2023learning,horowitz2023causal, wang2023algorithmic,shavit2020causal, bechavod2021gaming,miller_casual_2020}, we explicitly model \emph{heterogeneous costs and stochastic efficacy} for the two action types. While \citet{emelianov2022fairness} also introduce stochastic action effects via normally distributed boosts, we impose no parametric restriction on the boost distribution. This modeling choice enables us to analyze not only whether agents choose to improve, but when improvement is actually \emph{incentivized} by the economic environment.

Our work is also closely related to \citep{xie2024learning}, which model manipulation and improvement within a Stackelberg framework, but assume that only \emph{unqualified} agents behave strategically by imitating qualified agents. In contrast, we allow \emph{all agents} to behave strategically without imitation. (We also differ in some other model elements.) Notably, we show that the effects of anticipating strategic behavior differs across qualified and unqualified individuals. In addition, \citet{xie2024learning} also study how anticipating strategic responses affects fairness across demographic groups and, like us, find that fairness coincides with incentivizing improvement and discouraging manipulation. While we consider similar environments and reach aligned high-level conclusions, we differ in how fairness is modeled. Their analysis evaluates fairness through outcome disparities under a given (potentially unfair) policy, whereas we explicitly model fair decision-making as a constrained optimization problem, deriving policies that satisfy fairness constraints by construction and enable analyses of different fairness interventions that are not accessible under outcome-based fairness evaluation alone.

Finally, we note that there are other forms and dimensions of strategic responses considered in the literature, but that are outside our scope; these include strategic coordination across agents or firms \citep{narang_2022}, strategic hiding \citep{zhang2020predictive}, behavioral biases \citep{ebrahimi2025double}, and randomized policies \citep{geary2025strategic,sundaram2023pac,braverman2020role}.  

An early version of this work appeared in \citep{alhanouti2024could}; the present paper provides a full characterization of the firm's optimal response (Proposition~\ref{prop:firm_impact_comp}), the fairness–incentive tradeoffs it induces (Corollary~\ref{cor:fair-policies}), and extensive empirical validation on semi-synthetic dataset based on real-word FICO credit data (Section~\ref{sec:numerical-exp}). 
\section{Problem Setting and Preliminaries}\label{sec:model}

We consider a Stackelberg game where the firm (algorithm designer) first announces a classifier, following which the agents respond strategically. In this section, we detail the agents' and the firm's characteristics, their actions, and their utilities. Our model is similar to, and extends, those of prior work on strategic classification (e.g., \citet{zhang2020fair,zhang2022fairness}). 

\subsection{The agents}
Consider a population of agents with two types of observable attributes: \emph{a feature/score} $x\in\mathbb{R}_{\geq 0}$ (e.g., a credit score, an exam score) and a \emph{sensitive feature} that divide the population into two demographic groups $S\in \{a,b\}$ (e.g., sex dividing population into Male and Female, or race dividing the population into White and Non-White agents). Let $n_s$ be the fraction of agents in group $s$. 

Each agent has a true (hidden) \emph{label} or \emph{qualification state}, denoted $Y \in \{0,1\}$, with $Y=1$ denoting a qualified agent (e.g., one who can pay off a loan if granted) and $Y=0$ denoting an unqualified agent. Let $\alpha_s:=\mathbb{P}(Y=1| S=s)$ denote the qualification rate in group $s$. In addition, let $G^y_s(x):=\mathbb{P}(X=x| Y=y, S=s)$ denote the probability density function (pdf) of the distribution of the features for individuals with qualification state $y$ from group $s$. We make the following (common) assumption on the feature distributions.

\begin{assumption}\label{as:strict-monotonicity-assumption} 
The feature distributions $G^y_s(x)$ are continuous, and satisfy the strict monotone likelihood ratio property: $\frac{G^1_s(x)}{G^0_s(x)}$ is strictly increasing in $x \in \mathbb{R}_{\geq 0}$. 
\end{assumption}
In words, this entails that as an agent's feature $x$ increases, the likelihood that the agent is qualified increases. 

\subsection{The firm} A firm makes decisions on agents based on their observable features $x$ and their group memberships $s$. The decision is binary, denoted $d \in \{0,1\}$, where $d=1$ represents acceptance and $d=0$ represents rejection. This decision is determined by a group-dependent binary classifier $\pi_s(x) = \mathbb{P}(D=1|X=x, S=s)$. We assume that this is a threshold policy, such that an agent is accepted if and only if  ${x} \geq \theta_s$. \citet{zhang2020fair} show that such threshold policies are optimal under mild assumptions.

\subsection{Agents' strategic actions}
After the threshold policy $\theta_s$ is announced by the firm, agents have the option to behave strategically to improve their chances of receiving favorable outcomes. This is done by choosing one of the strategic actions $w \in \{M, I, N\}$, with $M$ denoting \emph{manipulation}, $I$ denoting \emph{improvement}, and $N$ denoting \emph{doing nothing}. 

Taking these actions may impact the agents' features and/or true labels. We use $X$ and $Y$ to denote the \emph{pre-strategic} feature and true label (before an action $w$ is taken), and $\hat{X}$ and $\hat{Y}$ to denote the post-strategic ones. 
In particular, both qualified and unqualified agents ($y=1$ and $y=0$) can opt to manipulate ($w=M$) by changing their feature $x$ to some new feature $\hat{x}$, without changing their true qualification state ($\hat{y}=y$); e.g., this could represent cheating on an exam, which increases the score without affecting true learning.  Alternatively, both types of agents can choose to invest in improving themselves ($w=I$), which leads to a change in both their feature from $x$ to some $\hat{x}$, as well as their true label to $\hat{y}=1$; e.g., this could represent studying for an exam, which improves both the score and true learning. Lastly, agents who opt to do nothing ($w=N$) maintain their feature and label ($\hat{x}=x$ and $\hat{y}=y$). The firm will only observe the agents' \emph{post-strategic} features $\hat{x}$ when making its decision. We let $\hat{\alpha}_s$ and $\hat{G}_s^y(x)$ denote the {post-strategic} population statistics after these changes have happened as a result of agents' selected strategic actions.  (We analytically characterize these updated statistics as a function of their pre-strategic counterparts in Section~\ref{sec:post-strategic-stats}).

In addition to differing in their impacts on changes in features and/or true labels, the actions differ in two aspects: their cost and their efficacy. First, each action requires exerting effort and comes at a certain group-dependent (constant) cost ${C_{w,s}}\in[0,1)$, $w\in\{M, I, N\}, s\in\{a,b\}$. Such difference in costs across actions and groups is a common assumption in prior works on strategic classification (e.g., \cite{zhang2022fairness, liu2020disparate, jin_incentive_2022}). The element that is new to our model is that we assume all actions lead to a (weak) increase in the feature (i.e., we assume that ${\hat{x}\ge x}$), but that this increase in feature is \emph{stochastic} and different across actions and groups.  
Formally, we assume the probability that the feature $x$ of a qualified/unqualified individual from a group $s$ increases by $b_w$ after opting for action $w$ is distributed according to a \emph{boost distribution} with pdf ${\mathbb{\tau}^y_{w,s}(b)}:=\mathbb{P}(\hat{X}=x+b|X=x, Y=y, W=w, S=s)$. These boost distributions determine the \emph{efficacy} of the action. As an example, they could capture that both cheating and studying can increase an agent's exam score, but that the studying is more effective for increasing scores. Let $\{\underline{b}^y_{w,s}, \bar{b}^y_{w,s}\}$ denote the minimum and maximum boost under action $w$, and $\mathbb{T}^y_{w,s}(b)$ denote the CDF of the boost function. 

We let $C_{N,s}=0$ and $\mathbb{\tau}^y_{N,s}(0)=1$, meaning that the ``do nothing'' action has zero cost and no impact on changing the agent feature. The two remaining actions, $M$ and $I$, can differ in cost and efficacy. We conduct our analysis under the following assumption. 

\begin{assumption}\label{as:cost-efficacy} Improvement is more costly than manipulation (i.e., $C_{I,s}\geq C_{M,s}$), and the improvement boost distribution first-order stochastically dominates (FOSD) that of manipulation (i.e., $\mathbb{T}^y_{M,s}(b)\geq \mathbb{T}^y_{I,s}(b), \forall b$).
\end{assumption}

In words, an action first-order stochastically dominating another means that it ``gets the agent further'', in that it has a higher probability of increasing the agent's feature from $x$ to a feature greater or equal to $\hat{x}=x+b$. Assumption~\ref{as:cost-efficacy} gives rise to the conflict between the two actions: manipulation is cheaper, while improvement is more effective in advancing the agent. (We note that there are three scenarios beyond Assumption~\ref{as:cost-efficacy}: one can be recovered by interchanging the indexes $M$ and $I$ in our results. The remaining two cases are less interesting, as one of the actions would trivially dominate the other.) 

\subsection{Agents' utility} 
In general, each agent's strategic choice among the three available actions (manipulation, improvement, or doing nothing) depends on its budget $B$, the cost of effort $C_{w,s}$, the efficacy of the selected action $\mathbb{\tau}^y_{w,s}$, and the firm's deployed policy $\theta_s$. For simplicity, we assume the same budget $B$ for all agents, and assume this budget is sufficiently high so that all agents can afford any action. Then, the choice among the actions depends on the relative benefit vs. the cost of each action. 

Formally, an agent chooses to incur the cost $C_{w,s}$ of action $w$ if and only if it (sufficiently) increases the probability of being accepted by the firm. For an agent from group $s$ with feature $x$ and true label $y$, the benefit from strategic action $w$ is the increase in the probability of being accepted, given by
\begin{align}
    \mathcal{B}_{w,s}(x,y)&:=\mathbb{P}(D=1|X=x,Y=y,W=w,S=s)-\mathbb{P}(D=1|X=x,Y=y,W=N,S=s).
    \label{eq:agent-benefit}
\end{align} 
Note that the efficacy of the selected action $\mathbb{\tau}^y_{w,s}$, and the firm's deployed policy $\theta_s$, will affect this benefit. The agent's overall utility is given by $u_s(x,y,w):=\mathcal{B}_{w,s}(x,y) - C_{w,s}$. 
 
\subsection{The firm's utility} The firm's goal is to find an optimal policy that maximizes its expected utility by correctly classifying the agents. The firm receives benefit $u_+$ from accepting qualified individuals, and incurs penalty $u_-$ from accepting unqualified individuals. The firm's goal may alternatively be finding the \emph{fair} optimal policy by also imposing a fairness constraint $\mathcal{C}^f$ on its decision problem. 
 
Formally, let $U(\theta_a, \theta_b)$ represent the firm's total utility given the decision thresholds $\theta_a$ and $\theta_b$ for each group. Then, the firm's expected utility is: 
 \begin{align}
    \mathbb{E}[U(\theta_a, \theta_b)] &= \sum_{s\in\{a,b\}} n_s [\mathbb{P}(D=1,Y=1|S=s) u_+ - \mathbb{P}(D=1,Y=0|S=s) u_-] \notag\\  
    &=\sum_{s\in\{a,b\}} n_s \int_{\theta_s}^\infty   [ G^1_s(x)\alpha^{}_s u_{+} - G^0_s(x)(1-\alpha^{}_s) u_- ] \mathrm{d}x~.
     \label{eq:expected-utility}
 \end{align}
 
 The firm's (fair) optimization problem is to choose the decision thresholds $\theta_a$ and $\theta_b$ as follows: 
 \begin{align}
     \max_{\theta_{a}, \theta_{b}}  \quad \mathbb{E}[U(\theta_a, \theta_b)]~, \qquad
    \textbf{s.t.} \quad \mathcal{C}^{f}_{a}(\theta_{a})=\mathcal{C}^{f}_{b}(\theta_{b}).
    \label{eq:fair-expected-utility}
 \end{align}

The fairness constraint here, donated by $f$, will be considered to be one of the two more commonly studied notions of fairness \citep{mehrabi2021survey}: 
\begin{itemize}
    \item Equality of Opportunity (EOP): 
    $\mathcal{C}^{EOP}_{s} = \int_{\theta_s}^{\infty} {G}_s^{1}(x)\mathrm{d}x$. This will equalize the true positive rates between groups $a$ and $b$. 
    \item Demographic Parity (DP): $\mathcal{C}^{DP}_{s} = \int_{\theta_s}^{\infty} ({G}_s^{1}(x) \alpha_s + {G}_s^{0}(x) (1-\alpha_s) )\mathrm{d}x$. This will equalize selection (acceptance) rates between groups $a$ and $b$. 
\end{itemize}

When agents are strategic and this is known to the firm, the agents' statistics in \eqref{eq:expected-utility} will be replaced by the post-strategic values $\hat{\alpha}_s$ and $\hat{G}_s^y(x)$, as characterized shortly in Section~\ref{sec:agents-behavior}. 

\section{Agents' Strategic Behavior}\label{sec:agents-behavior}

We begin by characterizing agents' best responses to a firm's given classifier. 
We show that for any given qualification state $y$, group $s$, and corresponding threshold $\theta_s$, the feature space $x$ can be partitioned into disjoint regions determining the agents' best-response, with the boundaries of these regions determined by the points where agents become indifferent between pairs of actions. 

Specifically, we first define a set of \emph{indifference features} based on the cost-efficacy trade-off of the actions available to the agents. 

\begin{definition}\label{def:indiff-features}
Given efficacy CDFs $\mathbb{T}^y_{w,s}$, with inverse CDFs $(\mathbb{T}^y_{w,s})^{-1}$, and costs $C_{w,s}$, define:
\begin{itemize}
    \item \textbf{Opt in features:} $\xindg = \max\{0, \theta_s- (\mathbb{T}^y_{w,s})^{-1}(1-C_{w,s})\}$, for $w\in\{M, I\}$. 
    \item \textbf{Flip decision feature:} $\xindmi\in [\theta-\bar{b}_M, \theta-\underline{b}_I]$ satisfying $\mathbb{T}^y_{M,s}(\theta_s-\xindmi) - \mathbb{T}^y_{I,s}(\theta_s-\xindmi) = C_{I,s}-C_{M,s}$,
    if a solution exists; zero otherwise.
    \item \textbf{Risk taker feature:} $\xrisk := \max\{0, \theta_s-(\mathbb{T}^y_{M,s})^{-1}(C_{I,s}-C_{M,s})\}$. 
\end{itemize}
\end{definition}
Intuitively, and as shown formally in the proof of Proposition~\ref{prop:agents-br-generic}, these features can be interpreted as follows: The \emph{opt in features} determine the first feature at which the agents benefit from opting for an action $M$ or $I$ as opposed to doing nothing $N$; the \emph{flip decision feature} is the feature at which the cost-efficacy trade-off of the $M$ and $I$ actions are equalized so that agents choices will flip between these decisions at this feature; and the \emph{risk taker feature} is the first feature at which an agent opts for an uncertain shot at getting a favorable classification by choosing $M$ over the certain admission attainable from $I$, given $M$'s relatively lower cost. 

Using the above, we characterize the agents' optimal best responses to a given decision threshold $\theta_s$. 

\begin{proposition}\label{prop:agents-br-generic} 
If $\xindmi$ is unique,
the agents' optimal response $w^*_{s}(x,y)=\arg\max_{w\in\{M,I,N\}} u_s(x,y,w)$ to a given threshold $\theta_s$ will be one of the three types outlined in Table~\ref{table:prop-agent-br-generic}. 
\begin{table}[thpb]
    \begin{threeparttable}
        \caption{Agents' best responses (Proposition \ref{prop:agents-br-generic}).} 
        \label{table:prop-agent-br-generic}
        \centering
        \begin{tabular}{||p{1.5cm}|p{4 cm}|p{9 cm}||}
            \hline
            \textbf{Type} & \textbf{Condition} & \textbf{Range : $w_s^*(x,y)$} \\
            \hline
            \textbf{Type 1} & $\xindmi \leq \xindi \leq \xindm \leq \xrisk$ \tnote{*} 
            & $[0, \xindi): N$, $[\xindi, \xrisk):I$, 
                $[\xrisk, \theta_s) : M$, $[\theta_s, \infty): N$ \tnote{\dag} \\
            \hline
            \textbf{Type 2} & $\xindm\leq\xindi\leq\xindmi$ &   $[0, \xindm): N$, $[\xindm, \xindmi): M$, $[\xindmi, \xrisk): I$, $[\xrisk, \theta_s): M$, $[\theta_s, \infty): N$ \\ \hline
            \textbf{Type 3} & $\xindmi\leq \xindm \leq \xindi$ & 
                $[0, \xindm): N$, $[\xindm, \theta_s): M$, $[\theta_s, \infty): N$ \\
            \hline
        \end{tabular}
        \begin{tablenotes}
            \item[*] Or $\xindi \leq \xindmi \leq \xindm \leq \xrisk$.
            \item[\dag] $[0, \xindi): N$, $[\xindi, \xindmi):I$, $[\xindmi, \theta_s):M$, $[\theta_s, \infty): N$. 
        \end{tablenotes}
    \end{threeparttable}
\end{table}
\end{proposition}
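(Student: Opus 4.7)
The plan is to reduce the maximization to a one-dimensional analysis in $x$, with $(y,s)$ fixed and suppressed in the notation, then explicitly identify the three utility functions $u_N,u_M,u_I$, verify that the features of Definition~\ref{def:indiff-features} are exactly their pairwise crossings, and finally read off the upper envelope $\max_w u_w$ region by region.

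First, since $u_s(\cdot,y,N)=0$ identically and every boost is non-negative, for $x\ge\theta_s$ the agent is already accepted and any $w\neq N$ incurs a strictly positive cost with no extra benefit; hence $w^*_s(x,y)=N$ on $[\theta_s,\infty)$, accounting for the last sub-interval in every row of Table~\ref{table:prop-agent-br-generic}. For $x<\theta_s$, the post-boost acceptance probability equals $\Pr(x+b_w\ge\theta_s)=1-\mathbb{T}^y_{w,s}(\theta_s-x)$, so
\begin{align*}
    u_s(x,y,M)&=1-\mathbb{T}^y_{M,s}(\theta_s-x)-C_{M,s},\\
    u_s(x,y,I)&=1-\mathbb{T}^y_{I,s}(\theta_s-x)-C_{I,s},
\end{align*}
both non-decreasing in $x$ by monotonicity of the CDFs. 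Direct algebra then identifies the pairwise crossings: (i)~$u_s(\cdot,y,w)=0$ at $x=\xindg$ for $w\in\{M,I\}$; (ii)~$u_s(\cdot,y,M)=u_s(\cdot,y,I)$ gives the defining equation of $\xindmi$ on the ``overlapping support'' $\theta_s-x\in[\underline{b}^y_{I,s},\bar{b}^y_{M,s}]$, and collapses to $\mathbb{T}^y_{M,s}(\theta_s-x)=C_{I,s}-C_{M,s}$ in the improvement-certain regime $\theta_s-x<\underline{b}^y_{I,s}$ (where $u_s(\cdot,y,I)\equiv 1-C_{I,s}$ is constant), which is exactly $\xrisk$.

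Next I would analyze the sign pattern of $\Delta(x):=u_M(x)-u_I(x)$ on $[0,\theta_s)$. By Assumption~\ref{as:cost-efficacy} (FOSD together with $C_{I,s}\ge C_{M,s}$), $\Delta$ is non-negative both for $x$ near $0$ (where both CDFs saturate at $1$ and only the cost gap $C_{I,s}-C_{M,s}\ge 0$ remains) and for $x$ near $\theta_s$ (where both boost supports comfortably cover $\theta_s-x$, so the cheaper $M$ wins). The assumed uniqueness of $\xindmi$ then forces the sign pattern on $[0,\theta_s)$ to be either ``$+$'' throughout (no $M$-vs-$I$ flip; Type~3), or ``$+,-,+$'' with flips exactly at $\xindmi$ and $\xrisk$ (Types~1 and~2). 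The three types then emerge by a left-to-right sweep of $\max\{u_N,u_M,u_I\}$ driven by the relative order of $\xindm,\xindi,\xindmi,\xrisk$: if $I$ is the first of $\{M,I\}$ to become preferred to $N$ (\ie\ $\xindi\le\xindm$), one obtains Type~1, with the footnote sub-case corresponding to $\xindmi$ sitting above $\xindi$ and simply relocating the $I$-to-$M$ switch from $\xrisk$ to $\xindmi$; if $M$ becomes preferred to $N$ first but $\Delta$ still dips negative after $\xindi$, one obtains Type~2 with its three alternating $M,I,M$ segments; and if $\Delta$ stays non-negative throughout $[\xindm,\theta_s)$, the agent uses $M$ on $[\xindm,\theta_s)$, giving Type~3. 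In each sub-interval, the winning action is read off by combining the $M$-vs-$I$ sign of $\Delta$ with the sign of the better of $u_M,u_I$ against $u_N=0$ at the relevant $\xindg$.

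The main obstacle is organizing the case analysis cleanly: the four indifference features admit many a~priori orderings, but FOSD, the cost ordering, and uniqueness of $\xindmi$ jointly rule out all but the three listed configurations. I would therefore first pin down the sign pattern of $\Delta$ from the uniqueness hypothesis, and only then perform the envelope sweep, so that each sub-interval boundary corresponds either to a zero of one of $u_M,u_I$ (giving some $\xindg$) or to a sign change of $\Delta$ (giving $\xindmi$ or $\xrisk$), after which the tabulated regions follow mechanically.
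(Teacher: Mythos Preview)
Your proposal is correct and follows essentially the same route as the paper: reduce to $x<\theta_s$, write $u_M,u_I$ in terms of the boost CDFs, identify the pairwise crossings with the indifference features of Definition~\ref{def:indiff-features}, and read off the upper envelope. The only organizational difference is that the paper packages the structural observation as a separate ordering lemma (showing $\xrisk\ge\{\xindmi,\xindm\}$ and that, under uniqueness, $\xindmi$ lies either below $\min\{\xindm,\xindi\}$ or above $\max\{\xindm,\xindi\}$), whereas you obtain the same restriction implicitly via the sign pattern of $\Delta=u_M-u_I$; both viewpoints yield the identical case split.
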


A detailed proof is given in Appendix~\ref{app:proof-agent-best-response}. These three types of best-responses are illustrated in Figure~\ref{fig:prop_1}. In particular, note that in all types of equilibrium, agents who are close to the decision threshold opt to be risk takers, choosing uncertain but cheap manipulation over certain but costly improvement. Interestingly, we posit that this may present a parallel with gaming behavior observed in education settings: students committing academic dishonesty typically have higher GPAs \citep{oedb-cheaters-gpa}.

\begin{figure*}[thpb!]
  \centering
  \begin{subfigure}[b]{0.3\textwidth}
    \centering
    \includegraphics[width=\textwidth]{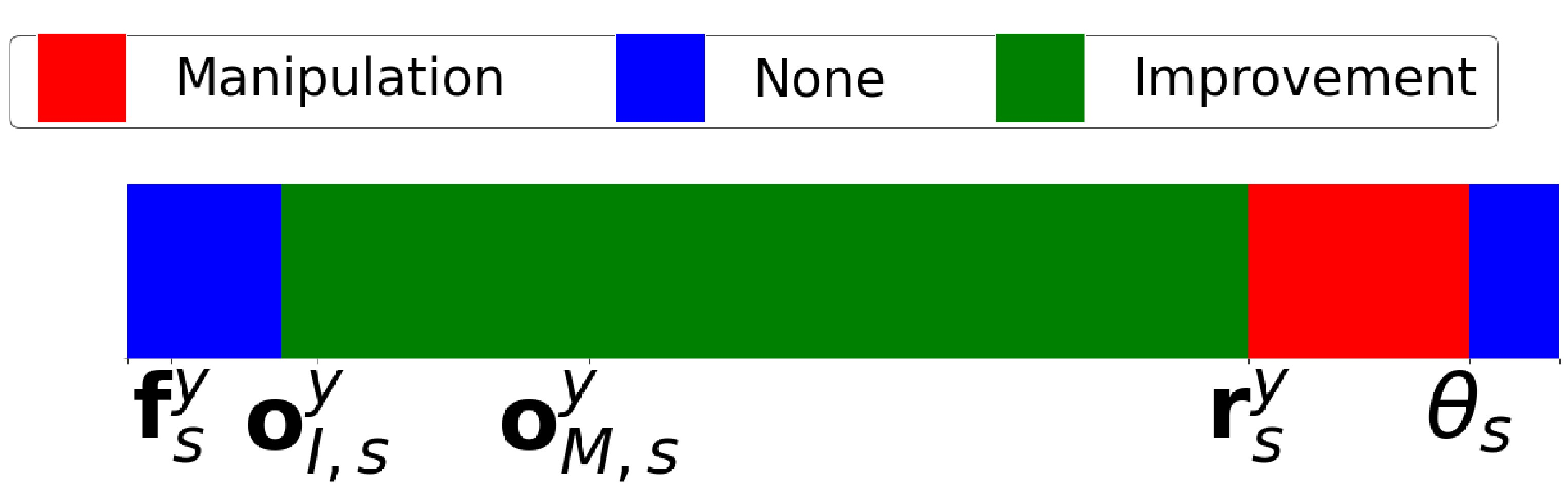}
    \caption{Type 1.}
    \label{fig:type1-general}
  \end{subfigure}
  \hspace{0.2in}
  \begin{subfigure}[b]{0.3\textwidth}
    \centering
    \includegraphics[width=\textwidth]{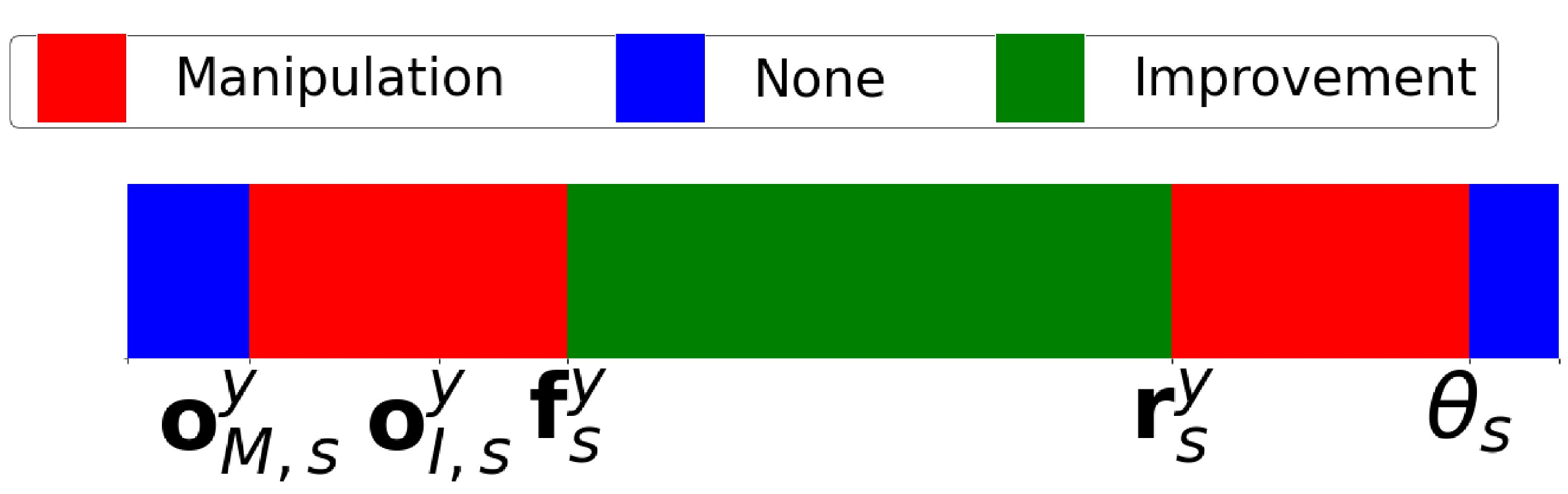}
    \caption{Type 2.}
    \label{fig:type2-general}
  \end{subfigure}
  \hspace{0.2in}
  \begin{subfigure}[b]{0.3\textwidth}
    \centering
    \includegraphics[width=\textwidth]{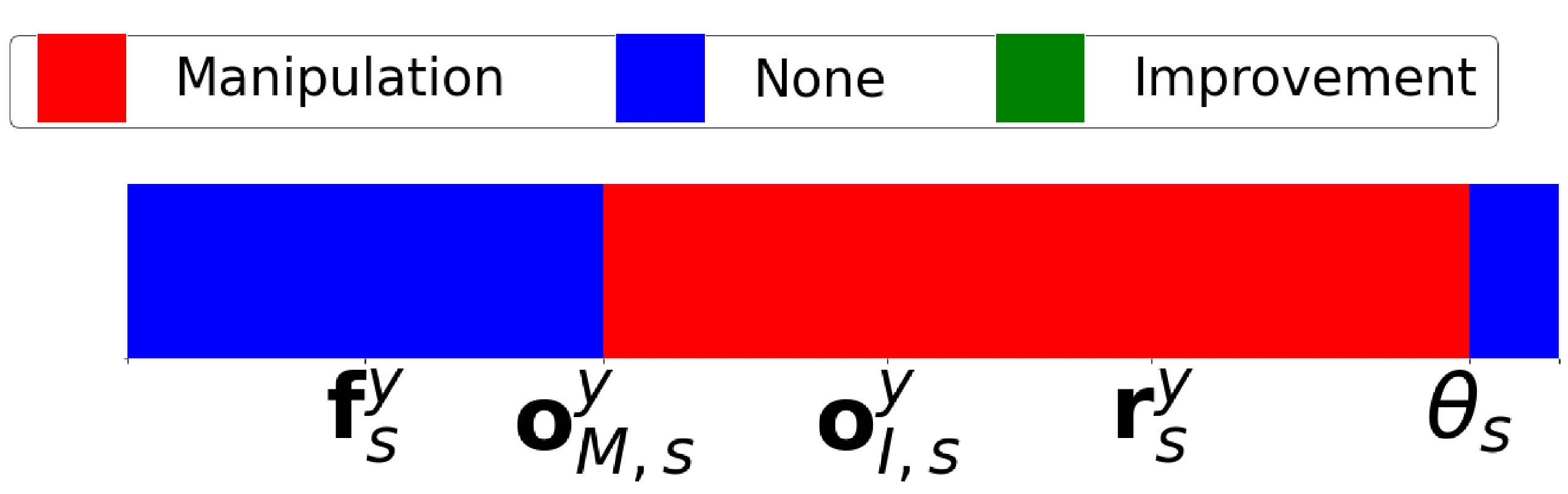}
    \caption{Type 3.}
    \label{fig:type3-general}
  \end{subfigure}
  \caption{Agent best-responses identified in Proposition~\ref{prop:agents-br-generic}.}
  \vspace{-0.1in}
  \label{fig:prop_1}
\end{figure*}

We also note that the type of equilibrium is determined solely by the relative cost and efficacy of the manipulation and improvement actions; the choice of the classifier $\theta_s$ changes the indifference points where agents opt for each action, but not the \emph{type} of equilibrium. 
Specifically, when the improvement cost is \emph{low}, the best response is of \emph{Type 1}, as agents find it beneficial to opt for improvement (higher efficacy, and relatively low cost) before manipulation becomes beneficial (which is near the threshold where risk of negative outcome is low enough). Moreover, when the improvement cost is \emph{Moderate}, the best response is of \emph{Type 2}, as now improvement is too expensive for agents who are far from the threshold, while still low enough that agents opt for improvement over manipulation when both actions are uncertain. Lastly, when the improvement cost is \emph{high}, the best response is of \emph{Type 3}, as now all agents choose the manipulation (once beneficial) over improvement, due to the considerably high improvement cost.
We illustrate these intuitive interpretations more formally using uniform boost distributions in Appendix~\ref{sec:best-response-uniform}. 

\subsection{Post-strategic population statistics}\label{sec:post-strategic-stats}

We next characterize the \emph{post-strategic} population statistics. In particular, as noted earlier, the strategic actions $w \in \{M,I\}$ change the agent's true qualification and/or feature. Denote the post-strategic qualification rates by $\hat{\alpha}_s=\mathbb{P}(\hat{Y}=1| S=s)$, and the post-strategic feature distribution by ${\hat{G}^{y}_s(x)}:=\mathbb{P}(\hat{X}={x}| \hat{Y}={y}, S=s)$. Let $\mathbb{I}^y_{s, (i)}:=\{x| Y=y, S=s, w=I\}$ be the set of agents from group $s$ with label $y$ who opt for improvement under a given threshold policy $\theta_s$, in a type \emph{(i)} best response (as identified in Proposition \ref{prop:agents-br-generic}). Define $\mathbb{M}^y_{s, (i)}$ and $\mathbb{N}^y_{s, (i)}$ similarly for the set of agents who opt for manipulation and for doing nothing under equilibrium type \emph{(i)}. 
The following lemma presents the expressions for the post-strategic population statistics in terms of the pre-strategic population statistics and the best-response regions. 
 
\begin{lemma}\label{lemma:post-strategic-stats}
    The post-strategic qualification rate is given by
    \begin{align}
    \hat{\alpha}_s = \alpha_s + (1-\alpha_s) \int_{x\in\mathbb{I}^0_{s, (i)}} G^0_s(x)\mathrm{d}x
        \label{eq:post-strategic-alpha}
    \end{align}
    The post-strategic feature distributions are given by:
\begin{align}
\hat{G}^0_s(x) &= \frac{1-\alpha_s}{1-\hat{\alpha}_s} \Big(\mathds{1}(x\in\mathbb{N}^0_{s, (i)})G^0_s(x) + (G^0_s*\tau^0_{M,s})(x)\Big),\label{eq:post-strategic-G-0}\\
    \hat{G}^1_s(x) &= \frac{\alpha_s}{\hat{\alpha}_s} \Big(\mathds{1}(x\in\mathbb{N}^1_{s, (i)})G^1_s(x) + (G^1_s*\tau^1_{M,s})(x)+ (G^1_s*\tau^1_{I,s})(x)\Big)+ \frac{1-\alpha_s}{\hat{\alpha}_s}(G^0_s*\tau^0_{I,s})(x),
            \label{eq:post-strategic-G-1}
    \end{align}
where $(G^y_s*\tau^y_{w,s})(x):= \int_{z\in \mathbb{W}^y_{s, (i)}}G^y_s(z)\tau^y_{w,s}(x-z)\mathrm{d}z$ is the convolution of the feature distribution and boost function restricted to the action $w$ region. 
\end{lemma}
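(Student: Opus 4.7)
The plan is to derive each of the three identities by applying Bayes' rule together with a careful decomposition of the post-strategic joint distribution $\mathbb{P}(\hat{X}=x, \hat{Y}=y \mid S=s)$ over the pre-strategic label $Y$ and the action $W$ chosen in equilibrium. A key observation enabled by Proposition~\ref{prop:agents-br-generic} is that, once $\theta_s$ is fixed, $w^*_s(x,y)$ is a deterministic function of the pre-strategic feature and label, so the event $\{W=w\}$ restricted to $\{Y=y, S=s\}$ coincides with $\{X \in \mathbb{W}^y_{s,(i)}\}$ for $\mathbb{W}\in\{\mathbb{N},\mathbb{M},\mathbb{I}\}$. This pushes all the randomness into two conditionally independent sources: the pre-strategic feature $X\sim G^y_s$, and, for agents opting for a non-trivial action, a boost $b\sim\tau^y_{w,s}$ whose distribution, by definition of $\tau^y_{w,s}$, does not depend on the realized $x$.

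For \eqref{eq:post-strategic-alpha}, I observe that $\hat{Y}=1$ occurs iff either $Y=1$ or $Y=0$ with $W=I$, since manipulation and doing nothing preserve the label. Conditioning on $S=s$ and marginalizing over $X$ yields $\mathbb{P}(\hat{Y}=1\mid S=s)=\alpha_s+(1-\alpha_s)\mathbb{P}(X\in\mathbb{I}^0_{s,(i)}\mid Y=0,S=s)$, and the last probability equals $\int_{\mathbb{I}^0_{s,(i)}}G^0_s(x)\,\mathrm{d}x$. For \eqref{eq:post-strategic-G-0} and \eqref{eq:post-strategic-G-1}, I apply the Bayes identity $\hat{G}^y_s(x)=\mathbb{P}(\hat{X}=x,\hat{Y}=y\mid S=s)/\mathbb{P}(\hat{Y}=y\mid S=s)$ and split the joint density by source $(Y,W)$. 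For the unqualified branch $\hat{Y}=0$, only $(Y=0,W=N)$ and $(Y=0,W=M)$ contribute, since improvement flips the label to $1$. The $N$ piece equals $(1-\alpha_s)\mathds{1}(x\in\mathbb{N}^0_{s,(i)})G^0_s(x)$ because $\hat{X}=X$ under doing nothing, while the $M$ piece becomes $(1-\alpha_s)(G^0_s*\tau^0_{M,s})(x)$ after conditioning on $X=z\in\mathbb{M}^0_{s,(i)}$ and integrating the independent boost kernel $\tau^0_{M,s}(x-z)$; dividing by $1-\hat{\alpha}_s$ yields \eqref{eq:post-strategic-G-0}. The qualified branch proceeds analogously, now with four disjoint sources: $(Y=1,W=N)$ contributing $\alpha_s\mathds{1}(x\in\mathbb{N}^1_{s,(i)})G^1_s(x)$, $(Y=1,W=M)$ and $(Y=1,W=I)$ contributing $\alpha_s(G^1_s*\tau^1_{M,s})(x)$ and $\alpha_s(G^1_s*\tau^1_{I,s})(x)$ respectively, and the label-flipping source $(Y=0,W=I)$ contributing $(1-\alpha_s)(G^0_s*\tau^0_{I,s})(x)$. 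Summing these and dividing by $\hat{\alpha}_s$ gives \eqref{eq:post-strategic-G-1}.

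The only real difficulty is bookkeeping rather than any deep technical obstacle: one must (a) correctly route improving $Y=0$ agents into the $\hat{Y}=1$ branch rather than double-counting them in the $\hat{Y}=0$ branch, and (b) ensure that the convolutions are interpreted as \emph{region-restricted} per the statement's definition, since only pre-strategic features inside $\mathbb{W}^y_{s,(i)}$ actually draw a boost from $\tau^y_{w,s}$. Both are handled automatically by the indicator structure already embedded in the lemma's convolution notation, and the continuity of $G^y_s$ from Assumption~\ref{as:strict-monotonicity-assumption} ensures that the boundary points of $\mathbb{W}^y_{s,(i)}$ contribute zero mass, so the partition into $\mathbb{N},\mathbb{M},\mathbb{I}$ is exact and the resulting integrals coincide with those defining the restricted convolutions.
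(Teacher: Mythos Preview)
Your proposal is correct and follows essentially the same approach as the paper's proof: both decompose the post-strategic joint distribution by the pre-strategic pair $(Y,W)$, use that $W$ is determined by $X$ on the regions $\mathbb{N}^y_{s,(i)},\mathbb{M}^y_{s,(i)},\mathbb{I}^y_{s,(i)}$, and then integrate the boost kernel over the relevant region to obtain the restricted convolutions. The paper carries this out via an explicit change of variable $z=x-b$ in the boost integrals, whereas you phrase it through Bayes' rule and conditional independence of the boost, but the argument is the same.
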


A detailed proof appears in Appendix~\ref{app:proof-post-strategic}. Intuitively, \eqref{eq:post-strategic-alpha} captures the increase in qualification rates due to improvement actions taken by previously unqualified individuals (those in $\mathbb{I}^0_s$). The new feature distribution of unqualified individuals in \eqref{eq:post-strategic-G-0} consists of unqualified individuals who do nothing, and unqualified individuals who choose manipulation actions (with their features shifted by the boost distribution). The new feature distribution of qualified individuals in \eqref{eq:post-strategic-G-1}, on the other hand, consists of all qualified individuals (with appropriate feature shifts for those who opt for manipulation or improvement), as well as previously unqualified individuals who are now qualified due to opting for improvement actions. 	
\section{The Firm's Optimal Policy}\label{sec:firm}

Using the findings of Section~\ref{sec:agents-behavior}, we can now determine the firm's optimal choice of decision thresholds.  

We start with a firm who does not account for agents' strategic behavior and does not implement any fairness constraints; we refer to this as the \emph{unfair non-strategic} firm. The lemma below characterizes such firm's optimal decision thresholds as a function of the population statistics. 

\begin{lemma}\label{lemma:nonstrategic-unfair-theta}
    The unfair non-strategic firm's optimal decision thresholds $\theta_s^{U} $ satisfies $\frac{ G^1_s(\theta_s^{U})}{ G^0_s(\theta_s^{U})} = \frac{u_-(1-\alpha_s)}{u_+\alpha_s}$. 
\end{lemma}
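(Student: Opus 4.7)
The plan is to exploit the fact that, absent a fairness constraint, the firm's objective in \eqref{eq:expected-utility} decouples across groups: since the integrands corresponding to $s=a$ and $s=b$ only involve their own threshold $\theta_s$, maximizing $\mathbb{E}[U(\theta_a,\theta_b)]$ reduces to two independent one-dimensional problems, one per group. So I would fix a group $s$ and write
\[
\mathbb{E}[U_s(\theta_s)] = n_s \int_{\theta_s}^{\infty} h_s(x)\,\mathrm{d}x, \qquad h_s(x) := G^1_s(x)\alpha_s u_+ - G^0_s(x)(1-\alpha_s) u_-,
\]
and apply the Leibniz rule. Because the integrand $h_s$ is independent of $\theta_s$ and the upper limit is constant, the derivative is simply $-n_s h_s(\theta_s)$.

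Setting the first-order condition $h_s(\theta_s^U)=0$ and rearranging immediately yields the stated expression $G^1_s(\theta_s^U)/G^0_s(\theta_s^U) = u_-(1-\alpha_s)/(u_+\alpha_s)$, assuming an interior optimum with $G^0_s(\theta_s^U)>0$ (which follows from continuity of the feature densities in Assumption~\ref{as:strict-monotonicity-assumption}). The step I expect to require the most care is arguing that this first-order stationary point is indeed the global maximizer and that no boundary optimum (e.g., $\theta_s=0$ or $\theta_s\to\infty$) is preferable.

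For the second-order/global argument, I would rely directly on Assumption~\ref{as:strict-monotonicity-assumption}: the strict MLRP makes the likelihood ratio $G^1_s(x)/G^0_s(x)$ strictly increasing in $x$, so $h_s(x)$ flips sign exactly once, from negative to positive, at the unique crossing $\theta_s^U$. Consequently, $\mathbb{E}[U_s(\theta_s)]$ is strictly decreasing on $[0,\theta_s^U]$ and strictly increasing on $[\theta_s^U,\infty)$... wait, this needs a sign check: since the derivative is $-n_s h_s(\theta_s)$, the objective is strictly increasing on $\{\theta_s: h_s(\theta_s)<0\}=[0,\theta_s^U)$ and strictly decreasing on $(\theta_s^U,\infty)$, so $\theta_s^U$ is the unique global maximizer. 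The boundary cases are ruled out because letting $\theta_s\to\infty$ sends the objective to zero while the value at $\theta_s^U$ is strictly positive (the integral of a nonnegative integrand that is not almost-everywhere zero), and $\theta_s=0$ is dominated by $\theta_s^U$ whenever the crossing exists.

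Finally, I would briefly note the edge case in which no such crossing exists, i.e., when $G^1_s/G^0_s$ lies uniformly above or below the ratio $u_-(1-\alpha_s)/(u_+\alpha_s)$ on $\mathbb{R}_{\geq 0}$; then the optimum degenerates to $\theta_s^U=0$ (accept everyone) or $\theta_s^U=\infty$ (reject everyone), and the stated first-order characterization is vacuous. Since this case is not the focus of the paper's subsequent analysis, a one-line remark that it can be excluded under the standing assumptions on the feature supports should suffice.
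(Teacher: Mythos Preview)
Your proposal is correct and follows essentially the same approach as the paper: both differentiate the per-group objective and set the first-order condition to obtain the likelihood-ratio equation. Your global-optimality argument via the single-crossing property implied by strict MLRP is actually cleaner and more precise than the paper's brief second-derivative remark (which somewhat loosely invokes ``convexity'' where quasi-concavity is what is needed), and your treatment of the degenerate boundary cases is a welcome addition.
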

This is similar to results obtained in prior work (e.g., \citep{zhang2022fairness,liao2022social}). The proof is provided in Appendix~\ref{app:proof-lemma-nonstrategic-unfair-theta} for completeness.

If the firm is cognizant of the agents' strategic behavior, the \emph{unfair strategic} firm's optimal thresholds $\hat{\theta}_s^{U} $ can be obtained by finding the utility maximizer when the firm's utility \eqref{eq:expected-utility} is evaluated on the population post-strategic statistics characterized in Lemma \ref{lemma:post-strategic-stats}. Specifically, the strategic firm's utility can be written as: 
\begin{align}\label{eq:strategic-utility-with-post-statistics}
    \hspace{-0.2in}\mathbb{E}[\hat{U}({\theta}_a, {\theta}_b)]  &:= \mathbb{E}[U({\theta}_a, {\theta}_b)] \notag\\
    &\hspace{-0.6in}+\sum_{s\in\{a,b\}} n_s ( u_+\alpha_s \boldsymbol{\Phi}_{s,(i)}^1({\theta_s}) 
    + u_+ (1-\alpha_s) \boldsymbol{\Phi}_{s,(i)}^0({\theta_s}) +u_+\alpha_s \boldsymbol{\Psi}_{s,(i)}^1({\theta_s}) - u_- (1-\alpha_s) \boldsymbol{\Psi}_{s,(i)}^0({\theta_s})), 
\end{align} 
where the terms $\boldsymbol{\Phi}_{s,(i)}^y({\theta_s})$ (resp. $\boldsymbol{\Psi}_{s,(i)}^y({\theta_s})$) capture the expected value of the change in the firm's utility relative to the non-strategic setting due to label $y$ agents from group $s$ who opt for improvement (resp. manipulation) when facing classifier $\theta_s$ under type \emph{(i)} best-responses, and are given by
\begin{align}\label{eq:phi_psi_definition}
    \boldsymbol{\Phi}_{s,(i)}^y({\theta_s}) &=  
    \int_{z \in \mathbb{I}^y_{s,(i)}} G^y_s(z)\left(1- \mathbb{T}^y_{I,s}({\theta_s} - z)\right) \mathrm{d}z, \\ \notag 
    \boldsymbol{\Psi}_{s,(i)}^y({\theta_s}) &= \int_{z \in \mathbb{M}^y_{s,(i)}} G^y_s(z)\left(1- \mathbb{T}^y_{M,s}({\theta_s} - z)\right) \, \mathrm{d}z.
\end{align}

Note that the first three of these terms \emph{increase} the firm's utility over the non-strategic utility: these are all agents who improved and got accepted, and the qualified agents who manipulated their features and got accepted. The last term, with the negative sign, \emph{decreases} the firm's utility, as these are unqualified agents who pass the threshold $\theta_s$ through manipulation. 

Using the above expressions, we can characterize the unfair strategic firm's optimal policy. 

\begin{lemma}\label{lemma:strategic_unfair_theta}
    The unfair strategic firm's optimal decision thresholds $\hat{\theta}_s^{U} $ for best-response type (i)
    satisfy $ \linebreak[4] \frac{G_s^1(\hat{\theta}^{U}_s)-\boldsymbol{\Phi_{s,(i)}^{'1}}(\hat{\theta}^{U}_s) - \boldsymbol{\Psi_{s,(i)}^{'1}}(\hat{\theta}^{U}_s) - \frac{(1-\alpha_s)}{\alpha_s} \boldsymbol{\Phi_{s,(i)}^{'0}}(\hat{\theta}^{U}_s)}{G_s^0(\hat{\theta}^{U}_s) - \boldsymbol{\Psi_{s,(i)}^{'0}}(\hat{\theta}^{U}_s)} = \frac{u_-(1-\alpha_s)}{u_+\alpha_s} $, where $\boldsymbol{\Phi_{s,(i)}^{'y}} $ (resp. $\boldsymbol{\Psi_{s,(i)}^{'y}} $) is the first order derivative of $\boldsymbol{\Phi_{s,(i)}^{y}} $ (resp. $\boldsymbol{\Psi_{s,(i)}^{y}} $) with respect to ${\theta_s} $, and with the specific forms under each best-response type given in Table \ref{table:equilibria_ratio}.

\begin{table*}[ht]
    \begin{threeparttable}
    \centering
    \caption{Optimal decision thresholds in Lemma~\ref{lemma:strategic_unfair_theta}.}\label{table:equilibria_ratio}
        \begin{tabular}{||p{0.05\linewidth}|p{0.3\linewidth}|p{0.36\linewidth}|p{0.16\linewidth}||}
        \hline
         Type  & \textbf{Type 1} & \textbf{Type 2} & \textbf{Type 3} \\\hline
        \multirow{8}{*}{\rotatebox[origin=c]{90}{Numerator}}  & $C_{I,s}G^1_s(\mathbf{o}^1_{I,s})  
        + (G^1_s*\tau^1_{I,s})(\hat{\theta}^U_s) - (C_{I,s}-C_{M,s})G^1_s(\mathbf{r}^1_{s}) + (G^1_s*\tau^1_{M,s})(\hat{\theta}^U_s)
        + \frac{(1-\alpha_s)}{\alpha_s} \Big(C_{I,s}G^0_s(\mathbf{o}^0_{I,s}) + (G^0_s*\tau^0_{I,s})(\hat{\theta}^U_s) - G^0_s(\mathbf{r}^0_{s})\Big)$ & $ C_{M,s}G^1_s(\mathbf{o}^1_{M,s}) + (G^1_s*\tau^1_{I,s})(\hat{\theta}^U_s)   + (G^1_s*\tau^1_{M,s})(\hat{\theta}^U_s) 
        + G^1_s(\mathbf{f}^1_{s})(C_{I,s}-C_{M,s}) - G^1_s(\mathbf{r}^1_{s}) (C_{I,s}-C_{M,s})
        + \frac{(1-\alpha_s)}{\alpha_s} \Big(G^0_s(\mathbf{f}^0_{s})  (1-\mathbb{T}^0_{I,s}(\hat{\theta}_s^{U}-\mathbf{f}^0_{s})) - G^0_s(\mathbf{r}^0_{s}) + (G^0_s*\tau^0_{I,s})(\hat{\theta}^U_s) 
        \Big) $ & $C_{M,s}G^1_s(\mathbf{o}^1_{M,s}) + (G^1_s*\tau^1_{M,s})(\hat{\theta}^U_s)$ 
        \\
        \hline
        \multirow{3}{*}{\rotatebox[origin=c]{90}{\makecell{Deno- \\ minator}}}  & $ (1-(C_{I,s}-C_{M,s}))G^0_s(\mathbf{r}^0_{s})   + (G^0_s*\tau^0_{M,s})(\hat{\theta}^U_s)
        $ & $ C_{M,s}G^0_s(\mathbf{o}^0_{M,s})  - (1-\mathbb{T}^0_{M,s}(\hat{\theta}_s^{U}-\mathbf{f}^0_{s}))G^0_s(\mathbf{f}^0_{s}) + (1-(C_{I,s}-C_{M,s}))G^0_s(\mathbf{r}^0_{s})  + (G^0_s*\tau^0_{M,s})(\hat{\theta}^U_s) 
        $ & $ C_{M,s} G^0_s(\mathbf{o}^0_{M,s}) +(G^0_s*\tau^0_{M,s})(\hat{\theta}^U_s)
        $ \\
        \hline
        \end{tabular}
    \end{threeparttable}
\end{table*}
     
\end{lemma}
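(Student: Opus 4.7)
The plan is to derive $\hat{\theta}_s^U$ as a stationary point of the strategic firm's expected utility \eqref{eq:strategic-utility-with-post-statistics}. Since the thresholds decouple across groups ($\theta_a$ affects only group $a$'s terms and similarly for $b$), I would set $\partial \mathbb{E}[\hat{U}]/\partial \theta_s = 0$. The contribution from $\mathbb{E}[U(\theta_a,\theta_b)]$ in \eqref{eq:expected-utility} is straightforward: by Leibniz, the derivative is $-n_s\bigl[G_s^1(\theta_s)\alpha_s u_+ - G_s^0(\theta_s)(1-\alpha_s)u_-\bigr]$, which already produces the ``non-strategic'' pieces $G_s^1(\hat{\theta}_s^U)$ and $G_s^0(\hat{\theta}_s^U)$ appearing in the stated ratio. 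The remaining work is to compute $\boldsymbol{\Phi}_{s,(i)}^{'y}$ and $\boldsymbol{\Psi}_{s,(i)}^{'y}$ under each best-response type from Proposition~\ref{prop:agents-br-generic}, collect terms, and divide through by $u_+\alpha_s$ to isolate the ratio $u_-(1-\alpha_s)/(u_+\alpha_s)$.

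The central technical step is differentiating the integrals in \eqref{eq:phi_psi_definition}. Because both the integrands $1-\mathbb{T}^y_{w,s}(\theta_s - z)$ and the endpoints of the improvement/manipulation regions $\mathbb{I}^y_{s,(i)},\mathbb{M}^y_{s,(i)}$ depend on $\theta_s$ (through $\xindg, \xindmi, \xrisk$, all of which shift with the threshold), I would invoke Leibniz's rule in its full form. Differentiating under the integral sign yields interior terms of the form $-(G^y_s*\tau^y_{w,s})(\theta_s)$ (from the derivative of the integrand), while the boundary terms pick up factors of the integrand evaluated at the moving endpoints, multiplied by the endpoint's derivative with respect to $\theta_s$ (which is $+1$ for each of $\xindg,\xindmi,\xrisk,\theta_s$ in the interior regime).

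The crucial simplification is that the boundary values of $1-\mathbb{T}^y_{w,s}(\theta_s-\cdot)$ at precisely the indifference features collapse to cost constants, by the very definitions in Definition~\ref{def:indiff-features}: at $\xindg$ we have $1-\mathbb{T}^y_{w,s}(\theta_s-\xindg)=C_{w,s}$, at $\xrisk$ we have $1-\mathbb{T}^y_{M,s}(\theta_s-\xrisk) = 1-(C_{I,s}-C_{M,s})$, and the defining relation of $\xindmi$ gives $\mathbb{T}^y_{M,s}(\theta_s-\xindmi)-\mathbb{T}^y_{I,s}(\theta_s-\xindmi) = C_{I,s}-C_{M,s}$ so that the two boundary contributions there combine into a single clean term. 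After substituting these identities, each derivative reduces to a sum of (i) a pointwise evaluation $G^y_s$ at an indifference feature weighted by a cost constant, and (ii) a convolution term $(G^y_s*\tau^y_{w,s})(\hat{\theta}_s^U)$, which matches the structure in Table~\ref{table:equilibria_ratio}.

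I would then do the bookkeeping separately for Types 1, 2, and 3, because the regions $\mathbb{I}^y_{s,(i)}$ and $\mathbb{M}^y_{s,(i)}$ have different endpoint structure in each (e.g., Type 3 has $\mathbb{I}^y_{s,(3)}=\emptyset$ so $\boldsymbol{\Phi}^{'y}_{s,(3)}\equiv 0$, whereas Type 2 contains a manipulation segment $[\xindm,\xindmi)$ plus a second segment $[\xrisk,\theta_s)$, producing the extra $\xindmi$ and $\xindm$ boundary contributions). Finally I would group the resulting terms into $G^1_s(\hat{\theta}_s^U)-\boldsymbol{\Phi}_{s,(i)}^{'1}-\boldsymbol{\Psi}_{s,(i)}^{'1}-\tfrac{1-\alpha_s}{\alpha_s}\boldsymbol{\Phi}_{s,(i)}^{'0}$ in the numerator and $G^0_s(\hat{\theta}_s^U)-\boldsymbol{\Psi}_{s,(i)}^{'0}$ in the denominator, yielding the stated FOC. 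The main obstacle is purely organizational: keeping track of which boundary of which region moves with $\theta_s$, and verifying that sign cancellations between the $G^y_s(\theta_s)$ boundary term from differentiating at the upper endpoint and the $G^y_s(\theta_s)$ appearing in the FOC numerator/denominator are consistent across all three types. A secondary concern is ensuring that the stationary point is in fact the maximizer; I would argue this via the monotone likelihood ratio property (Assumption~\ref{as:strict-monotonicity-assumption}), analogous to the non-strategic argument used in Lemma~\ref{lemma:nonstrategic-unfair-theta}.
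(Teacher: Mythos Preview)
Your proposal is correct and matches the paper's approach essentially line by line: differentiate \eqref{eq:strategic-utility-with-post-statistics} in $\theta_s$, apply the Leibniz rule to \eqref{eq:phi_psi_definition} with moving limits, and collapse the boundary evaluations via Definition~\ref{def:indiff-features}. The one point you assert but do not justify is that $\partial \xindmi/\partial\theta_s=1$; unlike $\xindg$ and $\xrisk$ this is not immediate since $\xindmi$ is defined implicitly, and the paper verifies it by an implicit-differentiation argument (writing $\xindmi=\theta_s-(\mathbb{T}^y_{M,s})^{-1}\bigl(\mathbb{T}^y_{I,s}(\theta_s-\xindmi)+C_{I,s}-C_{M,s}\bigr)$ and solving for the derivative), so you should flag that this step needs a short computation rather than treating it as obvious.
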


The detailed proof appears in Appendix~\ref{app:proof_lemma_strategic_unfair_theta}. The main challenge in finding these optimal thresholds is that the regions $\mathbb{{I}}^y_{s, (i)}$ and $\mathbb{{M}}^y_{s, (i)}$ in which agents opt for the improvement and manipulation actions are functions of the decision variables ${\theta}_s$, and these dependencies should be accounted for when evaluating the derivatives $\boldsymbol{\Phi_{s,(i)}^{'y}} $ and $\boldsymbol{\Psi_{s,(i)}^{'y}} $. This is done by applying the Leibniz integral rule, and then leveraging the relation between the threshold and the indifference points in agents' strategic responses, which appear in the integration limits and their derivatives, to simplify the resulting expressions. 

Intuitively, the terms in Table~\ref{table:equilibria_ratio} can be interpreted as balancing the marginal gains and losses of a firm at the optimal decision thresholds. To illustrate, let's take the expression $(1-\alpha_s)\Big(C_{I,s}G^0_s(\mathbf{o}^0_{I,s}) + (G^0_s*\tau^0_{I,s})(\hat{\theta}^U_s) - G^0_s(\mathbf{r}^0_{s})\Big)$, appearing in the numerator of Type 1 equilibrium characterization, which reflects the rate of change in the benefits from accepting unqualified agents who opt for improvement, as the decision threshold changes. If the decision threshold $\hat{\theta}^U_s$ increases by a small $\epsilon$, the firm will lose the agents at $x=\mathbf{o}^0_{I,s}$ who successfully made it to the old threshold through improvement (at a rate $(1-\alpha_s)C_{I,s}G^0_s(\mathbf{o}^0_{I,s})$). The firm will also lose some of those with $x\in(\mathbf{o}^0_{I,s}, \mathbf{r}^0_{I,s})$ who no longer make it to the new threshold (at a rate $(1-\alpha_s)(G^0_s*\tau^0_{I,s})(\hat{\theta}^U_s)$), yet will \emph{gain} from agents with $x=\mathbf{r}^0_{s}$ who now opt for improvement instead of manipulation (at a rate $(1-\alpha_s)G^0_s(\mathbf{r}^0_{s})$). Other expressions can be intuitively interpreted in a similar way. 

Lastly, we extend the above two lemmas when the firm also incorporates a fairness constraint in its selection. 
Lemma \ref{lemma:non-strategic_fair_theta} characterizes the optimal non-strategic fairness-constrained decision thresholds. 
\begin{lemma}\label{lemma:non-strategic_fair_theta}
    The fair non-strategic firm's optimal decision thresholds $\theta_s^{\mathcal{C}} $ satisfy: $\linebreak[4] \sum_s n_s\frac{u_+ \alpha_s G^1_s(\theta_s^{\mathcal{C}}) - u_- (1-\alpha_s)G^0_s(\theta_s^{\mathcal{C}}) }{\frac{\partial{\mathcal{C}^{f}_{s}(\theta_s^{\mathcal{C}})}}{\partial{\theta_s^{\mathcal{C}}}}} = 0$.
\end{lemma}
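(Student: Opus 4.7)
The plan is to attack the constrained problem \eqref{eq:fair-expected-utility} via the method of Lagrange multipliers. Since the fairness requirement $\mathcal{C}^f_a(\theta_a)=\mathcal{C}^f_b(\theta_b)$ is a single scalar equality constraint involving both group thresholds, I would introduce a single multiplier $\lambda\in\mathbb{R}$ and form the Lagrangian
\[
\mathcal{L}(\theta_a,\theta_b,\lambda) \;=\; \mathbb{E}[U(\theta_a,\theta_b)] \;-\; \lambda\bigl(\mathcal{C}^f_a(\theta_a) - \mathcal{C}^f_b(\theta_b)\bigr).
\]
Stationarity in $\theta_a$ and $\theta_b$ then yields two first-order conditions of the form $\partial_{\theta_s}\mathbb{E}[U] = \lambda \,\partial_{\theta_s}\mathcal{C}^f_s(\theta_s)$ for $s=a$, and with an opposite sign for $s=b$ (because of how the constraint enters symmetrically with $\pm$).

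Next, I would compute $\partial_{\theta_s}\mathbb{E}[U]$ directly from the integral expression in \eqref{eq:expected-utility} by applying the Leibniz integral rule. Since $\theta_s$ is the lower limit of integration and the integrands $G^1_s(x)\alpha_s u_+ - G^0_s(x)(1-\alpha_s)u_-$ do not themselves depend on $\theta_s$ (this is the non-strategic setting, so the distributions $G^y_s$ and base rate $\alpha_s$ are exogenous), the derivative is simply the negative of the integrand evaluated at $\theta_s$:
\[
\frac{\partial\mathbb{E}[U]}{\partial\theta_s} \;=\; -\,n_s\bigl(u_+\alpha_s G^1_s(\theta_s) - u_-(1-\alpha_s)G^0_s(\theta_s)\bigr).
\]
Substituting back into the two FOCs gives
\[
\frac{n_s\bigl(u_+\alpha_s G^1_s(\theta_s^{\mathcal{C}}) - u_-(1-\alpha_s)G^0_s(\theta_s^{\mathcal{C}})\bigr)}{\partial\mathcal{C}^f_s(\theta_s^{\mathcal{C}})/\partial\theta_s^{\mathcal{C}}} \;=\; \mp\lambda,
\]
with a minus sign for $s=a$ and a plus sign for $s=b$ (or vice versa, depending on the sign convention chosen for the constraint).

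Finally, I would eliminate the nuisance multiplier $\lambda$ by adding the two equations, which yields exactly the stated condition
\[
\sum_{s\in\{a,b\}} n_s\frac{u_+\alpha_s G^1_s(\theta_s^{\mathcal{C}}) - u_-(1-\alpha_s)G^0_s(\theta_s^{\mathcal{C}})}{\partial\mathcal{C}^f_s(\theta_s^{\mathcal{C}})/\partial\theta_s^{\mathcal{C}}} \;=\; 0.
\]
There is no serious technical obstacle here, since no post-strategic statistics are involved and the integration limits that would otherwise create coupling through the agents' best-response regions (as in Lemma~\ref{lemma:strategic_unfair_theta}) are absent in the non-strategic case. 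The only care required is bookkeeping of signs when combining the two stationarity conditions, and a brief remark noting that the denominator $\partial\mathcal{C}^f_s/\partial\theta_s$ is nonzero under Assumption~\ref{as:strict-monotonicity-assumption} (since $G^1_s$ and $G^0_s$ are continuous and the EOP/DP constraints are strictly monotone in $\theta_s$), so the division is well-defined at an interior optimum.
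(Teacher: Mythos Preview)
Your Lagrange-multiplier argument is correct and yields the stated condition after the sign bookkeeping you describe. The paper, however, takes a slightly different route: rather than introducing a multiplier, it uses the constraint $\mathcal{C}^f_a(\theta_a)=\mathcal{C}^f_b(\theta_b)$ to solve for one threshold in terms of the other, writing $\theta_b=(\mathcal{C}^f_b)^{-1}(\mathcal{C}^f_a(\theta_a))$, and then reduces the problem to an unconstrained single-variable optimization in $\theta_a$. The key intermediate step there is computing $\partial\theta_b/\partial\theta_a = \bigl(\partial\mathcal{C}^f_a/\partial\theta_a\bigr)\big/\bigl(\partial\mathcal{C}^f_b/\partial\theta_b\bigr)$ via the chain rule, after which setting $\partial U/\partial\theta_a=0$ and rearranging gives the same summed condition. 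The two approaches are of course equivalent for a single equality constraint; your Lagrangian version is arguably more symmetric in $a$ and $b$ and avoids the explicit inverse-function step, while the paper's substitution makes the coupling between thresholds more transparent and mirrors the technique in \citet{zhang2022fairness} that the paper cites.
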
 
Lemma \ref{lemma:strategic_fair_theta} characterizes the optimal strategic fairness-constrained decision thresholds.
\begin{lemma}\label{lemma:strategic_fair_theta}
    The fair strategic firm's optimal decision thresholds $\hat{\theta}_s^{\mathcal{C}} $
    satisfy: $ \linebreak[4] \sum_{s} n_s $ $\frac{u_+ \alpha_s (\boldsymbol{\Phi_{s,(i)}^{'1}}(\hat{\theta_s^{\mathcal{C}}}) + \boldsymbol{\Psi_{s,(i)}^{'1}}(\hat{\theta_s^{\mathcal{C}}}) - G_s^1(\hat{\theta_s^{\mathcal{C}}})) + (1-\alpha_s) (u_+ \boldsymbol{\Phi_{s,(i)}^{'0}}(\hat{\theta_s^{\mathcal{C}}}) - u_- (\boldsymbol{\Psi_{s,(i)}^{'0}(\hat{\theta_s^{\mathcal{C}}})} - G_s^0(\hat{\theta_s^{\mathcal{C}}})))}{\frac{\partial{\mathcal{C}^{f}_{s}(\hat{\theta}_s^{\mathcal{C}})}}{\partial{\theta_s^{\mathcal{C}}}}} = 0$.
\end{lemma}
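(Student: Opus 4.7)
The plan is to run a Lagrangian argument parallel to the one behind Lemma~\ref{lemma:strategic_unfair_theta}, with the two-group system of first-order conditions induced by the equality constraint $\mathcal{C}^f_a(\theta_a)=\mathcal{C}^f_b(\theta_b)$ in place of the single-group condition, and then to eliminate the multiplier to obtain the claimed scalar identity. Concretely, I would introduce
\[
\mathcal{L}(\theta_a,\theta_b,\lambda) \;=\; \mathbb{E}[\hat{U}(\theta_a,\theta_b)] \,-\, \lambda\bigl(\mathcal{C}^f_a(\theta_a) - \mathcal{C}^f_b(\theta_b)\bigr),
\]
and write the two stationarity conditions
\[
\frac{\partial\mathbb{E}[\hat{U}]}{\partial\theta_a} = \lambda\,\frac{\partial\mathcal{C}^f_a}{\partial\theta_a}, \qquad \frac{\partial\mathbb{E}[\hat{U}]}{\partial\theta_b} = -\lambda\,\frac{\partial\mathcal{C}^f_b}{\partial\theta_b}.
\]

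Next, I would compute $\partial\mathbb{E}[\hat{U}]/\partial\theta_s$ directly from \eqref{eq:strategic-utility-with-post-statistics}. The baseline $\mathbb{E}[U]$ term contributes $-n_s\bigl(u_+\alpha_s G_s^1(\theta_s) - u_-(1-\alpha_s)G_s^0(\theta_s)\bigr)$ by the fundamental theorem of calculus applied to the outer integration limit, and the four strategic correction terms contribute $n_s$ times the derivatives $\boldsymbol{\Phi}^{'y}_{s,(i)}$ and $\boldsymbol{\Psi}^{'y}_{s,(i)}$ defined in Lemma~\ref{lemma:strategic_unfair_theta}. Because the decision regions $\mathbb{I}^y_{s,(i)}$ and $\mathbb{M}^y_{s,(i)}$ have endpoints that depend on $\theta_s$ through the indifference features of Definition~\ref{def:indiff-features}, these derivatives must be taken with the Leibniz rule; this is exactly the computation already carried out inside the proof of Lemma~\ref{lemma:strategic_unfair_theta} (with the equilibrium-type-specific cancellations recorded in Table~\ref{table:equilibria_ratio}), so I would invoke it rather than redo it. Assembling the pieces yields
\[
\frac{\partial\mathbb{E}[\hat{U}]}{\partial\theta_s} = n_s\Bigl[u_+\alpha_s\bigl(\boldsymbol{\Phi}^{'1}_{s,(i)} + \boldsymbol{\Psi}^{'1}_{s,(i)} - G_s^1\bigr) + (1-\alpha_s)\bigl(u_+\boldsymbol{\Phi}^{'0}_{s,(i)} - u_-(\boldsymbol{\Psi}^{'0}_{s,(i)} - G_s^0)\bigr)\Bigr],
\]
evaluated at $\hat{\theta}^{\mathcal{C}}_s$, which matches the numerator in the lemma statement.

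To eliminate $\lambda$, I would divide each first-order condition by $\partial\mathcal{C}^f_s/\partial\theta_s$ and add the two: the right-hand sides collapse to $\lambda+(-\lambda)=0$, so
\[
\sum_{s\in\{a,b\}} \frac{\partial\mathbb{E}[\hat{U}]/\partial\theta_s}{\partial\mathcal{C}^f_s/\partial\theta_s} = 0,
\]
and substituting the derivative computed above gives exactly the claim.

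The hard part is the Leibniz-rule differentiation of $\boldsymbol{\Phi}^{y}_{s,(i)}$ and $\boldsymbol{\Psi}^{y}_{s,(i)}$ through moving integration limits. Fortunately, since this has already been done per group inside Lemma~\ref{lemma:strategic_unfair_theta} and the group index decouples across the two stationarity conditions (each $\partial\mathbb{E}[\hat{U}]/\partial\theta_s$ depends only on group-$s$ objects), no new analytic work is required beyond invoking that result. A secondary routine check is that the fairness functionals $\mathcal{C}^f_s$ (EOP and DP, both simple upper-tail integrals of continuous densities) are differentiable with nonzero derivatives in the regime of interest, so the division by $\partial\mathcal{C}^f_s/\partial\theta_s$ is legitimate.
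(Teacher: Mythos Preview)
Your proposal is correct and reaches the stated identity. The difference from the paper is purely in how the constrained optimization is handled: the paper (Appendix~\ref{app:fair-threshold-lemmas-proofs}) eliminates $\theta_b$ by inverting the constraint to write $\theta_b=(\mathcal{C}^f_b)^{-1}(\mathcal{C}^f_a(\theta_a))$, computes $\partial\theta_b/\partial\theta_a$ via the inverse-function/chain rule, and then differentiates the reduced single-variable objective; dividing through by $\partial\mathcal{C}^f_a/\partial\theta_a$ yields the summed form. You instead keep both variables, introduce a Lagrange multiplier, and eliminate it by dividing each stationarity condition by $\partial\mathcal{C}^f_s/\partial\theta_s$ and adding. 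The two routes are equivalent first-order treatments of the same equality-constrained problem; your Lagrangian version is slightly more symmetric between groups and avoids the inverse-function bookkeeping, while the paper's substitution avoids introducing a multiplier. In both cases the substantive work---the Leibniz-rule differentiation of $\boldsymbol{\Phi}^y_{s,(i)}$ and $\boldsymbol{\Psi}^y_{s,(i)}$ through the $\theta_s$-dependent indifference points---is inherited from the proof of Lemma~\ref{lemma:strategic_unfair_theta}, exactly as you note.
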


The proofs follow from applying similar techniques to those in \citep{zhang2022fairness}, combined with the results of Lemma~\ref{lemma:strategic_unfair_theta}; these proofs are included in Appendix~\ref{app:fair-threshold-lemmas-proofs} for completeness.

Lemmas~\ref{lemma:nonstrategic-unfair-theta}-\ref{lemma:strategic_fair_theta}, together with Proposition~\ref{prop:agents-br-generic}, fully characterize the possible (Stackelberg) equilibria of the game between agents and the firm, depending on the relative cost-efficacy of strategic actions, whether the firm is cognizant of agents' strategic behavior, and whether the firm imposes any fairness constraints. We next use these characterizations to assess the impacts of anticipating agents' strategic behavior on the optimal decision thresholds and the firm's utility, as well as the fairness implications of this across groups. 
\section{Effects of Anticipating Strategic Behavior}\label{sec:effect-strateg-prediction}

We first proceed with investigating the impacts of anticipating agents' strategic behavior on the optimal policies and the firm's utility by comparing the non-strategic policy $\theta^U_s$ (from Lemma~\ref{lemma:nonstrategic-unfair-theta}) with the strategic policy $\hat{\theta}_s$ (from Lemma~\ref{lemma:strategic_unfair_theta}) in the absence of any fairness interventions.  
For concreteness, we do this analysis {for cases when all agents opt for the same type of best-response, and specifically,} for Types 1 and 3 of agents' best responses, which can be interpreted as relatively low and high improvement costs, respectively, as illustrated in Section~\ref{sec:agents-behavior}. {Accordingly, we will also refer to Type 1 and Type 3 equilibria as ``affordable improvement'' and ``costly improvement'', respectively.} Notably, Type 3/costly improvement equilibria only include manipulation decisions, whereas Type 1/affordable improvement equilibria include both manipulation and improvement decisions at equilibrium. 

Contrasting these two types of equilibria, we can investigate to what extent anticipating gaming can prevent manipulation but also potentially encourage improvement decisions. Notably, while some prior works (e.g.,  \citet{zhang2022fairness, xie2024learning}) show that anticipating gaming can reduce manipulations and potentially encourage improvements, they are limited by the assumption that these actions are only available to \emph{unqualified} individuals. Our work takes a broader view by considering the availability of these actions to any agent (in addition to other model extensions noted earlier), and therefore sheds light on how these effects differ across qualification states, an aspect not considered in prior work. Our broader view also has implications for the \emph{fairness} of anticipating strategic behavior, precisely as there are differences in how qualified vs. unqualified agents are impacted by the firm's anticipation of strategic responses. 

Our main result, Proposition~\ref{prop:firm_impact_comp}, compares the strategic and non-strategic decision thresholds, and their influence on different components of the firm utility. Recall that when a decision threshold $\theta_s$ is lowered (resp. increased), more (resp. fewer) agents are accepted without taking any strategic action. The relative ordering of the thresholds therefore allows us to comment on how the strategic responses of agents from different labels and groups changes if a firm can anticipate strategic responses. To establish these results, we make two additional assumptions: one on the feature distributions, and one on the costs and boost functions for different actions across qualification states. 

\begin{assumption}\label{as:sym_bounded_dist}
     Agents' feature distribution $G^y_s(x)$ is symmetric and bounded, with lower and upper domain limits $\underline{x}^y_s$, and $\overline{x}^y_s$, respectively. Further, $\mu^0_s<\underline{x}^1_s < \overline{x}^0_s<\mu^1_s$, where $\mu^y_s$ denotes the mean of $G^y_s(x)$. 
\end{assumption}

The assumption of symmetric feature distributions is made for tractability of the analysis. The inequality in the assumption poses a natural (and mild) restriction: unqualified agents have, on average, lower scores than qualified agents ($\mu_s^0<\mu^1_s$), and  there are overlaps in their feature distributions ($\underline{x}^1_s < \overline{x}^0_s$) so that distinguishing them based on their feature $x$ is a non-trivial task.

\begin{assumption}\label{as:relation_of_boost}
    The manipulation cost and boost distribution are the same for both labels (i.e., $C^1_{M,s} = C^0_{M,s} = C_{M,s}$ and $\tau^1_{M,s} = \tau^0_{M,s} = \tau_{M,s}$). The improvement boost distribution of qualified agents first-order stochastically dominates (FOSD) that of unqualified agents ($i.e., \mathbb{T}_{I,s}^0(b) \geq \mathbb{T}_{I,s}^1(b) $). 
\end{assumption}

Assumption~\ref{as:relation_of_boost} states that the cost and effectiveness of the manipulation action is independent of the qualification state. For instance, using a Generative AI tool to solve basic homework problems has the same cost and effectiveness regardless of a student's preparation level/qualification state. 
In contrast, the second part of the assumption implies that the improvement action is more effective for qualified agents. Continuing on the same example, when a qualified student with a stronger background spends time preparing well for an exam, their effort can boost their exam grades more than a less prepared student spending the same time. 

Given Assumptions~\ref{as:sym_bounded_dist} and \ref{as:relation_of_boost}, we can now establish relative orderings between $\theta_s$ and $\hat{\theta}_s$, and their impact on the different components of the firm's utility. Recall, from the expression for the firm's utility \eqref{eq:strategic-utility-with-post-statistics}, that $\boldsymbol{\Phi}_{s,(i)}^y({\theta_s})$ and $\boldsymbol{\Psi}_{s,(i)}^y({\theta_s})$ capture the change in the firm's utility due to label $y$ agents from group $s$ who opt for improvement and manipulation, respectively, in equilibrium Type (i), given a threshold $\theta_s$.

\begin{proposition}\label{prop:firm_impact_comp}  
 Assume $\underline{x}^1_s \geq \mu^0_s+\underline{b}_{M,s}$, $ \overline{x}^0_s- \underline{x}^0_s < \xrisk - \mathbf{o}^0_{I,s}$, $\underline{x}^1_s- \underline{x}^0_s>(\mathbb{T}^y_{M,s})^{-1}(1-C_{M,s})$, $\overline{x}^0_s +(\mathbb{T}^y_{M,s})^{-1}(C_{I,s}-C_{M,s})  < \mu^0_s + \underline{b}^0_{I,s} < \mu^1_s$, and 
 $\overline{x}^0_s + (\mathbb{T}^y_{M,s})^{-1}(1-C_{M,s})<\mu^1_s + \underline{b}_{M,s}$, and that Assumptions~\ref{as:sym_bounded_dist} and \ref{as:relation_of_boost} hold. Then, 
 \begin{itemize}
        \item[(i)] The strategic firm chooses a higher threshold: $\theta^{U}_s < \hat{\theta}^{U}_s$,
        \item[(ii)] The non-strategic firm faces more manipulation by unqualified agents: $ \boldsymbol{\Psi}^0_{s,(i)}(\theta^{U}_s) > \boldsymbol{\Psi}^0_{s,(i)}(\hat{\theta}^{U}_s), ~i\in \{1,3\}$,
        \item[(iii)] The strategic firm faces more manipulation by qualified agents: $\boldsymbol{\Psi}^1_{s,(i)}(\theta^{U}_s) < \boldsymbol{\Psi}^1_{s,(i)}(\hat{\theta}^{U}_s), ~i\in \{1,3\}$,
        \item[(iv)] The strategic firm drives more improvement by both qualified and unqualified agents: $\linebreak[4] \boldsymbol{\Phi}^y_{s,(1)}(\theta^{U}_s) < \boldsymbol{\Phi}^y_{s,(1)}(\hat{\theta}^{U}_s), ~y \in \{0,1\}$. 
    \end{itemize}
\end{proposition}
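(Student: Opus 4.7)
The plan is to prove part~(i) by evaluating the sign of the strategic firm's utility derivative at the non-strategic optimum $\theta^U_s$, and then to deduce parts~(ii)--(iv) by combining (i) with monotonicity of the correction terms $\boldsymbol{\Phi}^y_{s,(i)}$ and $\boldsymbol{\Psi}^y_{s,(i)}$ in $\theta_s$. Using the decomposition \eqref{eq:strategic-utility-with-post-statistics}, I would write $\mathbb{E}[\hat{U}(\theta_s)] = \mathbb{E}[U(\theta_s)] + \mathrm{Corr}(\theta_s)$ where $\mathrm{Corr}(\theta_s)$ collects the $\boldsymbol{\Phi}$ and $\boldsymbol{\Psi}$ terms. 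Since the non-strategic FOC (Lemma~\ref{lemma:nonstrategic-unfair-theta}) gives $\partial\mathbb{E}[U]/\partial\theta_s|_{\theta^U_s} = 0$, the sign of $\partial\mathbb{E}[\hat{U}]/\partial\theta_s|_{\theta^U_s}$ equals the sign of $\mathrm{Corr}'(\theta^U_s)$. If this is strictly positive, the unique-maximizer characterization in Lemma~\ref{lemma:strategic_unfair_theta} forces $\hat{\theta}^U_s > \theta^U_s$.

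The central computational step is to evaluate $\partial\boldsymbol{\Phi}^y_{s,(i)}/\partial\theta_s$ and $\partial\boldsymbol{\Psi}^y_{s,(i)}/\partial\theta_s$ via the change of variable $u = \theta_s - z$ in~\eqref{eq:phi_psi_definition}. By Definition~\ref{def:indiff-features}, the indifference features $\mathbf{o}^y_{I,s}$, $\mathbf{o}^y_{M,s}$, and $\mathbf{r}^y_s$ all shift at unit rate with $\theta_s$, so the transformed integration bounds become $\theta_s$-independent constants and the entire $\theta_s$-dependence collapses into $G^y_s(\theta_s - u)$. Each derivative thus reduces to an integral of $(G^y_s)'(\theta_s-u)$ weighted by the non-negative survival function of a boost distribution, and its sign is governed by whether the transformed range $\theta_s-u$ lies above or below the mean $\mu^y_s$ of the symmetric distribution $G^y_s$ (Assumption~\ref{as:sym_bounded_dist}).

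The proposition's numeric inequalities are chosen precisely to pin down these signs. Combining $\mu^0_s<\underline{x}^1_s<\overline{x}^0_s<\mu^1_s$ with the bounds involving $(\mathbb{T}^y_{M,s})^{-1}(\cdot)$, $\mu^0_s + \underline{b}^0_{I,s}$, and $\mu^1_s + \underline{b}_{M,s}$ places the transformed range above $\mu^0_s$ throughout every unqualified manipulation integral (giving $\boldsymbol{\Psi}'^0_{s,(i)}<0$ for $i\in\{1,3\}$), and below $\mu^1_s$ throughout every qualified manipulation or improvement integral (giving $\boldsymbol{\Psi}'^1_{s,(i)}>0$ and $\boldsymbol{\Phi}'^1_{s,(1)}>0$). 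The width condition $\overline{x}^0_s-\underline{x}^0_s<\mathbf{r}^0_s-\mathbf{o}^0_{I,s}$ places the entire support of $G^0_s$ inside the Type~1 improvement region and yields $\boldsymbol{\Phi}'^0_{s,(1)}>0$. Substituting these signs into $\mathrm{Corr}'(\theta^U_s) = n_s[u_+\alpha_s\boldsymbol{\Phi}'^1_{s,(i)} + u_+(1-\alpha_s)\boldsymbol{\Phi}'^0_{s,(i)} + u_+\alpha_s\boldsymbol{\Psi}'^1_{s,(i)} - u_-(1-\alpha_s)\boldsymbol{\Psi}'^0_{s,(i)}]$, every term is non-negative (the $\boldsymbol{\Psi}'^0$ term contributes positively because its coefficient is negative), giving $\mathrm{Corr}'(\theta^U_s)>0$ and hence (i). Parts (ii)--(iv) then follow at once: $\boldsymbol{\Psi}^0$ is decreasing, $\boldsymbol{\Psi}^1$ is increasing, and $\boldsymbol{\Phi}^y$ are increasing (in Type~1), so combining these monotonicities with $\hat{\theta}^U_s > \theta^U_s$ delivers the stated orderings.

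The main obstacle will be matching each of the proposition's positioning inequalities to the required sign assignment, carried out separately for Type~1 (where the manipulation integration width is $(\mathbb{T}^y_{M,s})^{-1}(C_{I,s}-C_{M,s})$ and an improvement contribution also appears) and Type~3 (where only manipulation enters, with width $(\mathbb{T}^y_{M,s})^{-1}(1-C_{M,s})$). Each inequality corresponds to a specific containment of the transformed range relative to $\mu^0_s$ or $\mu^1_s$, and checking that all these containments hold simultaneously under both equilibrium types constitutes the bulk of the technical work.
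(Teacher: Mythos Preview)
Your overall architecture—sign the derivative of the correction term at $\theta^U_s$ to obtain~(i), then invoke monotonicity of $\boldsymbol{\Phi}^y_{s,(i)}$ and $\boldsymbol{\Psi}^y_{s,(i)}$ for~(ii)--(iv)—is close in spirit to the paper's route, and your change of variable $u=\theta_s-z$ is an elegant way to absorb the moving limits (equivalent to the paper's Leibniz computation). But the specific mechanism you propose for signing the derivatives does not go through under the stated hypotheses, and two further ingredients you rely on are not available from the sources you cite.

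\emph{The constant-sign criterion is too strong.} You plan to force $\boldsymbol{\Psi}'^0_{s,(i)}<0$ and $\boldsymbol{\Phi}'^y_{s,(1)}>0$ by arranging that $(G^y_s)'(\theta_s-u)$ keeps one sign on the entire $u$-interval, i.e.\ that the original $z$-range lies wholly on one side of $\mu^y_s$. The proposition's inequalities do not deliver this. For Type~3 manipulation, constant sign would need $\mathbf{o}_{M,s}\ge\mu^0_s$, i.e.\ $\theta_s\ge\mu^0_s+(\mathbb{T}_{M,s})^{-1}(1-C_{M,s})$; the hypothesis $\underline{x}^1_s\ge\mu^0_s+\underline{b}_{M,s}$ (together with $\theta^U_s\ge\underline{x}^1_s$) only gives $\theta^U_s\ge\mu^0_s+\underline{b}_{M,s}$, which is strictly weaker since $(\mathbb{T}_{M,s})^{-1}(1-C_{M,s})\ge\underline{b}_{M,s}$. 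Likewise, your use of the width condition to get $\boldsymbol{\Phi}'^0_{s,(1)}>0$ would need $\mathbf{r}_s<\mu^0_s$, yet at the relevant thresholds $\mathbf{r}_s$ can exceed $\mu^0_s$ (indeed the paper uses $\mathbf{r}_s>\overline{x}^0_s>\mu^0_s$ in part of its analysis). The paper avoids this by proving that each $\boldsymbol{\Phi}^y_{s,(i)}$ and $\boldsymbol{\Psi}^y_{s,(i)}$ is \emph{single-peaked} in $\theta_s$, locating the peak (e.g.\ $\arg\max\boldsymbol{\Psi}^0_{s,(i)}\le\mu^0_s+\underline{b}_{M,s}$ and $\arg\max\boldsymbol{\Phi}^0_{s,(1)}\ge\overline{x}^0_s+(\mathbb{T}_{M,s})^{-1}(C_{I,s}-C_{M,s})$), and then showing that both $\theta^U_s$ and $\hat{\theta}^U_s$ fall on the same side of each peak. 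Single-peakedness is a weaker and attainable property; your pointwise-sign criterion is sufficient but not met here.

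\emph{Missing bounds on $\hat{\theta}^U_s$ and missing uniqueness.} Even granting $\mathrm{Corr}'(\theta^U_s)>0$, concluding $\hat{\theta}^U_s>\theta^U_s$ requires strict quasi-concavity (or at least a unique critical point) of the strategic objective; Lemma~\ref{lemma:strategic_unfair_theta} only records the first-order condition and does not establish uniqueness. The paper proves uniqueness separately by a global sign analysis of $\partial\hat U/\partial\theta_s$. Moreover, for~(ii)--(iv) you need the monotonicity to hold on all of $[\theta^U_s,\hat{\theta}^U_s]$, which in turn requires an \emph{a priori} upper bound on $\hat{\theta}^U_s$ (the paper derives, e.g., $\hat{\theta}^U_s\le\overline{x}^0_s+(\mathbb{T}_{M,s})^{-1}(C_{I,s}-C_{M,s})$ in Type~1 by analyzing the utility at $\alpha_s=0$). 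Your plan produces no such bound, so even if the derivative signs held at $\theta^U_s$, you would not yet control them at $\hat{\theta}^U_s$.
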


The proof is provided in Appendix~\ref{sec:proof_Prop2_coro1}. We next provide a brief intuitive interpretation for the findings of this proposition. We discuss the impacts of relaxing the assumptions of  Proposition~\ref{prop:firm_impact_comp} in Appendix~\ref{app:relax_assumptions_prop2}, as well as in the numerical experiments in Section~\ref{sec:numerical-exp}. 

The key takeaways behind Proposition~\ref{prop:firm_impact_comp} is as follows: by anticipating agents' strategic responses, the firm chooses a higher threshold than a non-strategic firm, consequently reducing incentive toward manipulation for unqualified agents while increasing manipulation incentives for qualified ones, both of which align with improving its utility. Notably, the fact that anticipation of gaming increases the ``social burden'' on qualified individuals, by forcing them to opt of manipulation has also been observed under different models in prior work~\citep{milli2019social}. 
Moreover, this higher threshold has even more preferred consequences for the scenario of affordable improvement (Type 1). It increases genuine improvement incentives for all agents, and especially for unqualified agents, for whom genuine improvement becomes the only successful path to acceptance. Together, these observations highlight how anticipating strategic responses will not only reduce gaming of the algorithm by unqualified individuals (as also shown in prior work), but also lead to increased improvement incentives among \emph{all agents} (a new effect identified by our model). An additional detailed and illustrative interpretation of both costly-improvement (Type 3) and affordable-improvement (Type 1) equilibria, and how each drive the changes in manipulation vs. improvement, is provided in Appendix~\ref{sec:intuition_prop2}. 
\subsection{Effects of fairness interventions}\label{sec:fairness_effect}

We next proceed to studying how enforcing fairness at the policy level shapes agents' strategic responses. As we show in this section, fairness constraints can have nontrivial and sometimes negative effects on improvement incentives across groups.

Specifically, assume the firm imposes a fairness constraint $\mathcal{C}$ when selecting the optimal threshold (as formalized in \eqref{eq:fair-expected-utility}), on the contrast between the unfair policies (characterized in Lemma~\ref{lemma:nonstrategic-unfair-theta} and \ref{lemma:strategic_unfair_theta}) with the corresponding fair policies (characterized in Lemmas~\ref{lemma:non-strategic_fair_theta} and \ref{lemma:strategic_fair_theta}). Similar to Section~\ref{sec:effect-strateg-prediction}, we do so when all agents opt for the same response type, and for Types 1 and 3 of agents' best responses. Our comparison, outlined in the following Corollary, is a direct consequence of the proof of Proposition~\ref{prop:firm_impact_comp} under the stated assumptions. 

\begin{corollary}\label{cor:fair-policies}
For a non-strategic firm, assume that $\theta^{U}_a < \theta^{\mathcal{C}}_a \leq \arg\max_{\theta_s} \boldsymbol{\Phi}^0_{a,(1)}(\theta_s)$ and that $\theta^{U}_b > \theta^{\mathcal{C}}_b \geq \boldsymbol{\Phi}^y_{b,(1)}(\hat{\theta}^{\mathcal{C}}_b)$, where $a$ and $b$ denote the advantaged and disadvantaged groups, respectively. Then, imposing a fairness constraint:
\begin{enumerate}
    \item Decreases (increases) manipulation by unqualified advantaged (disadvantaged) agents ($\linebreak[4] \boldsymbol{\Psi}^0_{a,(i)}(\hat{\theta}^{\mathcal{C}}_a) \leq \boldsymbol{\Psi}^0_{a,(i)}(\hat{\theta}^{U}_a)$ and $\boldsymbol{\Psi}^0_{b,(i)}(\hat{\theta}^{\mathcal{C}}_b) > \boldsymbol{\Psi}^0_{b,(i)}(\hat{\theta}^{U}_b),~ i \in \{1,3\}$), 
    \item Increases (decreases) manipulation by qualified advantaged (disadvantaged) agents (i.e., $\boldsymbol{\Psi}^1_{a,(i)}(\hat{\theta}^{\mathcal{C}}_a) \geq \boldsymbol{\Psi}^1_{a,(i)}(\hat{\theta}^{U}_a)$ and $\boldsymbol{\Psi}^1_{b,(i)}(\hat{\theta}^{\mathcal{C}}_b) \leq \boldsymbol{\Psi}^1_{b,(i)}(\hat{\theta}^{U}_b),~ i \in \{1,3\}$), 
    \item Increases (decreases) improvement by both qualified and unqualified advantaged (disadvantaged) agents (i.e., $\boldsymbol{\Phi}^y_{a,(1)}({\theta}^{\mathcal{C}}_a) > \boldsymbol{\Phi}^y_{a,(1)}({\theta}^{U}_a)$, and $\boldsymbol{\Phi}^y_{b,(1)}(\hat{\theta}^{\mathcal{C}}_b) \leq \boldsymbol{\Phi}^y_{b,(1)}(\hat{\theta}^{U}_b)$, $y\in\{0,1\}$). 
\end{enumerate}
The same statements hold for the strategic firm with the corresponding $\hat{\theta}^{{U}}_s$ and $\hat{\theta}^{\mathcal{C}}_s$. 
\end{corollary}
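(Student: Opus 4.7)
The plan is to deduce the corollary directly from the monotonicity properties of $\boldsymbol{\Psi}^y_{s,(i)}(\theta_s)$ and $\boldsymbol{\Phi}^y_{s,(i)}(\theta_s)$ in the threshold $\theta_s$, which were already established inside the proof of Proposition~\ref{prop:firm_impact_comp}. The structure of the corollary mirrors that proposition: in each case we compare the values of the same two families of integrals evaluated at two different threshold choices, and the direction of the comparison is dictated by (i) the relative ordering of the two thresholds and (ii) the monotonicity of the integrand as a function of the threshold.

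First I would recall, from the Proposition~\ref{prop:firm_impact_comp} proof, that under Assumptions~\ref{as:sym_bounded_dist}--\ref{as:relation_of_boost} and the auxiliary inequalities listed there, $\boldsymbol{\Psi}^0_{s,(i)}(\theta_s)$ is monotonically decreasing in $\theta_s$ (raising the threshold shrinks the profitable-manipulation region for unqualified agents and lowers the boost-clearance probability), while $\boldsymbol{\Psi}^1_{s,(i)}(\theta_s)$ is monotonically increasing (raising the threshold pulls previously-accepted qualified agents into the manipulation region). The improvement integral $\boldsymbol{\Phi}^y_{s,(1)}(\theta_s)$ is quasi-concave in $\theta_s$: it rises as the threshold first expands the profitable-improvement region and eventually falls when the boost distribution can no longer reliably clear the threshold.

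Next I would combine these monotonicities with the threshold orderings given by the hypotheses: $\theta^{U}_a < \theta^{\mathcal{C}}_a$ (the fairness constraint raises the advantaged threshold) and $\theta^{U}_b > \theta^{\mathcal{C}}_b$ (the fairness constraint lowers the disadvantaged one). For group $a$, the rise yields $\boldsymbol{\Psi}^0_a$ decreasing and $\boldsymbol{\Psi}^1_a$ increasing, which are exactly claims 1 and 2 in the advantaged direction; the hypothesis $\theta^{\mathcal{C}}_a \leq \arg\max_{\theta_s}\boldsymbol{\Phi}^0_{a,(1)}(\theta_s)$ places both $\theta^{U}_a$ and $\theta^{\mathcal{C}}_a$ on the increasing branch of the quasi-concave $\boldsymbol{\Phi}^y_{a,(1)}$, so the rise in threshold translates into a rise in improvement. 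For group $b$, the symmetric argument applies: the analogous placement hypothesis for the disadvantaged group forces both $\theta^{U}_b$ and $\theta^{\mathcal{C}}_b$ into the same (increasing) branch of $\boldsymbol{\Phi}^y_{b,(1)}$, so that the fall in threshold from $\theta^{U}_b$ to $\theta^{\mathcal{C}}_b$ produces a fall in improvement and, simultaneously, an increase of $\boldsymbol{\Psi}^0_b$ and a decrease of $\boldsymbol{\Psi}^1_b$. The three pairs of inequalities exactly match the statement.

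Finally, the argument transfers verbatim to the strategic-firm variant with $\hat{\theta}^{U}_s$ and $\hat{\theta}^{\mathcal{C}}_s$ in place of $\theta^{U}_s$ and $\theta^{\mathcal{C}}_s$, because the monotonicity properties of $\boldsymbol{\Psi}^y_{s,(i)}$ and $\boldsymbol{\Phi}^y_{s,(i)}$ in $\theta_s$ are intrinsic to the integrals and do not depend on which optimality condition produced the threshold. The main obstacle is not the monotonicity reasoning itself but the placement bookkeeping: one must confirm that, under the stated hypotheses, both thresholds for each group genuinely lie on the same (increasing) branch of $\boldsymbol{\Phi}^y_{s,(1)}$, and one must interpret the slightly awkwardly-written disadvantaged-side bound as a statement that $\theta^{\mathcal{C}}_b$ (and hence $\theta^{U}_b$ above it in the assumed ordering) sits below the peak of $\boldsymbol{\Phi}^y_{b,(1)}$. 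Once that placement check is done, no new calculation is needed: the corollary is a repackaging of monotonicity facts already in hand.
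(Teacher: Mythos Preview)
Your approach is essentially the paper's: the corollary is declared ``a direct consequence of the proof of Proposition~\ref{prop:firm_impact_comp} under the stated assumptions,'' and you correctly identify that the argument amounts to combining the threshold orderings $\theta^{U}_a<\theta^{\mathcal{C}}_a$, $\theta^{U}_b>\theta^{\mathcal{C}}_b$ with the shape properties of $\boldsymbol{\Phi}^y_{s,(i)}$ and $\boldsymbol{\Psi}^y_{s,(i)}$ established in that proof.

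One correction, though: you state that $\boldsymbol{\Psi}^0_{s,(i)}$ is monotonically decreasing and $\boldsymbol{\Psi}^1_{s,(i)}$ monotonically increasing in $\theta_s$. That is not what Proposition~\ref{prop:firm_impact_comp}'s proof shows; it shows that all of $\boldsymbol{\Psi}^y_{s,(i)}$ and $\boldsymbol{\Phi}^y_{s,(i)}$ are \emph{single-peaked}. The conclusions for $\boldsymbol{\Psi}^0$ and $\boldsymbol{\Psi}^1$ therefore need exactly the same placement bookkeeping you describe for $\boldsymbol{\Phi}$: one must verify that both the unfair and fair thresholds sit on the decreasing branch of $\boldsymbol{\Psi}^0_{s,(i)}$ (past its peak near $\mu^0_s+\underline{b}_{M,s}$) and on the increasing branch of $\boldsymbol{\Psi}^1_{s,(i)}$ (below its peak near $\mu^1_s+\underline{b}_{M,s}$). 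Those placements are not supplied by the corollary's extra hypotheses but by the standing assumptions of Proposition~\ref{prop:firm_impact_comp}, which are implicitly carried over. Once you swap ``monotone'' for ``single-peaked with thresholds on the appropriate branch, via Proposition~\ref{prop:firm_impact_comp}'s conditions,'' your argument goes through and matches the paper.
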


These observations are primarily due to the ordering of fair vs. unfair policies, where we have assumed that the firm attains fairness by raising the decision threshold for the advantaged group (\(\theta^{\mathcal{C}}_a > \theta^{U}_a\)) and lowering it for the disadvantaged group (\(\theta^{\mathcal{C}}_b < \theta^{U}_b\)). This is a natural assumption, supported by our numerical experiments in Section~\ref{sec:numerical-exp}, and also commonly observed in existing literature on fair machine learning. In particular, \cite{estornell2023group} discuss that in binary classification, fairness can be improved in two main ways: \emph{inclusiveness} (flipping disadvantaged group predictions to positive) or \emph{selectivity} (flipping advantaged group predictions to negative). Our assumption of a decrease in the group $b$ threshold aligns with inclusiveness, while that of an increase in the group $a$ threshold aligns with selectivity, to attain fairness. 

In our context, Corollary~\ref{cor:fair-policies} argues that in its attempt to attain fair outcomes, the firm \emph{decreases} improvement incentives of the disadvantaged group $b$, while \emph{increasing} the improvement incentives for the advantaged one; this can be viewed as a negative consequence of fairness interventions on shaping agents' strategic responses. As we next elaborate through numerical experiments in Section~\ref{sec:numerical-exp}, this negative by-product is more severe if the firm is non-strategic (i.e., unaware of agents' strategic responses). 
\section{Numerical Experiments} \label{sec:numerical-exp} 
In this section, we use numerical experiments to validate the analytical findings from Section~\ref{sec:effect-strateg-prediction}. We focus on the affordable-improvement (Type~1) equilibrium, which features both manipulation and improvement; results for the costly-improvement (Type~3) case are deferred to Appendix~\ref{app:type3_fico_experiments}. Experiments are conducted on a semi-synthetic dataset based on real-world FICO credit data that includes four demographic groups: African American (AA), Hispanic (H), Caucasian (C), and Asian (A). To illustrate the main observations from Proposition~\ref{prop:firm_impact_comp}, we first compare strategic vs. non-strategic policies in a two-group setting (AA vs. C) across varying target qualification rates $\alpha_s$. The full extension to all four groups is given in Appendix~\ref{app:Prop2_all_groups}. We then examine how fairness interventions reshape these incentive effects in settings with two demographic groups (C vs. AA), and subsequently extend the analysis to all demographic groups in Appendix~\ref{app:fairness_all_groups}. Similar and additional experiments on a synthetic dataset are provided in Appendix~\ref{app:add_NE}. 

\paragraph{FICO data \citep{hardt2016equality}.}
We use the empirical FICO score distributions from prior work \citep{xie2024learning,zhang2022fairness} to generate group-conditioned samples $(X,Y)\mid S=s$, without imposing any parametric structure. The dataset includes four demographic groups (AA, H, C, A) with qualification rates $\alpha_s \in (0.33849, 0.56977, 0.75972, 0.80467)$, and all scores are normalized to $[0,1]$. The empirical CDFs, PDFs, and repayment likelihoods $P(Y=1\mid X,S)$ are shown in Appendix~\ref{app:fico_data} for reference.

\paragraph{Improvement vs.\ manipulation: empirical grounding.}
To calibrate the boost and cost parameters in our simulations, we draw on empirical evidence from credit-scoring contexts, where both genuine improvement (e.g., on-time repayment, credit-builder loans) and manipulative strategies (e.g., tradeline renting) are well-documented. Improvement actions typically yield substantially larger score increases than manipulation (e.g., $45$--$85$ vs.\ $7$--$45$ points on average \citep{self2025cba,nasdaq2025esusu,CFPB2020_2,sofi2024piggybacking,dovly2024,ABA2020,avery2010credit,foxnews2007piggybacking,ftc2020}), but at higher effort and financial cost, consistent with our assumption that improvement is more effective yet costlier in expectation (e.g., \$300 vs.\ \$390 \citep{LendEDU2025,nasdaq2021,ftc2020}). Moreover, improvement actions appear more effective for individuals with better credit histories, with an average increase of $12$ points. Formally, we specify the improvement-boost distribution for unqualified and qualified agents as $\tau^0_{I,s}(b) \sim \text{TruncNorm}\!\big([0.45,0.85],\,0.65,\,0.15\big), \quad \tau^1_{I,s}(b) \sim \text{TruncNorm}\!\big([0.57,0.97],\,0.77,\,0.15\big),$ while manipulation boosts for all agents follow $\linebreak[4] \tau^y_{M,s}(b) \sim\text{TruncNorm}\!\big([0.07,0.45],\,0.26,\,0.22\big)$. The normalized costs are $C_{M,s}=0.2$ and $C_{I,s}=0.3$. Under these choices, group-level disparities are attributed to \emph{historical} differences in qualification rates and feature distributions, rather than to \emph{current} disparities in strategic capacity.

\paragraph{Experimental setup for two demographic groups.} We compare policy behavior, without considering fairness, across two demographic groups:  African American (AA) and Caucasian (C). For each group and target qualification rate $\alpha_s$, a single replicate proceeds as follows. We sample $n=1000$ agents such that approximately $\alpha_s n$ agents are drawn from the empirical distribution of scores among repayers ($y=1$) and the remaining agents are drawn from the empirical distribution among non-repayers ($y=0$). We vary $\alpha_s$ over $\{0.1, 0.2, \dots, 0.9\}$. For each $\alpha_s$ and each group, we run 50 independent replications and report averages across runs. 

\paragraph{Strategic vs. non-strategic (unfair) firms.} 
Figure~\ref{fig:type1_unfair_vis_all_fico} shows that incorporating strategic responses reshapes behavior for both groups. We start the discussion on the unqualified agents' behavior. Relative to a non-strategic firm, the strategic firm induces more genuine improvement (Figure~\ref{fig:type1_comp_Phi0_fico}), with the effect being more pronounced for smaller values of $\alpha_s$. Simultaneously, it reduces overall manipulation (Figure~\ref{fig:type1_comp_Psi0_fico}), although a slight increase is observed when $\alpha_s$ is very small ($<0.25$). This is because the firm has a considerable opportunity for inducing improvement in these settings. These effects appear in both AA and C groups and reflect a shift from gaming toward genuine improvement, which results in shifting the group's qualification rates toward 1, as illustrated in Figure~\ref{fig:alpha_post_com_unfair_fico}. 

Although the non-strategic firm generates modest improvement, which is most noticeable around qualification rate of $0.25$ due to the high thresholds as appears in Figure~\ref{fig:type1_comp_theta_fico}, it rejects most unqualified agents who become qualified by improvement by missing out on their change due to being unaware of strategic behavior. As a result, downstream utility and improvement effects are weaker under non-strategic decision-making (Figures~\ref{fig:type1_comp_utility_fico} and \ref{fig:type1_comp_Phi0_fico}).

\begin{figure}[htbp]
    \centering
    \begin{subfigure}{0.22\textwidth}
        \centering
        \includegraphics[width=\textwidth,trim={0 0 0 0},clip]{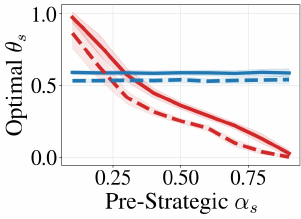}
        \caption{Optimal thresholds}
        \label{fig:type1_comp_theta_fico}
    \end{subfigure}
    \begin{subfigure}{0.22\textwidth}
        \centering
        \includegraphics[width=\textwidth,trim={0 0 0 0},clip]{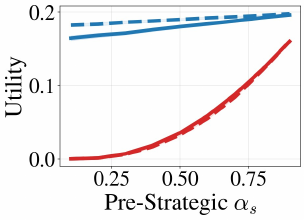}
        \caption{Optimal utility}
        \label{fig:type1_comp_utility_fico}
    \end{subfigure}
        \begin{subfigure}{0.22\textwidth}
        \centering
        \includegraphics[width=\textwidth,trim={0 0 0 0},clip]{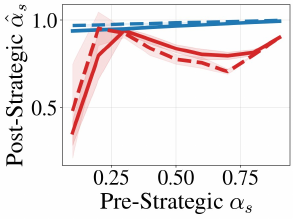}
        \caption{Post-strategic $\hat{\alpha}_s$}
        \label{fig:alpha_post_com_unfair_fico}
    \end{subfigure}
    \begin{subfigure}{0.22\textwidth}
        \centering
        \includegraphics[width=\textwidth,trim={0 0 0 0},clip]{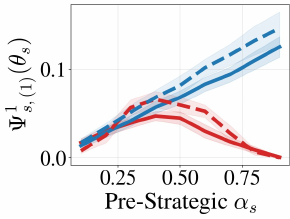}
        \caption{$\boldsymbol{\Psi}^1_{s,(1)}$ (Manipulation)}
        \label{fig:type1_comp_Psi1_fico}
    \end{subfigure}
    \begin{subfigure}{0.22\textwidth}
        \centering
        \includegraphics[width=\textwidth,trim={0 0 0 0},clip]{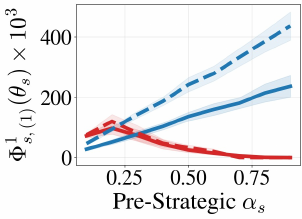}
        \caption{$\boldsymbol{\Phi}^1_{s,(1)}$ (Improvement)}
        \label{fig:type1_comp_Phi1_fico}
    \end{subfigure}
    \begin{subfigure}{0.22\textwidth}
        \centering
        \includegraphics[width=\textwidth,trim={0 0 0 0},clip]{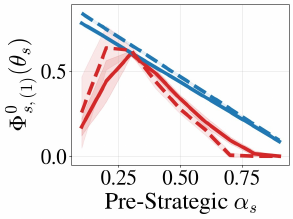}
        \caption{$\boldsymbol{\Phi}^0_{s,(1)}$ (Improvement)}
        \label{fig:type1_comp_Phi0_fico}
    \end{subfigure}
    \begin{subfigure}{0.22\textwidth}
        \centering
        \includegraphics[width=\textwidth,trim={0 0 0 0},clip]{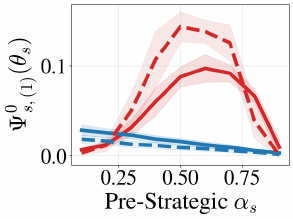}
        \caption{$\boldsymbol{\Psi}^0_{s,(1)}$ (Manipulation)}
        \label{fig:type1_comp_Psi0_fico}
    \end{subfigure}
    \begin{subfigure}{0.2\textwidth}
        \centering
        \includegraphics[width=\textwidth,trim={0 0 0 0},clip]{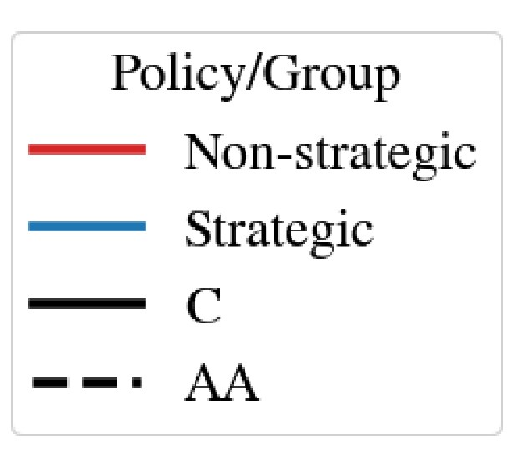}
        \caption{Legend}
        \label{fig:type1_comp_legend}
    \end{subfigure}
    \caption{Unfair policies: non-strategic vs. strategic policies' comparison when varying pre-strategic $\alpha_s$.}
    \label{fig:type1_unfair_vis_all_fico}
\end{figure}

For qualified agents, the effects depend on the group’s initial composition $\alpha_s$ (Figures~\ref{fig:type1_comp_Psi1_fico} and \ref{fig:type1_comp_Phi1_fico}). When a group is majority qualified (high $\alpha_s$), the strategic firm induces both more improvement and manipulation; when it is majority unqualified (low $\alpha_s$), both improvement and manipulation decline as a result of the lower threshold (compared to non-strategic as seen in Figure~\ref{fig:type1_comp_theta_fico}), thereby admits most qualified agents without additional effort. Note that the small corner divergences in the qualified-agents patterns relative to Proposition~\ref{prop:firm_impact_comp} arise because some of its technical assumptions (notably Assumption~\ref{as:sym_bounded_dist}) are relaxed in these experiments. Importantly, both manipulation and improvement by such agents increase firm payoff. As a result, these deviations do not alter our main conclusions, which are driven by the behavior of \emph{unqualified} agents and remain fully consistent with the theory. Consistent with this, Figure~\ref{fig:type1_comp_utility_fico} shows that the strategic firm's utility is significantly higher than that of the non-strategic firm.

A further implication is that the post-strategic qualification rate $\hat{\alpha}_s$ rises more for AA than for C when the initial composition is majority unqualified (Figure~\ref{fig:alpha_post_com_unfair_fico}). This is driven by the pre-strategic score distributions: most unqualified AA agents lie well below the threshold (e.g., many scores fall below $0.5$, which near the optimal strategic threshold as seen in Figure~\ref{fig:type1_comp_theta_fico}), whereas many unqualified C agents are already near or above it ($0.6$) (See Appendix~\ref{app:fico_data}). Consequently, a larger share of AA agents must undertake genuine improvement to be accepted, producing a larger upward shift in $\hat{\alpha}_s$ under strategic behavior.

\begin{figure}[htbp!]
    \centering
        \begin{subfigure}{0.35\textwidth}
        \centering
        \includegraphics[width=\textwidth]{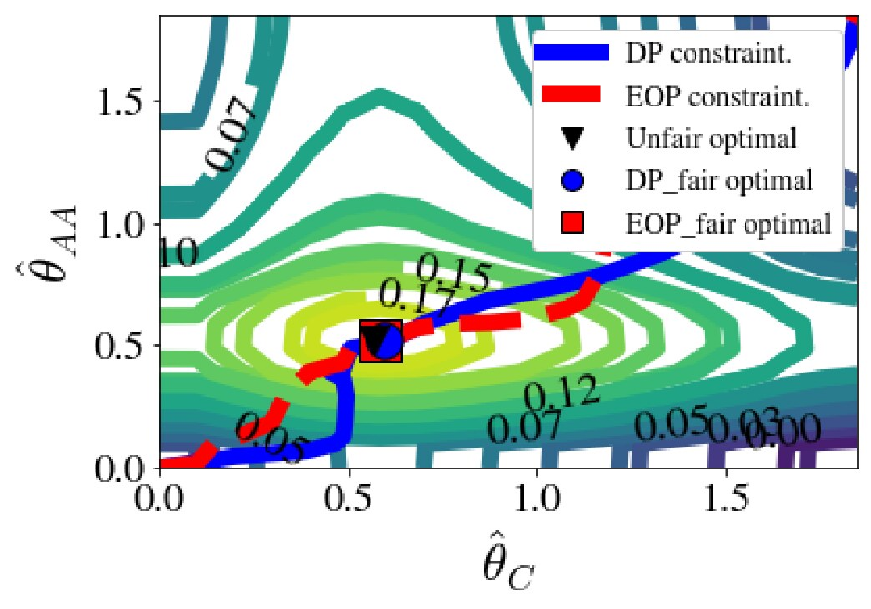}
        \caption{Strategic firm's utility}
        \label{fig:c_aa_3}
    \end{subfigure}
    \quad \quad \quad
    \begin{subfigure}{0.35\textwidth}
        \centering
        \includegraphics[width=\textwidth]{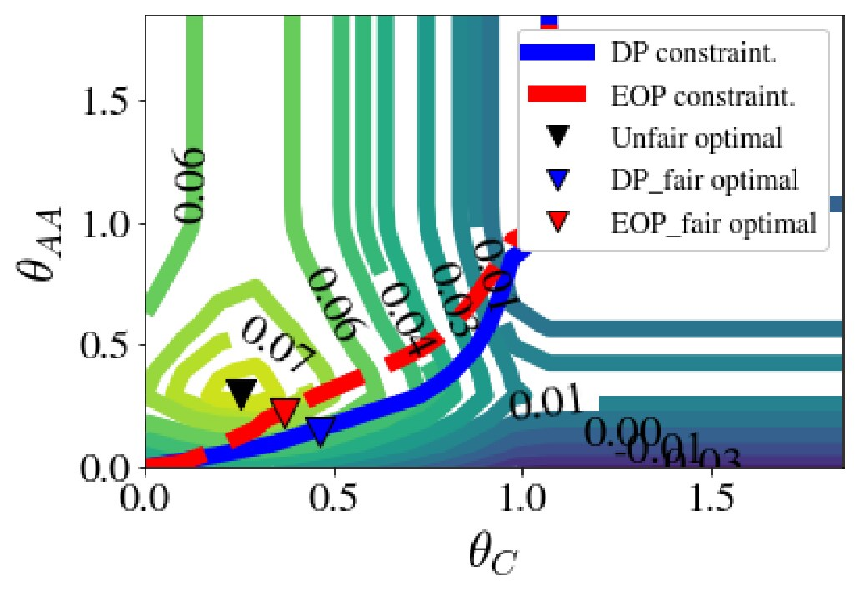}
        \caption{Non-strategic firm's utility}
        \label{fig:c_aa_4}
    \end{subfigure}
    \caption{Firm's utility and optimal thresholds when imposing fairness constraints across Caucasian and African American groups.}
    \label{fig:fair_vis_Type1_c_aa_2}
\end{figure}

\paragraph{Impacts of fairness interventions for strategic vs. non-strategic firms.}
Figure~\ref{fig:fair_vis_Type1_c_aa_2} provides contour visualizations of the firm’s utility surface, together with the corresponding utilities under DP and EOP parity constraints, as the group-specific thresholds $\theta_C$ and $\theta_{AA}$ vary. The optimal thresholds for the unfair, DP-fair, and EOP-fair cases are also indicated in the figure. 
A key distinction is that the strategic firm evaluates fairness over post-strategic statistics, whereas the non-strategic firm evaluates fairness over \emph{pre}-strategic statistics. Consequently, the fairness desiderata of a non-strategic firm are not satisfied ex-post, even though they appear satisfied under its own (incorrect) evaluation.

We first observe that, consistent with both prior literature and Corollary~\ref{cor:fair-policies}, fairness constraints induce both strategic and non-strategic firms to raise the threshold for the majority-qualified group and lower it for the majority-unqualified group. We also observe that the strategic firm (under both fair and unfair policies) selects higher thresholds than the non-strategic firm, in line with Proposition~\ref{prop:firm_impact_comp}.

The more substantive distinction arises when comparing the \emph{type} of fairness intervention. In particular, the gap between strategic and non-strategic firms is much larger under DP fairness than under EOP fairness. The DP-fair threshold chosen by a non-strategic firm (Figure~\ref{fig:c_aa_4}) is substantially lower for AA group, because the firm does not anticipate agents’ strategic responses. This leads to many agents in AA group being accepted by default, while others with even weaker features pass the lowered threshold through manipulation. 

In contrast, a strategic firm lowers the AA group threshold under DP fairness only modestly (Figure~\ref{fig:c_aa_3}), recognizing that parity in selection rates can be achieved by \emph{jointly} adjusting thresholds and inducing improvement responses. Consequently, fairness can be satisfied with far smaller changes to the decision rule, yielding a significantly smaller loss in utility once strategic behavior is accounted for.

\paragraph{Impacts of fairness interventions on agents' post-strategic qualification rates.}
To examine how strategic policies shape agents’ behavior under different fairness interventions (unfair, DP, and EOP), Figure~\ref{fig:c_aa_1} reports the post-strategic qualification rates $\hat{\alpha}_s$ for each group. As noted earlier, the strategic firm is consistently more effective at incentivizing improvement: both the C group and the AA group achieve higher $\hat{\alpha}_s$ under the strategic policy than under the non-strategic one (in Figure~\ref{fig:c_aa_1}, the blue bars are always higher than the red bars). This effect is even more pronounced for AA group, reflecting its initially lower qualification rate.

More importantly, we observe that when a firm ignores strategic responses, the application of fairness interventions can have unintended negative consequences for the disadvantaged group (AA). Specifically, fairness constraints reduce the extent of improvement by AA group, as the red bars under fair policies fall below those under the unfair policy in Figure~\ref{fig:c_aa_1}. The mechanism is straightforward: fairness constraints typically lower the decision threshold for AA group, which shifts behavior toward manipulation rather than genuine improvement. By contrast, a strategic firm internalizes agents’ responsiveness and uses fairness constraints together with threshold adjustment to continue incentivizing improvement, thereby generating higher post-strategic qualification rates for the disadvantaged group (AA).
    
\begin{figure}[hbtp!]
    \centering
        \centering
        \includegraphics[width=0.7\textwidth,trim={0 0 0 0},clip]{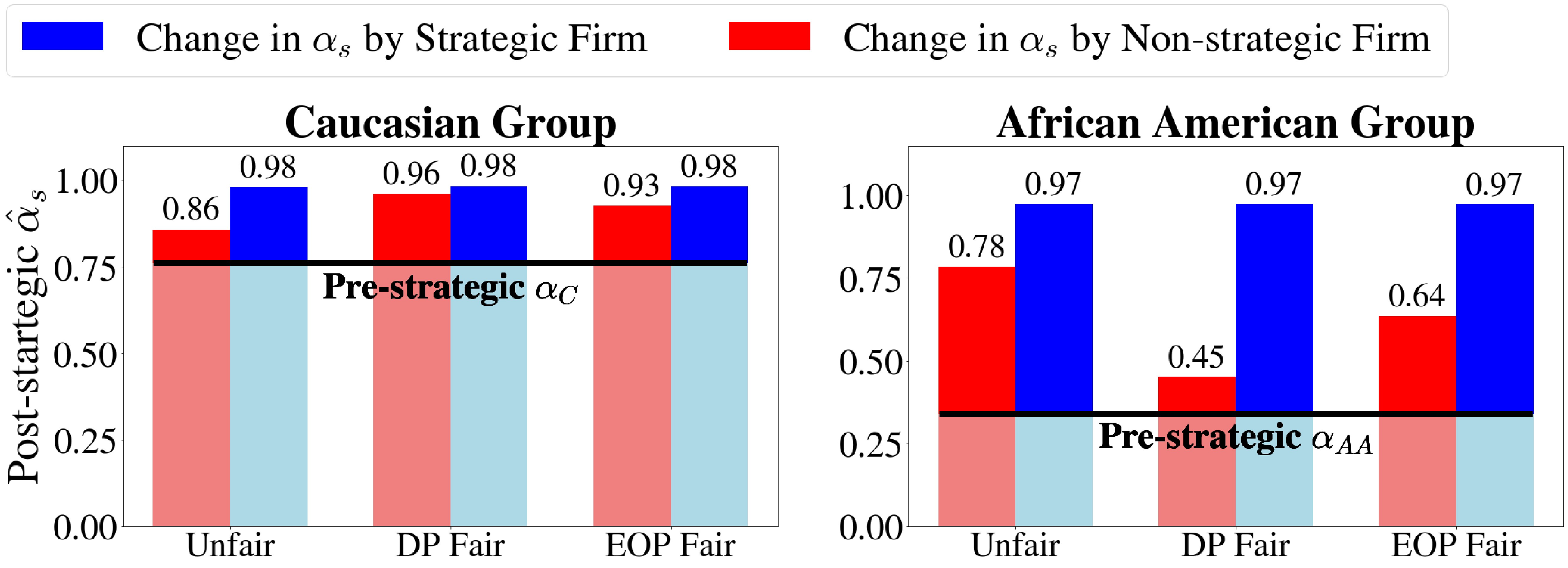}
        \caption{Impact of fair vs.\ unfair and non-strategic vs.\ strategic policies on post-strategic $\hat{\alpha}_s$ across Caucasian and African American groups}
        \label{fig:c_aa_1}
\end{figure}

\section{Conclusion and Future Work}\label{sec:conclusion}
We proposed a Stackelberg framework for strategic classification, where a firm deploys a (fair) classifier, and agents strategically respond. Agents respond by either genuinely improving their qualification and observable features or by manipulating observable features to increase their chances of acceptance. We model heterogeneity in costs, and stochastic effectiveness of these actions yields three types of agents’ best responses based on cost–effectiveness relationships. We find that anticipating these strategic behaviors enables the firm to choose policies that both curb manipulation behavior (as also noted in prior work) and incentivize agents, particularly unqualified ones, to opt for improvement decisions. This shifts unqualified agents away from manipulation towards improvement, yet continues to allow some manipulation by qualified agents who would otherwise be rejected.

We further showed that imposing fairness constraints without anticipating agents' strategic responses can inadvertently reduce improvement incentives for disadvantaged groups, since fairness may be achieved by accepting more individuals from this group. This can be viewed as a negative byproduct of fairness interventions. At the same time, such constraints can increase improvement incentives for advantaged groups as a result of a stricter threshold policy. In contrast, a strategic firm can satisfy fairness while largely maintaining incentives across all groups as it achieves fairness with minimal deviation from the unfair policy \emph{and} by shaping agents’ best responses.

Our analysis assumes similar strategic behavior resources (cost and boost distributions) across groups, and focuses on settings in which all agents (qualified and unqualified) follow the same type of equilibrium best-responses, as characterized in Proposition~\ref{prop:agents-br-generic}. A natural direction for future work is to relax this assumption by allowing asymmetry, which would lead to heterogeneous response types across agents and groups, and yield a richer picture of how incentive design interacts with fairness in practice. An additional promising direction is to extend the model to allow endogenous costs that are shaped by the firm’s policy. 

\bibliography{article}

\newpage
\appendix
\section{Supplementary Material}
\subsection{Proofs for Section~\ref{sec:agents-behavior}}\label{app:proofs-agents-behavior}
\subsubsection{Proof of Proposition \ref{prop:agents-br-generic}}\label{app:proof-agent-best-response}

We begin by establishing the possible orderings between the indifference points of Definition~\ref{def:indiff-features}, under Assumption~\ref{as:cost-efficacy}, 
showing that the cases identified in Proposition~\ref{prop:agents-br-generic} cover all possible equilibrium outcomes given the possible orderings of the four constants $\xindm, \xindi, \xindmi, \xrisk$.

\begin{lemma}\label{lemma:order-indiff-points}  
We have $\xrisk\geq \{\xindmi, \xindm\}$. Further, if $\xindmi$ is unique, either $\xindmi\geq \max\{\xindm, \xindi\}$, or $\xindmi\leq \min\{\xindm, \xindi\}$. Additionally, if $\xindi\leq \xindm$, there is at most one $\xindmi\leq \xindi$. 
\end{lemma}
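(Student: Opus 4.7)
My plan is to prove the three parts of the lemma by working directly with the defining equations of Definition~\ref{def:indiff-features}, leveraging the monotonicity of the CDFs $\mathbb{T}^y_{w,s}$, the FOSD condition in Assumption~\ref{as:cost-efficacy}, and the asymptotic behavior of the boost CDFs.

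\paragraph{Part 1 ($\xrisk\geq\xindmi$ and $\xrisk\geq\xindm$).} These follow by simple rearrangement and monotonicity. For the first, rewrite the defining equation of $\xindmi$ as $\mathbb{T}^y_{M,s}(\theta_s-\xindmi)=(C_{I,s}-C_{M,s})+\mathbb{T}^y_{I,s}(\theta_s-\xindmi)\geq C_{I,s}-C_{M,s}=\mathbb{T}^y_{M,s}(\theta_s-\xrisk)$, and apply monotonicity of $\mathbb{T}^y_{M,s}$. For the second, $\mathbb{T}^y_{M,s}(\theta_s-\xindm)=1-C_{M,s}\geq C_{I,s}-C_{M,s}$, since $C_{I,s}<1$. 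The boundary cases where the $\max\{0,\cdot\}$ in the definitions is active are handled separately but cause no real difficulty: if $\xindm$ (or $\xrisk$) collapses to $0$, the claimed inequality is automatic.

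\paragraph{Parts 2 and 3.} Introduce the indicator function $f(x):=\mathbb{T}^y_{M,s}(\theta_s-x)-\mathbb{T}^y_{I,s}(\theta_s-x)-(C_{I,s}-C_{M,s})$, whose zeros are precisely the $\xindmi$ candidates. The key computation is to use the defining equations of $\xindm$ and $\xindi$ to rewrite
\[
f(\xindm)=\mathbb{T}^y_{I,s}(\theta_s-\xindi)-\mathbb{T}^y_{I,s}(\theta_s-\xindm), \qquad f(\xindi)=\mathbb{T}^y_{M,s}(\theta_s-\xindi)-\mathbb{T}^y_{M,s}(\theta_s-\xindm).
\]
Together with monotonicity of the CDFs, this shows $f(\xindm)$ and $f(\xindi)$ \emph{always carry the same sign}: both strictly positive when $\xindm>\xindi$, both strictly negative when $\xindm<\xindi$, and both zero when $\xindm=\xindi$. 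For Part~2, combine with the asymptotic behavior $f(x)\to-(C_{I,s}-C_{M,s})<0$ for $x\to\pm\infty$ (both CDFs vanish when $x>\theta_s$ and both approach $1$ as $x\to-\infty$). In the case $\xindm>\xindi$, the intermediate value theorem then forces at least one zero in $(-\infty,\xindi)$ and at least one in $(\xindm,\infty)$; any additional zero in $(\xindi,\xindm)$ requires $f$ to dip and return, contributing a second interior zero. Similarly for $\xindm<\xindi$. Thus whenever $\xindmi$ is unique (in the defined range $[\theta_s-\bar b_M,\theta_s-\underline b_I]$), it must lie outside the interval $[\min\{\xindm,\xindi\},\max\{\xindm,\xindi\}]$. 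For Part~3, note that under $\xindi\leq\xindm$ the sign calculation gives $f(\xindi)\geq 0$; in the variable $t=\theta_s-x$, this says $\phi(t):=\mathbb{T}^y_{M,s}(t)-\mathbb{T}^y_{I,s}(t)$ satisfies $\phi(\theta_s-\xindi)\geq C_{I,s}-C_{M,s}$ while $\phi(t)\to 0<C_{I,s}-C_{M,s}$ as $t\to\infty$; the unimodal structure of $\phi$ (which starts at $0$, is nonnegative by FOSD, and returns to $0$) then admits at most one crossing of the level $C_{I,s}-C_{M,s}$ on $[\theta_s-\xindi,\infty)$.

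\paragraph{Main obstacle.} The most delicate step is Part~3, specifically justifying the "at most one crossing" property of $\phi$ past $\theta_s-\xindi$. Unlike Parts~1--2, which use only monotonicity of the CDFs and the intermediate value theorem, Part~3 requires a structural property of $\phi$ on a specific half-line. The cleanest route is to argue that the hypothesis $\xindi\leq\xindm$ already places $\theta_s-\xindi$ at or past the peak of $\phi$: at $t_i=\theta_s-\xindi$ we have $\mathbb{T}^y_{I,s}(t_i)=1-C_{I,s}$ and $\mathbb{T}^y_{M,s}(t_i)\geq 1-C_{M,s}$, forcing $\phi$ to already sit at a value no smaller than $C_{I,s}-C_{M,s}$ and thereafter decay monotonically to $0$. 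If this monotone decay does not follow directly from FOSD alone, one can alternatively appeal to the second-zero-contradiction argument used in Part~2, restricted to the range $(-\infty,\xindi]$: two zeros below $\xindi$ would force a third further still, ultimately contradicting the asymptotic behavior of $f$ established above.
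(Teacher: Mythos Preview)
Your Part~1 and the core algebraic identity you derive in Part~2 match the paper. However, for Part~2 the paper is far more direct: rather than introducing $f$ and locating its zeros via IVT and zero-parity, the paper simply shows $\xindmi\geq\xindm\Rightarrow\xindmi\geq\xindi$ (and the reverse) by the one-line chain
\[
\mathbb{T}^y_{I,s}(\theta_s-\xindmi)=\mathbb{T}^y_{M,s}(\theta_s-\xindmi)-(C_{I,s}-C_{M,s})\leq\mathbb{T}^y_{M,s}(\theta_s-\xindm)-(C_{I,s}-C_{M,s})=1-C_{I,s}=\mathbb{T}^y_{I,s}(\theta_s-\xindi),
\]
followed by monotonicity of $\mathbb{T}^y_{I,s}$. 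This is exactly the identity you use to compute $f(\xindm)$, but the paper applies it to compare $\xindmi$ with $\xindi$ directly, bypassing the IVT detour entirely and without needing the uniqueness hypothesis at this step. Your route instead first manufactures at least two zeros of $f$ on $\mathbb{R}$ (one on each side of $[\xindi,\xindm]$) and then must reconcile this with uniqueness on the restricted domain $[\theta_s-\bar{b}_M,\theta_s-\underline{b}_I]$, a reconciliation you do not carry out.

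Your Part~3 has a genuine gap. The ``unimodal structure of $\phi$'' you invoke does not follow from FOSD: FOSD only gives $\phi\geq 0$, and a nonnegative continuous function that starts and ends at $0$ can have arbitrarily many humps. Your fallback parity argument also fails to close the gap: you correctly observe that $f(-\infty)<0$ and $f(\xindi)\geq 0$ force an odd number of sign changes on $(-\infty,\xindi]$, so exactly two simple zeros is impossible; but nothing in the asymptotic behavior of $f$ rules out three or more, so ``at most one'' does not follow. The paper's own Part~3 argument is brief---it invokes $u_s(0,y,I)<u_s(0,y,M)$ (valid once $\theta_s$ exceeds the maximum boost, so both CDFs equal $1$) together with continuity to place a crossing at some $x\leq\xindi$---and does not spell out uniqueness in any more detail than you do; but your explicit justification via unimodality is incorrect as stated, and the fallback you propose does not repair it.
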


\begin{proof}
    First, note that $(\mathbb{T}^y_{M,s})^{-1}(C_I-C_{M,s}) \leq (\mathbb{T}^y_{M,s})^{-1}(1-C_{M,s})$; this is because the inverse CDF is an increasing function, and $C_{I,s}\leq 1$. Therefore,  $\xindm\leq\xrisk$. Also, by definition, $\mathbb{T}^y_{M,s}(\theta_s-\xindmi) - \mathbb{T}^y_{I,s}(\theta_s-\xindmi) = \mathbb{T}^y_{M,s}(\theta_s-\xrisk)$, and therefore $\xindmi\leq\xrisk$. Finally, note that if $\xindmi\geq \xindm$, then $\mathbb{T}^y_{I,s}(\theta_s-\xindmi) = \mathbb{T}^y_{M,s}(\theta_s-\xindmi) - (C_{I,s}-C_{M,s})\leq \mathbb{T}^y_{M,s}(\theta_s-\xindm) - (C_{I,s}-C_{M,s}) = 1-C_{I,s} = \mathbb{T}^y_{I,s}(\theta_s-\xindi)$, and therefore $\xindmi \geq \xindi$. Similarly, we can show that if $\xindmi\leq \xindm$, then $\xindmi \leq \xindi$. Lastly, if $\xindi\leq \xindm$, and given that $u_s(0,y,I)<u_s(0,y,M)$ due to FOSD,  it must be that the utility of $I$ crosses the utility of $M$ at some $x\leq \xindi$, and therefore $\xindmi\leq \xindi$. 
\end{proof}

Now, note that if $x\geq \theta_s$, the agent is already admitted by the classifier, and finds it optimal to do nothing. As such, only agents with $x<\theta_s$ may opt for manipulation or improvement decisions. For these agents, the probability of being admitted if neither action is taken is zero. Together with Eq.\eqref{eq:agent-benefit}, this means that the utility of such agents with $x<\theta_s$ when choosing $w\in\{M,I\}$ reduces to $u_s(x,y,w)=\mathbb{P}(\hat{x}\geq \theta_s|X=x,Y=y,W=w,S=s)-C_{w,s}$.  

We now proceed by finding the features $\xindm$ at which the agent first finds it beneficial to opt for manipulation over doing nothing. Note that the agent's utility is non-decreasing in $x$. This is because we have assumed adopting either of the two actions $M$ or $I$ weakly increases the agent's feature, and hence (weakly) increases the probability of being admitted by the classifier. This means that if an agent with feature $\bar{x}$ prefers action $w$ over doing nothing, so will all $x>\bar{x}$. 

Recall that $u_s(x,y,N)=0$. Therefore, $\xindm$ is the first x at which $u_s(x,y,M)\geq 0$. This is given by: 
\begin{align*}
    & \mathbb{P}(\hat{x}\geq \theta_s|X=x,Y=y,W=M,S=s) \geq C_{M,s} \quad 
     \Leftrightarrow \quad x \geq \theta_s- (\mathbb{T}^y_{M,s})^{-1}(1-C_{M,s})~.
\end{align*}
Therefore, the first point at which the agent finds it beneficial to opt for $M$ over $N$ is $\xindm = \max\{0, \theta_s- (\mathbb{T}^y_{M,s})^{-1}(1-C_{M,s})\}$. The first point at which the agent benefits from $I$ over $N$ can be similarly found to be $\xindi = \max\{0,\theta_s- (\mathbb{T}^y_{I,s})^{-1}(1-C_{I,s})\}$. Let $z:=\arg\min_{\{M,I\}}\{\xindi, \xindm\}$. 
Then, agents with $0\leq x<\min\{\xindi, \xindm\}$ opt for $N$, while those with $\min\{\xindi, \xindm\}\leq x < \max\{\xindi, \xindm\}$ opt for action $z$. 

Next, given that the improvement action first-order stochastically dominates the manipulation action, it must be that $\underline{b}^y_{M,s}\leq \underline{b}^y_{I,s}$ (and also that $\bar{b}^y_{M,s}\leq \bar{b}^y_{I,s}$). First, we note that once $x\geq \theta_s-\underline{b}^y_{M,s}$ the agent gets admitted with probability 1 with either $M$ or $I$, and therefore would choose the cheaper action. This means that for $\theta_s-\underline{b}^y_{M,s}\leq x< \theta_s$, the agent chooses action $M$ over $I$. Given the continuity of the utility functions under actions $M$ and $I$, we expect this argument to carry for some $x$ smaller than $\theta_s-\underline{b}^y_{M,s}$ as well. 

Specifically, once $\theta_s-\underline{b}^y_{I,s}\leq x\leq \theta_s-\underline{b}^y_{M,s}$, the agent receives sufficient boost to get admitted with probability 1 when choosing action $I$, but is still uncertain when choosing $M$. Formally, improvement has a utility of $1-C_{I,s}$, whereas manipulation has a utility of $1-T^y_{M,s}(\theta_s-x) - C_{M,s}$. Therefore, if $1-T^y_{M,s}(\theta_s-x) - C_{M,s}\geq 1-C_{I,s}$ in this region, or equivalently once $x\geq \xrisk$, the uncertainty from action $M$ is small enough for the agent to choose action $M$ over $I$. Note also that this argument will continue to hold even if the indifference point $\xrisk$ is such that $\xrisk\leq \theta_s-\underline{b}^y_{I,s}$, because this would only increase the uncertainty about $I$, making the utility from choosing $I$ smaller than $1-C_{I,s}$; this means that the utility of action $M$ will still be higher than the utility of $I$ when $x\geq \xrisk$, making action $M$ preferable to $I$.

Finally, agents with $\max\{\xindi, \xindm\}\leq x \leq \min\{\xrisk, \theta_s-\underline{b}^y_{I,s}\}$ (provided the region is non-empty) would benefit from either manipulation or improvement actions over doing nothing, but face uncertainties about making it to an admit decision when opting for these actions, leading to a cost-efficacy trade off between these choices. Formally, define $\Delta u_s(x,y):=u_s(x,y,I)-u_s(x,y,M)$, the difference between the utility of improvement and manipulation. This difference is:
\begin{align}
    \Delta u_s(x,y) &= \mathbb{P}(\hat{x}\geq \theta_s|X=x,Y=y,W=M,S=s)  - C_{M,s} \notag\\
    &\qquad- \left(\mathbb{P}(\hat{x}\geq \theta_s|X=x,Y=y,W=I,S=s)  - C_{I,s}\right) \notag\\
    & = \left(\mathbb{T}^y_{M,s}(\theta_s-x) - \mathbb{T}^y_{I,s}(\theta_s-x)\right) - \left(C_{I,s}-C_{M,s}\right)~.\label{eq:delta-u}
\end{align}
If this difference is positive, the agent will opt for $I$ over $M$. 
Recall that $\xindmi$ is the feature such that $\mathbb{T}^y_{M,s}(\theta_s-\xindmi) - \mathbb{T}^y_{I,s}(\theta_s-\xindmi) = C_{I,s}-C_{M,s}$. This is the point at which the agent is indifferent between the $M$ and $I$ actions. If $M$ is preferred to $I$ before this point, this is the point at which the agent would switch from $M$ to $I$, once $I$ has a non-negative utility (and vice versa for when $I$ is initially preferred to $M$). Using the above characterizations, together with Lemma \ref{lemma:order-indiff-points}, leads to the identified best responses in each case. 

\subsubsection{Proof of Lemma \ref{lemma:post-strategic-stats}}\label{app:proof-post-strategic}
We begin by noting that the new set of qualified ($\hat{Y}=1$) agents consists of previously qualified agents, as well as previously unqualified agents who opted for improvement decisions. Therefore, the new qualification rate is given by $\hat{\alpha}_s = \alpha_s + (1-\alpha_s) \int_{\mathbb{I}^0_s} G^0_s(x)\mathrm{d}x$. 

Next, note that the set of (now) qualified agents with feature ${x}$ consists of the previously qualified agents with the same old feature ${x}$ (who have opted for $w=N$), the previously qualified or unqualified agents with feature $x-b$ who improved to feature $x$ (i.e., opted for $w=I$ and got a boost realization $b$), and previously qualified agents with feature $x-b$ who chose manipulation and reached feature ${x}$ (i.e., opted for $w=M$ and got a boost realization $b$). Thus, 
\begin{align}
\hat{G}^1_s(x) =& \tfrac{\alpha_s}{\hat{\alpha}_s} \big(\mathds{1}(x\in\mathbb{N}^1_s)G^1_s(x) + \int_b \mathds{1}(x-b\in\mathbb{M}^1_s)G^1_s(x-b)\tau_{M,s}^1(b)\mathrm{d}b \notag\\
&+ \int_b \mathds{1}(x-b\in\mathbb{I}^1_s)G^1_s(x-b)\tau_{I,s}^1(b)\mathrm{d}b\big) + \tfrac{1-\alpha_s}{\hat{\alpha}_s} \int_b \mathds{1}(x-b\in\mathbb{I}^0_s)G^0_s(x-b)\tau_{I,s}^0(b)\mathrm{d}b.
\label{eq:post-strategic-feature-dist-1-proof}
\end{align}
We can re-write the expressions for the integrals using a change of variable $z:=x-b$ as follows:
\begin{align*}
    &\int_0^{\infty} \mathds{1}(x-b\in\mathbb{M}^1_s)G^1_s(x-b)\tau_{M,s}^1(b)\mathrm{d}b = \int_{x}^{-\infty} \mathds{1}(z\in\mathbb{M}^1_s)G^1_s(z)\tau_{M,s}^1(x-z)\mathrm{d}(x-z)\\
     = &\int_{-\infty}^x \mathds{1}(z\in\mathbb{M}^1_s)G^1_s(z)\tau_{M,s}^1(x-z)\mathrm{d}z \int_{z\in\mathbb{M}^1_s}G^1_s(z)\tau_{M,s}^1(x-z)\mathrm{d}z~.
\end{align*}
Substituting the above in Eq.\eqref{eq:post-strategic-feature-dist-1-proof} leads to Eq.\eqref{eq:post-strategic-G-1}. 

Using similar arguments, the post-strategic feature distribution of (now) unqualified agents from group $s$ consists of unqualified agents with the same feature who have opted to do nothing, and previously unqualified agents who have opted for manipulation and have reached feature $x$. Thus:
\begin{align}
\hat{G}^0_s(x) =& \tfrac{1-\alpha_s}{1-\hat{\alpha}_s} \big(\mathds{1}(x\in\mathbb{N}^0_s)G^0_s(x) + \int_b \mathds{1}(x-b\in\mathbb{M}^0_s)G^0_s(x-b)\tau_{M,s}^0(b)\mathrm{d}b\big)
\label{eq:post-strategic-feature-dist-0-proof}
\end{align}
Re-writing the integral similar to before leads to Eq.\eqref{eq:post-strategic-G-0}. 

\subsection{Proofs for Section~\ref{sec:firm}}\label{app:proofs-firms-optimal-thresholds}

\subsubsection{Proof of Lemma~\ref{lemma:nonstrategic-unfair-theta}}\label{app:proof-lemma-nonstrategic-unfair-theta} 
    The proof is similar to those in \cite{zhang2022fairness, liao2022social}. The unfair non-strategic firm's utility Eq.\eqref{eq:expected-utility}, and its first derivative, are given by
    \begin{align*}
       U(\theta^U_a, \theta^U_b) &= 
       \sum_s ~ u_+ \alpha_s  (1-\mathbb{G}^1_s(\theta^U_s)) - u_- (1-\alpha_s) (1-\mathbb{G}^0_s(\theta^U_s) ).\\
       \Rightarrow \quad \frac{\partial{U(\theta^U_s})}{\partial{\theta^U_s}} &= - u_+ \alpha_s  G^1_s(\theta^U_s) + u_- (1-\alpha_s) G^0_s(\theta^U_s).
    \end{align*}
where $\mathbb{G}^y_s$ denotes the CDF of the feature distribution of agents with true label $y$ from group $s$. Taking the second derivative, and considering Assumption~\ref{as:strict-monotonicity-assumption}, we can see that the firm's utility is convex in $\theta_s$. Therefore, by setting the first derivative to zero, the optimal thresholds satisfy   $\frac{G^1_s(\theta^U_s)}{G^0_s(\theta^U_s)} = \frac{u_- (1-\alpha_s)}{u_+ \alpha_s}$. 

\subsubsection{Proof of Lemma~\ref{lemma:strategic_unfair_theta}}\label{app:proof_lemma_strategic_unfair_theta}
    From the unfair strategic firm's utility Eq.\eqref{eq:strategic-utility-with-post-statistics} can be written as 
    \begin{align*}
        \hat{U}(\hat{\theta}^U_s) &= \sum_s ~~ u_+ \alpha_s  (1-\mathbb{G}^1_s(\hat{\theta}^U_s)) - u_- (1-\alpha_s) (1-\mathbb{G}^0_s(\hat{\theta}^U_s)) + u_+ \alpha_s \boldsymbol{\Phi}^1_{s,(i)}(\hat{\theta}^U_s) + u_+ \alpha_s \boldsymbol{\Psi}^1_{s,(i)}(\hat{\theta}^U_s) \\ 
        &+ u_+ (1-\alpha_s) \boldsymbol{\Phi}^0_{s,(i)}(\hat{\theta}^U_s) - u_- (1-\alpha_s) \boldsymbol{\Psi}^0_{s,(i)}(\hat{\theta}^U_s).\\
       \frac{\partial{\hat{U}(\hat{\theta}^U_s})}{\partial{\hat{\theta}^U_s}} &= - u_+ \alpha_s  G^1_s(\hat{\theta}^U_s) + u_- (1-\alpha_s) G^0_s(\hat{\theta}^U_s) + u_+ \alpha_s \boldsymbol{\Phi}_{s,(i)}^{'1} + u_+ \alpha_s \boldsymbol{\Psi}_{s,(i)}^{'1} \\
       &+ u_+ (1-\alpha_s) \boldsymbol{\Phi}_{s,(i)}^{'0} - u_- (1-\alpha_s) \boldsymbol{\Psi}_{s,(i)}^{'0}.
    \end{align*}
    where $\boldsymbol{\Phi}_{s,(i)}^{'y}$ and $\boldsymbol{\Psi}_{s,(i)}^{'y}$ are the first derivative of the improvement impact and manipulation impact for agents Y=y at $\hat{\theta}^U_s$, respectively. Therefore, by setting the above equation of the first derivative of the unfair strategic firm's utility to zero we have $\frac{\boldsymbol{\Phi_{s,(i)}^{'1}} + \boldsymbol{\Psi_{s,(i)}^{'1}} - G_s^1(\hat{\theta}^{U}_s) + \frac{(1-\alpha_s)}{\alpha_s} \boldsymbol{\Phi_{s,(i)}^{'0}}}{\boldsymbol{\Psi_{s,(i)}^{'0}} - G_s^0(\hat{\theta}^{U}_s)} = \frac{u_-(1-\alpha_s)}{u_+\alpha_s}. $

It remains to find the specific forms of the first derivatives $\boldsymbol{\Phi}_{s,(i)}^{'y}$ and $\boldsymbol{\Psi}_{s,(i)}^{'y}$ for each equilibrium type \emph{(i)}. 
We show this for agents' best-response Type 1, finding the expressions in the first row in Table~\ref{table:equilibria_ratio}. A similar procedure can be used to find the unfair strategic decision threshold for agents' best-response Type 2 and 3.

From Proposition~\ref{prop:agents-br-generic}, we know that the improvement set $\mathbb{I}^y_{s,(1)}$ is bounded by $\xindi$ and $\xrisk$ for Type 1 best-response, while the manipulation set $\mathbb{M}^y_{s, (i)}$ is bounded By $\xrisk$ and $\theta_s$.  Therefore, we have
    \begin{align*}
       \boldsymbol{\Phi}_{s,(1)}^y({\theta_s}) &= \int_{\xindi}^{\xrisk}  G^y_s(z) \left(1- \mathbb{T}^y_{I,s}({\theta_s} - z)\right) \mathrm{d}z, ~\text{and }
    \boldsymbol{\Psi}_{s,(1)}^y({\theta_s}) &= \int_{\xrisk}^{\theta_s} G^y_s(z)\left(1 - \mathbb{T}^y_{M,s}({\theta_s} - z)\right) \, \mathrm{d}z.
    \end{align*}
We next find the first derivative of these impacts with respect to $\theta_s$. Note that all the limits of the integration are functions of $\theta_s$; hence, we followed the Leibniz integral rule to find these derivatives. We also note that the derivatives of $\xindi$ and $\xrisk$ with respect to $\theta$ are 1. For the improvement impact, we have:
\[\boldsymbol{\Phi}_{s,(1)}^{'y}({\theta_s}) = {G}^y_s(\xrisk) \left(1- \mathbb{T}^y_{I,s}({\theta_s}- \xrisk)\right) - {G}^y_s(\xindi) \left( 1-\mathbb{T}^y_{I,s}({\theta_s} - \xindi)\right) -  \int_{\xindi}^{\xrisk}  G^y_s(z) \tau^y_I({\theta_s} - z) \mathrm{d}z ).\]
    From the definition of the indifference points in Definition~\ref{def:indiff-features}, we have $\mathbb{T}^y_{I,s}({\theta_s} - \xindi) = \mathbb{T}^y_{I,s}((\mathbb{T}^y_{I,s})^{-1}(1-C_{I,s})) = 1-C_{I,s}$. We also know, from Proposition~\ref{prop:agents-br-generic}, that the conditions for this equilibrium to occur are that $\xindmi \leq \xindi \leq \xindm \leq \xrisk$, and therefore, $1-\mathbb{T}^y_{I,s}({\theta_s} - \xrisk)=1$ (i.e., admission under $I$ is certain at $\xrisk$). Therefore: 
    \begin{align}\label{eq:phi_prime_type1}
        \boldsymbol{\Phi}_{s,(1)}^{'y}({\theta_s}) = {G}^y_s(\xrisk) - {G}^y_s(\xindi) C_{I,s}  -  (G^y_s*\tau^y_{I,s})({\theta_s}). 
    \end{align}
    Similarly, for the manipulation impact we have,\[
        \boldsymbol{\Psi}_{s,(1)}^{'y}({\theta_s}) = {G}^y_s(\theta_s)\left(1-\mathbb{T}^y_{M,s}({\theta_s} - {\theta_s})\right) - {G}^y_s(\xrisk) \left(1-\mathbb{T}^y_{M,s}({\theta_s} - \xrisk)\right) - \int_{\xrisk}^{\theta_s}  G^y_s(z) \tau^y_{M,s}({\theta_s} - z) \mathrm{d}z ). \]
    From the definition of the indifference points in Definition~\ref{def:indiff-features}, we have $\mathbb{T}^y_{M,s}({\theta_s} - \xrisk) = \mathbb{T}^y_{M,s}((\mathbb{T}^y_{M,s})^{-1}(C_{I,s}-C_{M,s})) = C_{I,s}-C_{M,s}$. Therefore,
    \begin{align}\label{eq:psi_prime_type1}
        \boldsymbol{\Psi}_{s,(1)}^{'y}({\theta_s}) = {G}^y_s(\theta_s) - {G}^y_s(\xrisk) (1-(C_{I,s}-C_{M,s}))   -   (G^y_s* \tau^y_{M,s})({\theta_s}). 
    \end{align}

By substituting these in the fraction determining the optimal unfair strategic threshold in the statement of Lemma~\ref{lemma:strategic_unfair_theta}, we can find the expressions given in Table~\ref{table:equilibria_ratio}. Note that the derivative of all indifference points defined in Proposition~\ref{prop:agents-br-generic} with respect to the decision threshold $\theta_s$ is 1, specifically, for the \textit{flip feature} $(\xindmi)$ which can be expressed as $\xindmi = \theta_s - (\mathbb{T}^y_{M,s})^{-1}\big( \mathbb{T}^y_{I,s}(\theta_s - \xindmi) + C_{I,s} - C_{M,s} \big).$ Let us define: $g(\theta_s, \xindmi) = \mathbb{T}^y_{I,s}(\theta_s - \xindmi) + C_{I,s} - C_{M,s},$ and $h(g(\theta_s, \xindmi)) = (\mathbb{T}^y_{M,s})^{-1}(g(\theta_s, \xindmi)).$ Then, the derivative of $\xindmi$ with respect to $\theta_s$ can be written as: \[
    \frac{\partial \xindmi}{\partial \theta_s} = 1 - \frac{\partial h(g(\theta_s, \xindmi))}{\partial \theta_s}. \text{
Next, the derivative of $h(g(\theta_s, \xindmi))$ with respect to $\theta_s$ is given by:}\]  
\begin{align*}
    \frac{\partial h(g(\theta_s, \xindmi))}{\partial \theta_s} &= 
\frac{1}{\tau^y_{M,s}\big((\mathbb{T}^y_{M,s})^{-1}(g(\theta_s, \xindmi))\big)} \cdot \frac{\partial g(\theta_s, \xindmi)}{\partial \theta_s}.~=  
\frac{\tau^y_{I,s}(\theta_s - \xindmi)}{\tau^y_{M,s}((\mathbb{T}^y_{M,s})^{-1}(g(\theta_s, \xindmi)))} \cdot \big(1 - \frac{\partial \xindmi}{\partial \theta_s}\big).
\end{align*} Simplifying further, and noting that $\tau^y_{M,s}((\mathbb{T}^y_{M,s})^{-1}(g(\theta_s, \xindmi))) = \tau^y_{M,s}(\theta_s - \xindmi)$, we have:  
\[
\frac{\partial h(g(\theta_s, \xindmi))}{\partial \theta_s} = 
\frac{\tau^y_{I,s}(\theta_s - \xindmi)}{\tau^y_{M,s}(\theta_s - \xindmi)} \cdot \big(1 - \frac{\partial \xindmi}{\partial \theta_s}\big).
\text{ Substituting this back into the expression for $\frac{\partial \xindmi}{\partial \theta_s}$} , \text{ we get:}\] \[  
\frac{\partial \xindmi}{\partial \theta_s} = 1 - \frac{\tau^y_{I,s}(\theta_s - \xindmi)}{\tau^y_{M,s}(\theta_s - \xindmi)} \cdot \big(1 - \frac{\partial \xindmi}{\partial \theta_s}\big). \text{ 
Rearranging terms yields:}  \frac{\partial \xindmi}{\partial \theta_s} = 1.\]

\subsubsection{Proof of Lemmas~\ref{lemma:non-strategic_fair_theta}-\ref{lemma:strategic_fair_theta}}\label{app:fair-threshold-lemmas-proofs} 

Similar to \cite{zhang2022fairness}, from the non-strategic firm's utility with the fairness constraint in Eq.\ref{eq:fair-expected-utility} the optimal threshold satisfies $\mathcal{C}^{f}_{a}(\theta^{\mathcal{C}}_{a}) =\mathcal{C}^{f}_{b}(\theta^{\mathcal{C}}_{b})$, thus we can write one of the groups' fair decision thresholds as $\theta^{\mathcal{C}}_{b} = (\mathcal{C}^{f}_{b})^{-1} ( \mathcal{C}^{f}_{a}(\theta^{\mathcal{C}}_{a}))$. The following holds:
    \begin{align*}
        \frac{\partial{{\theta}^{\mathcal{C}}_b}}{\partial{{\theta}^{\mathcal{C}}_a}} = \frac{\partial{(\mathcal{C}^{f}_{b})^{-1} ( \mathcal{C}^{f}_{a}(\theta^{\mathcal{C}}_{a}))}}{\partial{{\theta}^{\mathcal{C}}_a}} = \frac{\partial{(\mathcal{C}^{f}_{b})^{-1} ( \mathcal{C}^{f}_{a}(\theta^{\mathcal{C}}_{a}))}}{\partial{{\mathcal{C}^{f}_{a}(\theta^{\mathcal{C}}_{a})}}} \frac{\partial{ \mathcal{C}^{f}_{a}(\theta^{\mathcal{C}}_{a})}}{\partial{{\theta}^{\mathcal{C}}_a}} &= \frac{1}{(\mathcal{C}^{'f}_{b})\left( (\mathcal{C}^{f}_{b})^{-1} (\mathcal{C}^{f}_{a}(\theta^{\mathcal{C}}_{a}))\right)} \frac{\partial{ \mathcal{C}^{f}_{a}(\theta^{\mathcal{C}}_{a})}}{\partial{{\theta}^{\mathcal{C}}_a}} \\
        &= \frac{1}{(\mathcal{C}^{'f}_{b})\left( \theta^{\mathcal{C}}_b\right)} \frac{\partial{ \mathcal{C}^{f}_{a}(\theta^{\mathcal{C}}_{a})}}{\partial{{\theta}^{\mathcal{C}}_a}} = \frac{\frac{\partial{ \mathcal{C}^{f}_{a}(\theta^{\mathcal{C}}_{a})}}{\partial{{\theta}^{\mathcal{C}}_a}}}{\frac{\partial{ \mathcal{C}^{f}_{b}(\theta^{\mathcal{C}}_{b})}}{\partial{{\theta}^{\mathcal{C}}_b}}}.
    \end{align*}
    then the firms' utility can be written in terms of $\theta^{\mathcal{C}}_s$ as, 
    \begin{align*}
    U(\theta^{\mathcal{C}}_a,\theta^{\mathcal{C}}_b) &= n_a \left( u_+ \alpha_a (1- \mathbb{G}^1_a(\theta^{\mathcal{C}}_a)) - u_- (1-\alpha_a) (1-\mathbb{G}^0_a(\theta^{\mathcal{C}}_a)) \right)\\ &+ n_b \left( u_+ \alpha_b  (1-\mathbb{G}^1_b((\mathcal{C}^{f}_{b})^{-1} ( \mathcal{C}^{f}_{a}(\theta_{a})))) - u_- (1-\alpha_b) (1- \mathbb{G}^0_b((\mathcal{C}^{f}_{b})^{-1} ( \mathcal{C}^{f}_{a}(\theta_{a}))) )\right).\\
        \frac{\partial{U(\theta^{\mathcal{C}}_a,\theta^{\mathcal{C}}_b)}}{\partial{\theta^{\mathcal{C}}_a}} 
        &= n_a \left(- u_+ \alpha_a  G^1_a(\theta^{\mathcal{C}}_a) + u_- (1-\alpha_a) G^0_s(\theta^{\mathcal{C}}_a) \right) \\ &+ n_b \left( -u_+ \alpha_b G^1_b(\theta^{\mathcal{C}}_{b}))  \frac{\frac{\partial{ \mathcal{C}^{f}_{a}(\theta^{\mathcal{C}}_{a})}}{\partial{{\theta}^{\mathcal{C}}_a}}}{\frac{\partial{ \mathcal{C}^{f}_{b}(\theta^{\mathcal{C}}_{b})}}{\partial{{\theta}^{\mathcal{C}}_b}}} + u_- (1-\alpha_b) G^0_b(\theta^{\mathcal{C}}_{b})  \frac{\frac{\partial{ \mathcal{C}^{f}_{a}(\theta^{\mathcal{C}}_{a})}}{\partial{{\theta}^{\mathcal{C}}_a}}}{\frac{\partial{ \mathcal{C}^{f}_{b}(\theta^{\mathcal{C}}_{b})}}{\partial{{\theta}^{\mathcal{C}}_b}}} \right).
    \end{align*}
    By setting the first derivative to zero and re-arrange terms we have $\theta_s^{\mathcal{C}} $ satisfy: $\linebreak[4] \sum_s n_s\frac{u_+ \alpha_s G^1_s(\theta_s^{\mathcal{C}}) - u_- (1-\alpha_s)G^0_s(\theta_s^{\mathcal{C}}) }{\frac{\partial{\mathcal{C}^{f}_{s}(\theta_s^{\mathcal{C}})}}{\partial{\theta_s^{\mathcal{C}}}}} = 0$.

A similar approach can prove Lemma~\ref{lemma:strategic_fair_theta} using the strategic utility from~\eqref{eq:strategic-utility-with-post-statistics}. 

\subsection{Proofs for Section~\ref{sec:effect-strateg-prediction}}\label{app:proofs-anticipating-strategic}

\subsubsection{Proof of Proposition~\ref{prop:firm_impact_comp}}\label{sec:proof_Prop2_coro1}

For this part of the proof, we analyze the selected optimal unfair thresholds ($\theta^U_s, \hat{\theta}^U_s$) over different groups' qualification status ($\alpha_s$) and agents' best response types (Type 1 and 3).
 Let $\alpha_s = 0$ such that all agents are unqualified, the unfair non-strategic firm's utility in equation~\ref{eq:expected-utility} can be reduced to $U(\theta_s) = - u_- (1-\mathbb{G}^0_s(\theta_s))$, hence to achieve the maximum utility the optimal threshold is $\theta_s = \overline{x}^0_s$ which means rejecting all the unqualified agents. Moreover, Let $\alpha_s = 1$ such that all agents are qualified, the unfair non-strategic firm's utility in equation~\ref{eq:expected-utility} can be reduced to $U(\theta_s) = u_+ (1-\mathbb{G}^1_s(\theta_s))$, hence the optimal threshold is $\theta_s = \underline{x}^1_s$ which means accepting majority all qualified agents. 
 Note as the group qualification rate ($\alpha_s$) decreases, the unfair non-strategic optimal threshold increases to reject the more unqualified agents. To summarize, the unfair non-strategic optimal policy satisfies the following:  $[\underline{x}^1_s, \overline{x}^0_s]$. Under Assumption~\ref{as:relation_of_boost}, $\mathbf{o}^y_{M,s}$ and $\mathbf{r}^y_{s}$ are equal across agents, so we denote them as $\mathbf{o}_{M,s}$ and $\mathbf{r}_{s}$, omitting the $y$ label in this proof.
 To establish a unique optimal strategic utility under Type 3 agent responses, we analyze the derivative from Table~\ref{table:equilibria_ratio}:
\begin{align*}
\frac{\partial{\hat{U}(\hat{\theta}^U_s)}}{\partial{\hat{\theta}^U_s}} =& u_+ \alpha_s \left(- G^1_s(\mathbf{o}_{M,s}) C_{M,s} - (G^1_s * \tau_{M,s})(\hat{\theta}^U_s)\right) \\ &+ u_- (1-\alpha_s) \left(G^0_s(\mathbf{o}_{M,s}) C_{M,s} + (G^0_s * \tau_{M,s})(\hat{\theta}^U_s)\right).    
\end{align*}

For $\hat{\theta}^U_s \leq \mu^0_s$, the derivative is positive, led by $(G^0_s * \tau_{M,s})(\hat{\theta}^U_s)$. Per Proposition~\ref{prop:firm_impact_comp} ($\underline{x}^1_s - \underline{x}^0_s > (\mathbb{T}^y_{M,s})^{-1}(1-C_{M,s})$), $\mathbf{o}_{M,s} > \underline{x}^0_s$ before $\hat{\theta}^U_s > \underline{x}^1_s$, so $G^0_s(\mathbf{o}_{M,s})$ rises, boosting the derivative as $\hat{\theta}^U_s$ nears $\underline{x}^1_s$. Past $\underline{x}^1_s$, it drops as $(G^1_s * \tau_{M,s})(\hat{\theta}^U_s)$ grows, possibly turning negative or not. The fall persists as $\mathbf{o}_{M,s} > \mu^0_s$ approaches $\underline{x}^1_s$. At $\mathbf{o}_{M,s} > \overline{x}^0_s$, only negative terms $\left(- G^1_s(\mathbf{o}_{M,s}) C_{M,s} - (G^1_s * \tau_{M,s})(\hat{\theta}^U_s)\right)$ remain, forcing the derivative negative. As $\hat{\theta}^U_s \geq \overline{x}^1_s$, the $(G^1_s * \tau_{M,s})(\hat{\theta}^U_s)$ shrinks, lifting the derivative until $\mathbf{o}_{M,s} > \overline{x}^1_s$, where it reaches zero, proving a unique optimum.

Next, we bound the convolution terms by treating the convolution as a weighted average of $G^y_s(x)$ over the interval $\mathbb{W}^y_s$, with weights $\tau^y_{w,s}(\hat{\theta}^U_s-x)$. The upper bound is, 
\[(G^y_s * \tau_{w,s})(\hat{\theta}^U_s) \leq G^y_s(z_{max}) \int_{u\in\mathbb{W}^y_s} \tau^y_{w,s}(\hat{\theta}^U_s-u) \mathrm{d}u,\] where $z_{max} = \arg\max_{z\in\mathbb{W}^y_s} G^y_s(z)$. Furthermore, the lower bound is,
\[(G^y_s * \tau_{w,s})(\hat{\theta}^U_s) \geq G^y_s(z_{min}) \int_{u\in\mathbb{W}^y_s} \tau^y_{w,s}(\hat{\theta}^U_s-u) \mathrm{d}u,\] where $z_{min} = \arg\min_{z\in\mathbb{W}^y_s} G^y_s(z)$. For instance, under Type 1 equilibrium $\mathbb{I}^0_s = [\mathbf{o}^0_{I,s},\mathbf{r}_s]$, if $\mathbf{o}^0_{I,s}<\mathbf{r}_s < \mu^0_s$ then the upper bound $(G^0_s * \tau^0_{I,s})(\hat{\theta}^U_s) \leq G^0_s(\mathbf{r}_s)(1-C_{I,s})$. Moreover, $\mathbb{M}^1_s = [\mathbf{r}_s, \hat{\theta}^U_s]$, if $\mathbf{r}_s < \mu^1_s$ the lower bound $(G^1_s * \tau_{M,s})(\hat{\theta}^U_s) \geq G^1_s(\mathbf{r}_s)(C_{I,s}-C_{M,s})$.

For Type 1 agent best responses, the firms' utility first derivative from Table~\ref{table:equilibria_ratio} is:
\begin{align*}
 \frac{\partial{\hat{U}(\hat{\theta}^U_s)}}{\partial{\hat{\theta}^U_s}} =& u_+ \alpha_s \left(G^1_s(\mathbf{r}_s)(C_{I,s}-C_{M,s}) - G^1_s(\mathbf{o}^1_{I,s}) C_{I,s} - (G^1_s * \tau_{M,s})(\hat{\theta}^U_s) - (G^1_s * \tau^1_{I,s})(\hat{\theta}^U_s)\right)  \\& + u_+ (1-\alpha_s) \left(G^0_s(\mathbf{r}_s) - G^0_s(\mathbf{o}^0_{I,s})C_{I,s} - (G^0_s * \tau^0_{I,s})(\hat{\theta}^U_s)\right)  \\&+ u_- (1-\alpha_s) \left(G^0_s(\mathbf{r}_s) (1 - (C_{I,s} - C_{M,s})) + (G^0_s * \tau_{M,s})(\hat{\theta}^U_s)\right). 
 \end{align*} 

For $\hat{\theta}^U_s < \underline{x}^0_s + (\mathbb{T}_{M,s})^{-1}(C{I,s} - C_{M,s})$, the derivative is positive, driven by $(G^0_s * \tau_{M,s})(\hat{\theta}^U_s)$. It rises as $\hat{\theta}^U_s$ nears $\mu^0_s + (\mathbb{T}_{M,s})^{-1}(C_{I,s} - C_{M,s})$ due to increasing $G^0_s(\mathbf{r}_s)$, and per convolution bounds, $(G^0_s * \tau^0_{I,s})(\hat{\theta}^U_s) \leq G^0_s(\mathbf{r}_s)(1-C_{I,s}) < G^0_s(\mathbf{r}_s)$. Near $\underline{x}^1_s$, the derivative drops as $\mathbf{r}_s > \mu^0_s$ (positive terms shrink, while negative $(G^0_s * \tau^0_{I,s})(\hat{\theta}^U_s)$ grows with $\mathbb{I}^0_s$), possibly staying positive or turning negative. Beyond $\underline{x}^1_s$, it keeps decreasing as $(G^1_s * \tau_{M,s})(\hat{\theta}^U_s)>0$ grows. When $\underline{x}^1_s \leq \mathbf{r}_s \leq \mu^1_s$, bounds shows $(G^1_s * \tau_{M,s})(\hat{\theta}^U_s) \geq G^1_s(\mathbf{r}_s)(C_{I,s}-C_{M,s})$ and $(G^1_s * \tau^1_{I,s})(\hat{\theta}^U_s) > 0$, ensuring a decline. Given Proposition~\ref{prop:firm_impact_comp} ($\overline{x}^0_s +(\mathbb{T}^y_{M,s})^{-1}(C_{I,s}-C_{M,s})  < \mu^0_s + \underline{b}^0_{I,s}$), $\mu^1_s>\mathbf{r}_s > \overline{x}^0_s$ before $\mathbf{o}^0_{I,s} > \mu^0_s$, so $G^0_s(\mathbf{r}_s)$ and $(G^0_s * \tau_{M,s})(\hat{\theta}^U_s)$ hits zero, turning the derivative negative. As $\mathbf{o}^0_{I,s}$ approaching $\mu^0_s$, only negative terms persist, driving further decrease. When $\mathbf{o}^0_{I,s} > \mu^0_s$, $\left(-G^0_s(\mathbf{o}^0_{I,s})C_{I,s} - (G^0_s * \tau^0_{I,s})(\hat{\theta}^U_s)\right)$ shrinks, increasing the derivative, yet it stays negative since $(G^1_s * \tau_{M,s})(\hat{\theta}^U_s) \geq G^1_s(\mathbf{r}_s)(C_{I,s}-C_{M,s})$. As $\hat{\theta}^U_s$ rises, the derivative increases, still negative, with $\mathbf{o}^1_{I,s} < \mu^1_s < \mathbf{r}_s$, and bound $(G^1_s * \tau^1_{I,s})(\hat{\theta}^U_s) \geq G^1_s(\mu^1_s)(1-C_{I,s}) \geq G^1_s(\mathbf{r}_s)(1-C_{I,s})$. Then, as $G^0_s(\mathbf{o}^0_{I,s}) = 0$ while $\mathbf{o}^1_{I,s} < \mu^1_s$, causing the derivative to decrease again. As $G^1_s(\mathbf{r}_s) = 0$ and $\mathbf{o}^1_{I,s}$ nears $\mu^1_s$, it continues decreasing until $\mathbf{o}^1_{I,s} > \mu^1_s$, then rises until $G^1_s(\mathbf{o}^1_{I,s}) = 0$, hitting zero, confirming a unique optimum. 
      
Under Type 1 agent best responses, when $\alpha_s = 0$, the unfair strategic firm's utility (Eq.~\ref{eq:strategic-utility-with-post-statistics}) simplifies to $\hat{U}(\hat{\theta}^U_s) = u_+ \Phi^0_{s,(1)}(\hat{\theta}^U_s) - u_- (1 - \mathbb{G}^0_s(\hat{\theta}^U_s)) - u_- \Psi^0_{s,(1)}(\hat{\theta}^U_s)$. At $\hat{\theta}^U_s = \overline{x}^0_s + (\mathbb{T}^0_{M,s})^{-1}(C_{I,s} - C_{M,s})$, it is $\hat{U}(\overline{x}^0_s + (\mathbb{T}^0_{M,s})^{-1}(C_{I,s} - C_{M,s})) = u_+ \Phi^0_{s,(1)}(\overline{x}^0_s + (\mathbb{T}^0_{M,s})^{-1}(C_{I,s} - C_{M,s}))$. Notably, utility decreases for $\hat{\theta}^U_s > \overline{x}^0_s + (\mathbb{T}^0_{M,s})^{-1}(C_{I,s} - C_{M,s})$, where $|\mathbb{I}^0_{s,(1)}|$ declines as $\xrisk > \overline{x}^0_s$. Thus, the optimal unfair strategic threshold $\hat{\theta}^U_s \leq \overline{x}^0_s + (\mathbb{T}^0_{M,s})^{-1}(C_{I,s} - C_{M,s})$
   
Next we show that the impacts of the strategic behavior are single peaked. For a fixed value of \(\theta_s\), let 
$h^y_{M,s}(z; \theta_s) := G^y_s(z) \left( 1 - \mathbb{T}^y_{M,s}(\theta_s - z) \right).$ The function \(h^y_{M,s}(z; \theta)\) is single-peaked at $z^{*}$ because \(G^y_s(z)\) is symmetric, and \(\left( 1 - \mathbb{T}^y_{M,s}(\theta - z) \right) \in [0,1]\). Moreover, \[\left( 1 - \mathbb{T}^y_{M,s}(\theta - z') \right) =
\begin{cases} 
1, & \text{if } z' \geq \theta - \underline{b}^y_{M,s}, \\
0, & \text{if } z' \leq \theta - \overline{b}^y_{M,s}, \\
(0, 1), & \text{otherwise}.
\end{cases}\]

We can re-write $\boldsymbol{\Psi}^y_{s,(3)}(\theta_s)= \int_{\mathbf{o}_{M,s}}^{\theta_s} h^y_{M,s}(z;\theta) \mathrm{d}z$, and because the integration limits depends on $\theta_s$, the following holds: 
If $ \mathbf{o}_{M,s} < \theta_s \leq z^{*}, \int_{\mathbf{o}_{M,s}}^{\theta_s} h^y_{M,s}(z;\theta_s)\mathrm{d}z $ increases as $\theta_s$ approaches the $z^{*}$, and If  $z^{*} \leq \mathbf{o}_{M,s} < \theta_s, \int_{\mathbf{o}_{M,s}}^{\theta_s} h^y_{M,s}(z;\theta_s)\mathrm{d}z$ decreases as $\mathbf{o}_{M,s}$ moves away from the $z^{*}$. Therefore, $\boldsymbol{\Psi}^y_{s,(3)}(\theta_s)$ is also single-peaked. Similarly, $\boldsymbol{\Psi}^y_{s,(i)}(\theta_s)$ for $i \in \{1,2,3\} $, and $\boldsymbol{\Phi}^y_{s,(i)}(\theta_s) $ for $i \in \{1,2\}$  are also single peaked. 

We define conditions for the peak values of $\boldsymbol{\Phi}^y_{s,(i)}(\theta_s)$ and $\boldsymbol{\Psi}^y_{s,(i)}(\theta_s)$, $y \in {0,1}$. For $\arg \max_{\theta_s} \boldsymbol{\Psi}^y_{s,(i)}(\theta_s)$, $i \in {1,3}$, $\theta_s \leq \mu^y_s + \underline{b}_{M,s}$ and $\xrisk < \mu^y_s$ ensure agents at $\mu^y_s$ \emph{manipulate} and are accepted with minimal boost $\underline{b}_{M,s}$. For $\arg \max_{\theta_s} \boldsymbol{\Phi}^y_{s,(1)}(\theta_s)$, $\theta_s \leq \mu^y_s + \underline{b}^y_{I,s}$ and $\xrisk > \mu^y_s$ compel agents at $\mu^y_s$ to \emph{improve} and gain acceptance with minimal boost. Given $\overline{x}^0_s - \underline{x}^0_s < \xrisk - \mathbf{o}^0_{I,s}$, if $\theta_s > \overline{x}^0_s + (\mathbb{T}^y_{M,s})^{-1}(C_{I,s} - C_{M,s})$ ($\xrisk > \overline{x}^0_s$), $|\mathbb{I}^0_{s,(1)}|$ remains unchanged, so $\arg \max_{\theta_s} \boldsymbol{\Phi}^0_{s,(1)}(\theta_s) \in [\overline{x}^0_s + (\mathbb{T}^y_{M,s})^{-1}(C_{I,s} - C_{M,s}), \mu^0_s + \underline{b}^0_{I,s}]$.

Given the analysis of optimal thresholds, we first order the policies as in Proposition~\ref{prop:firm_impact_comp} and then rank their impacts as per parts (i), (ii), and (iii) of the same proposition. The maximum value of the optimal unfair strategic threshold, $\hat{\theta}^U_s$, under Type 1 agent responses is $\hat{\theta}^U_s \leq \overline{x}^0_s + (\mathbb{T}^0_{M,s})^{-1}(C_{I,s} - C_{M,s}) < \mu^0_s + \underline{b}^0_{I,s} < \mu^1_s$. Thus, $\hat{\theta}^U_s$ is always less than $\arg\max \boldsymbol{\Phi}^y_{s,(1)}(\theta_s)$ and $\arg\max \boldsymbol{\Psi}^1_{s,(1)}(\theta_s)$, placing it in the first half of both $\boldsymbol{\Phi}^y_{s,(1)}(\theta_s)$ and $\boldsymbol{\Psi}^1_{s,(1)}(\theta_s)$. 
The minimum $\hat{\theta}^U_s > \mu^0_s + \underline{b}_{M,s}>\arg\max \boldsymbol{\Psi}^0_{s,(1)}(\theta_s)$ (second half of $\boldsymbol{\Psi}^0_{s,(1)}(\theta_s))$.

For the optimal non-strategic threshold, $\theta^U_s < \overline{x}^0_s < \mu^0_s + \underline{b}^0_{I,s} < \mu^1_s$, it is always less than $\arg\max \linebreak[4] \boldsymbol{\Phi}^y_{s,(1)}(\theta_s)$ and $\arg\max \boldsymbol{\Psi}^1_{s,(1)}(\theta_s)$, residing in the first half of $\boldsymbol{\Phi}^y_{s,(1)}(\theta_s)$ and $\boldsymbol{\Psi}^1_{s,(1)}(\theta_s)$. Its minimum, $\theta^U_s > \underline{x}^1_s > \mu^0_s + \underline{b}_{M,s}$, exceeds $\arg\max \boldsymbol{\Psi}^0_{s,(1)}(\theta_s)$, placing it in the second half of $\boldsymbol{\Psi}^0_{s,(1)}(\theta_s)$. Since the optimal strategic threshold always exceeds the non-strategic one, and both lie in the first half of $\boldsymbol{\Phi}^y_{s,(1)}(\theta_s)$ and $\boldsymbol{\Psi}^1_{s,(1)}(\theta_s)$, their impacts can be ordered as $\boldsymbol{\Phi}^y_{s,(1)}(\theta^U_s) < \boldsymbol{\Phi}^y_{s,(1)}(\hat{\theta}^U_s)$ and $\boldsymbol{\Psi}^1_{s,(1)}(\theta^U_s) < \boldsymbol{\Psi}^1_{s,(1)}(\hat{\theta}^U_s)$. Additionally, as both reside in the second half of $\boldsymbol{\Psi}^0_{s,(1)}(\theta_s)$, their impacts order as $\boldsymbol{\Psi}^0_{s,(1)}(\hat{\theta}^U_s) < \boldsymbol{\Psi}^0_{s,(1)}(\theta^U_s)$.

Following the same analysis for Type 3 agent responses, the lowest possible optimal unfair strategic threshold, $\hat{\theta}^U_s > \mu^0_s + \underline{b}_{M,s}$, exceeds $\arg\max \boldsymbol{\Psi}^0_{s,(3)}(\theta_s)$, placing it in the second half of $\boldsymbol{\Psi}^0_{s,(3)}$. 
The highest possible $\hat{\theta}^U_s$, $\overline{x}^0_s + (\mathbb{T}_{M,s})^{-1}(1 - C_{M,s}) < \mu^1_s + \underline{b}_{M,s}$, is less than $\arg\max \boldsymbol{\Psi}^1_{s,(3)}(\theta_s)$. Since the non-strategic optimal analysis mirrors the prior section, we can order the impacts as follows: $\boldsymbol{\Psi}^0_{s,(3)}(\hat{\theta}^U_s) < \boldsymbol{\Psi}^0_{s,(3)}(\theta^U_s)$ and $\boldsymbol{\Psi}^1_{s,(1)}(\theta^U_s) < \boldsymbol{\Psi}^1_{s,(1)}(\hat{\theta}^U_s)$.
\subsubsection{Impacts of relaxing the assumptions of Proposition~\ref{prop:firm_impact_comp}}\label{app:relax_assumptions_prop2}

We start with assumptions $\overline{x}^0_s +(\mathbb{T}^y_{M,s})^{-1}(C_{I,s}-C_{M,s})  < \mu^0_s + \underline{b}^0_{I,s}$, and $ \overline{x}^0_s - \underline{x}^0_s < \xrisk - \mathbf{o}^0_{s,I}$ imposed within the proposition. If these are violated, for $\alpha_s < \xi =  \frac{u_-}{u_- + u_+ }$ (which, intuitively, means a majority-unqualified population of agents),  part (iv) of the proposition would be flipped: $\boldsymbol{\Phi}^0_{s,(1)}(\theta^{U}_s) > \boldsymbol{\Phi}^0_{s,(1)}(\hat{\theta}^{U}_s)$. This is because under the affordable improvement (Type 1) equilibrium and with a majority-unqualified population of agents, the strategic firm has to increase the threshold considerable to severely restrict successful manipulation by the many unqualified agents, which in turn also reduces the number of unqualified agents, who can afford to opt for improvement compared to the non-strategic firm's case. 
This is illustrated in Figure~\ref{fig:beyond_prop_2_(ii),(iii)}, which compares the strategic behavior impact of all agents' at the optimal policies among different levels of $\alpha_s$, the difference from Proposition~\ref{prop:firm_impact_comp} is shown in the fourth picture from the left. 
\begin{figure}[htb]
    \centering
    \includegraphics[width=1\linewidth]{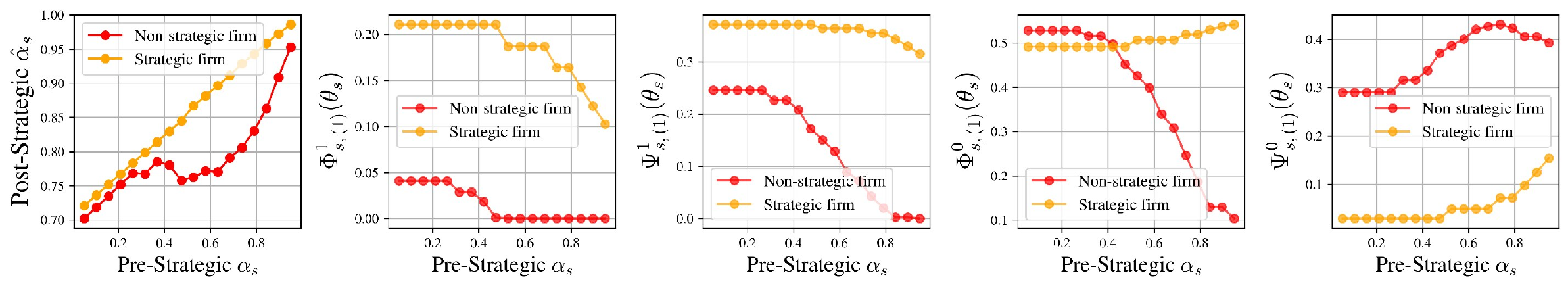}
    \caption{Impacts of removing  $\overline{x}^0_s + (\mathbb{T}^y_{M,s})^{-1}(C_{I,s}-C_{M,s})  < \mu^0_s + \underline{b}^0_{I,s}$, and $ \overline{x}^0_s - \underline{x}^0_s < \xrisk - \mathbf{o}^0_{s,I}$ from Prop.~\ref{prop:firm_impact_comp}.}
    \label{fig:beyond_prop_2_(ii),(iii)}
\end{figure}

Next, if Assumption~\ref{as:sym_bounded_dist} is relaxed (i.e., $\underline{x}^1_s < \mu^0_s < \mu^1_s < \overline{x}^0_s$), the conditions of Proposition~\ref{prop:firm_impact_comp} no longer hold. Specifically, this relaxation always violates $\overline{x}^0_s + (\mathbb{T}^y_{M,s})^{-1}(C_{I,s} - C_{M,s}) < \mu^1_s$, and $\underline{x}^1_s > \mu^0_s + \underline{b}_{M,s}$ either alone or alongside the other conditions such as $\mu^0_s+\underline{b}^0_{I,s}<\mu^1_s$ and/or $\overline{x}^0_s - \underline{x}^0_s < \xrisk - \mathbf{o}^0_{s,I}$. For $\alpha_s < \xi = \frac{u_-}{u_- + u_+}$ (a majority-unqualified population), part (iv) of Proposition~\ref{prop:firm_impact_comp} flips, consistent with the prior analysis, yielding $\boldsymbol{\Phi}^0_{s,(1)}(\theta^{U}s) > \boldsymbol{\Phi}^0_{s,(1)}(\hat{\theta}^{U}s)$. Here, the strategic firm imposes a high threshold to curb successful manipulation by the many unqualified agents, but this also limits successful manipulation by the few qualified agents. Consequently, part (iii) of the proposition reverses ($\boldsymbol{\Psi}^1_{s,(1)}(\theta^{U}s) > \boldsymbol{\Psi}^1_{s,(1)}(\hat{\theta}^{U}_s)$). 

Conversely, when $\alpha_s > \xi$ (indicating a majority-qualified population), firms adjust their thresholds to accommodate the many qualified agents, aligning with the reasoning in Proposition~\ref{prop:firm_impact_comp}. The non-strategic firm, unaware of agents’ strategic behavior, lowers its threshold substantially, reducing the need for strategic actions by agents as more are accepted by default. In contrast, the strategic firm, which accounts for this behavior, lowers its threshold less aggressively. This smaller reduction ultimately increases both the number of agents who successfully manipulate to gain acceptance and the number of unqualified agents who can afford to opt for improvement. Accordingly, part (ii) of Proposition~\ref{prop:firm_impact_comp}  flips yielding $\boldsymbol{\Psi}^0_{s,(1)}(\theta^{U}s) < \boldsymbol{\Psi}^0_{s,(1)}(\hat{\theta}^{U}_s)$. This is depicted in Figure~\ref{fig:beyond_ass_3}, which evaluates the impact of strategic behavior across all agents under optimal policies for different levels of $\alpha_s$. The deviation from Proposition~\ref{prop:firm_impact_comp} is evident in the third through fifth pictures from the left.

\begin{figure}[htb]
    \includegraphics[width=1\linewidth]{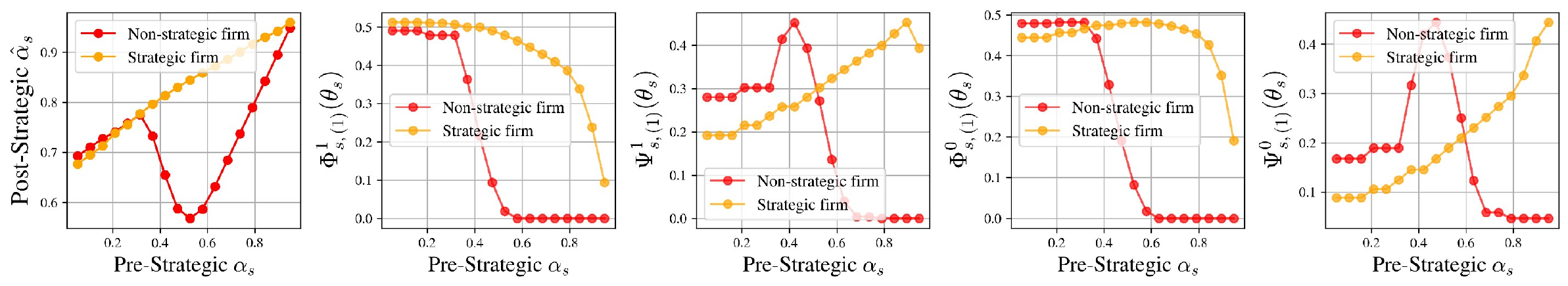}
    \caption{Impacts of removing Assumption~\ref{as:sym_bounded_dist} from Proposition~\ref{prop:firm_impact_comp}.}
    \label{fig:beyond_ass_3}
\end{figure}

\subsubsection{Detailed Intuitive interpretation of Proposition~\ref{prop:firm_impact_comp}}\label{sec:intuition_prop2}
In this appendix, we provide a walkthrough interpretation using the illustrations in Figure~\ref{fig:startegic_impact_agents_behavior}.  
In this figure, the horizontal, colored bars show the different regions of agents' strategic best-responses under the costly improvement (Type 3) and affordable improvement (Type 1) equilibria (as shown in Proposition~\ref{prop:agents-br-generic}), for the non-strategic $\theta^U_s$ and strategic $\hat{\theta}^U_s$ policies. The curves above each plot show the pre-strategic feature distribution of agents with qualification state $Y$. The lightly shaded green/red ovals are used to further highlight the agents that opt for improvement/manipulation in each case.

\begin{figure}[th]
    \centering
    \begin{minipage}{0.5\textwidth}
        \centering
        \includegraphics[width=\textwidth]{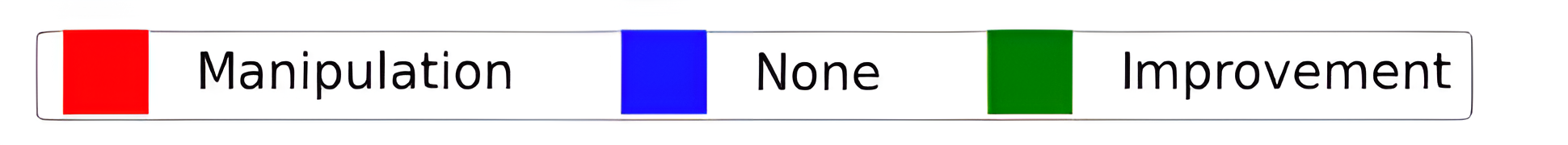}
    \end{minipage}
    \vspace{0.1cm}
\begin{minipage}{\textwidth}
        \centering
    \begin{subfigure}{0.5\textwidth}
        \centering
        \includegraphics[width=\textwidth,trim={0 0 0 0},clip]{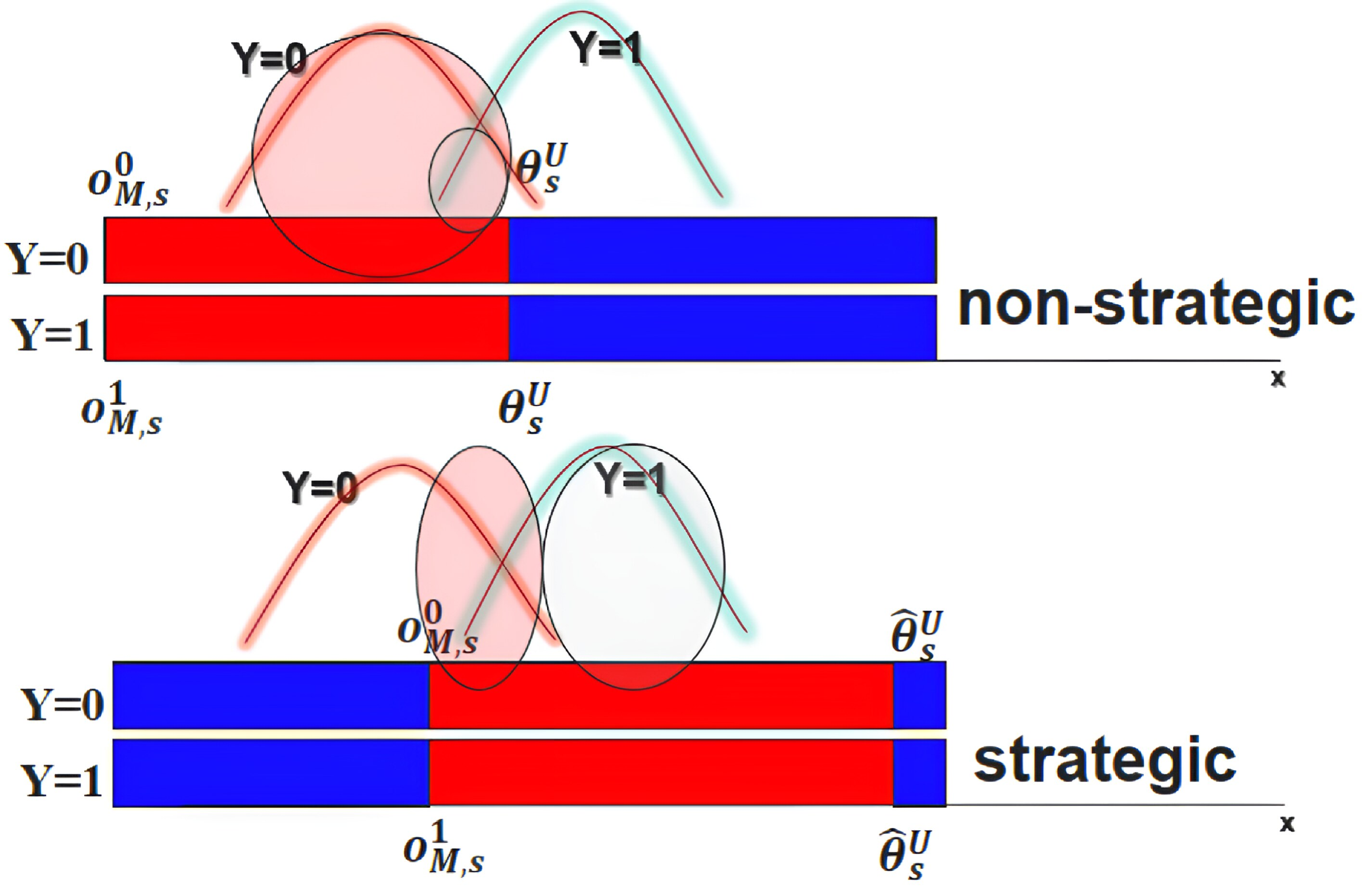}
        \caption{Costly improvement (Type 3) equilibrium}
        \label{fig:type3_startegic_impact_agents_behavior}
    \end{subfigure}
    \hfill
    \begin{subfigure}{0.49\textwidth}
        \centering
        \includegraphics[width=\textwidth,trim={0 0 0 0},clip]{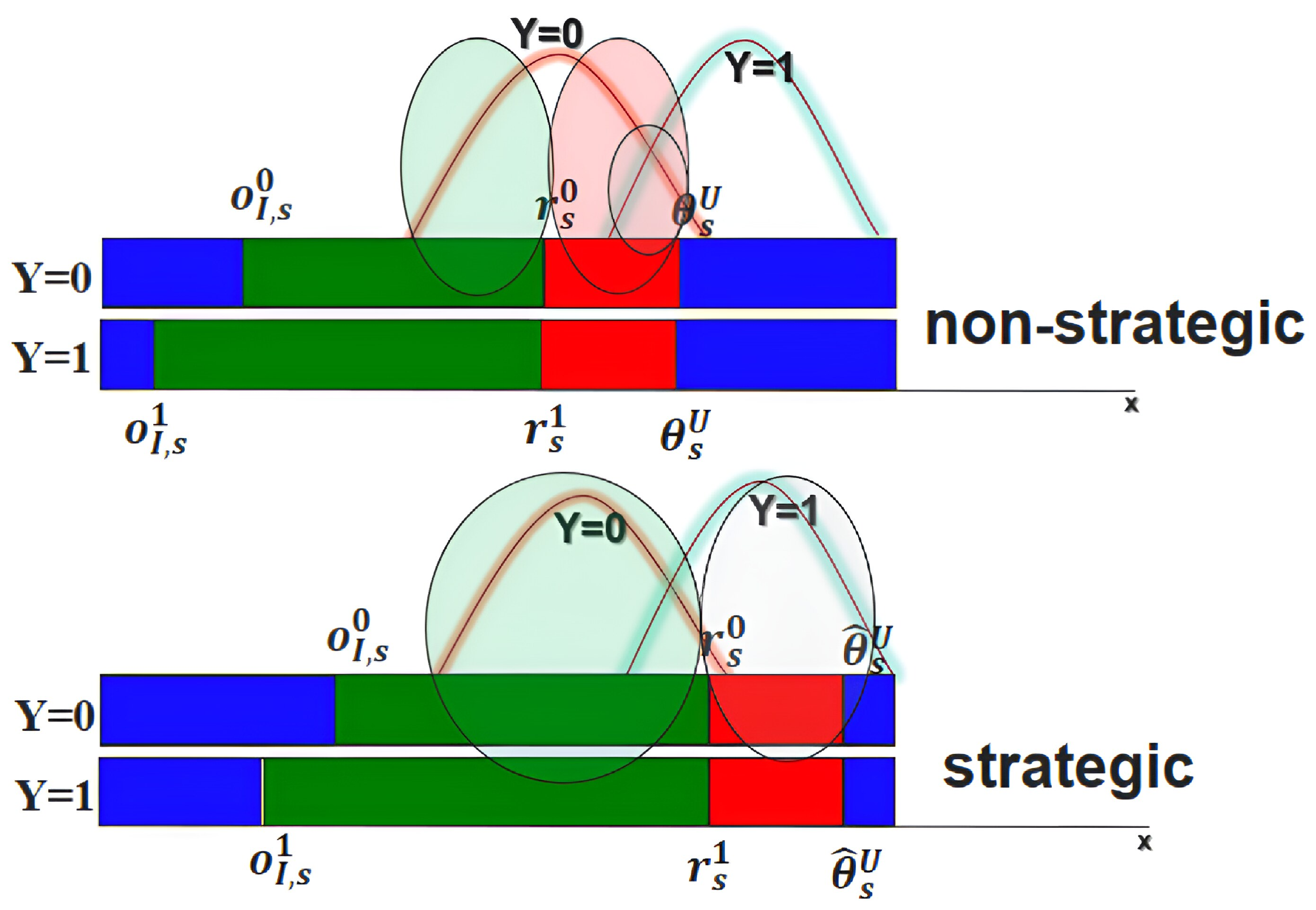}
        \caption{Affordable improvement (Type 1) equilibrium}
        \label{fig:type1_startegic_impact_agents_behavior}
    \end{subfigure}
\end{minipage}
\vspace{0.1in}
    \caption{The effect of the non-strategic and strategic policies on agents' strategic behavior in different equilibrium types. The curves represent the label $Y$ agents' feature distributions.}
    \label{fig:startegic_impact_agents_behavior}
\end{figure}

\paragraph{The costly improvement (Type 3) equilibrium.} We start with settings where a Type 3 (manipulation only) equilibrium emerges (e.g., when the improvement cost is too high). A nonstrategic firm, oblivious to the possibility of agents responding strategically, chooses a threshold $\theta^U_s$ solely with the goal of rejecting unqualified agents; this is seen in the top panel of Figure~\ref{fig:type3_startegic_impact_agents_behavior}. However, as agents are strategic, they will opt for manipulation to increase their chances of surpassing this threshold. Therefore, a non-strategic firm ends up admitting a considerable number of unqualified ($Y=0$) agents (along with a small number of qualified ($Y=1$) agents) who are below the threshold but pass it by opting for manipulation. 

A strategic firm accounts for this effect, and chooses a higher $\hat{\theta}^U_s$; this is asserted in part (i) of Proposition~\ref{prop:firm_impact_comp}. The higher threshold accommodates less manipulation by unqualified ($Y=0$) agents who are now too far from the threshold to benefit from manipulation (as supported by part (ii) of  Proposition~\ref{prop:firm_impact_comp}). At the same time, the higher threshold increases the number of qualified ($Y=1$) agents who are no longer accepted by default, and therefore leads to more qualified agents opting for manipulation compared to the non-strategic firm's case (as seen in part (iii) of Proposition~\ref{prop:firm_impact_comp}); this is also illustrated in the bottom subplot in Figure~\ref{fig:type3_startegic_impact_agents_behavior}. This is however the firm's desired strategic response by agents, as manipulation by qualified agents, which leads to their acceptance, also \emph{benefits} the firm and aligns with its objective.

\paragraph{The affordable improvement (Type 1) equilibrium.} We next turn to problem settings where Type 1 (mix of improvement and manipulation) responses emerge, as, e.g., improvement costs are sufficiently low; see Figure~\ref{fig:type1_startegic_impact_agents_behavior}. Here, the strategic firm again chooses a higher threshold than the non-strategic firm. Part of this is for the same reasons outlined earlier: by choosing a higher threshold, the strategic firm takes away manipulation opportunities from unqualified agents (who would otherwise reduce the firm's utility by getting admitted through manipulation), but increases manipulation by qualified agents (a choice that is non-detrimental to the firm). That said, the more interesting observations, which are new to our model compared to prior work, are the impacts of this increase on the agents' improvement choices. In the non-strategic firm scenario, a lower threshold leads to only \emph{some} of the unqualified ($Y=0$) agents opting for improvement, with a significant number opting for manipulation (see top panel of Figure~\ref{fig:type1_startegic_impact_agents_behavior}). In contrast, the increased threshold set by a strategic firm drives the \emph{majority} of the unqualified agents to opt for improvement as the only viable option to get admitted (see bottom panel of Figure~\ref{fig:type1_startegic_impact_agents_behavior}). This is the main driver of the increase in the strategic firm's utility, and is supported by part (iv) of Proposition~\ref{prop:firm_impact_comp}. We also note that the same observation is true for qualified ($Y=1$) agents.

\section{Agents' Strategic Behavior Illustration: uniform boost distributions}\label{sec:best-response-uniform}

To further illustrate the intuition behind the types of best-responses identified in Proposition~\ref{prop:agents-br-generic}, we consider uniform boost distributions $\mathbb{\tau}^y_{w,s}$. It is straightforward to verify that the three possible equilibria of Proposition~\ref{prop:agents-br-generic} can be obtained by varying improvement cost $C_I$ relative to manipulation cost $C_M$. Specifically, consider $\bar{b}_M^y-\underline{b}_M^y \geq \bar{b}_I^y-\underline{b}_I^y$. Then:
\begin{itemize}
    \item \textbf{Low improvement cost}: If $C_M \leq C_I \leq \frac{\bar{b}_M^y-\underline{b}_M^y}{\bar{b}_I^y-\underline{b}_I^y} C_M + \frac{\bar{b}_I^y - \bar{b}_M^y}{\bar{b}_I^y-\underline{b}_I^y}$, the best-response is of Type 1 in Proposition~\ref{prop:agents-br-generic}. In this case, the improvement cost is relatively low, so coupled with its higher efficacy, agents find it beneficial to opt for improvement before manipulation becomes beneficial. Ultimately, however, the lower manipulation cost leads agents to change their decision closer to the threshold, once uncertainties about receiving a positive outcome are low enough. 

    \item \textbf{Moderate improvement cost}: If $\frac{\bar{b}_M^y-\underline{b}_M^y}{\bar{b}_I^y-\underline{b}_I^y} C_M + \frac{\bar{b}_I^y - \bar{b}_M^y}{\bar{b}_I^y-\underline{b}_I^y} \leq C_I \leq C_M + \frac{\underline{b}_I^y - \underline{b}_M^y}{\bar{b}_M^y-\underline{b}_M^y}$, the best-response is of Type 2 in Proposition~\ref{prop:agents-br-generic}. Here, improvement costs are too high to benefit the agents who are far from the decision threshold, but low enough so that agents opt for improvement over manipulation when both actions are uncertain.

    \item \textbf{High improvement cost}: If $ C_M + \frac{\underline{b}_I^y - \underline{b}_M^y}{\bar{b}_M^y-\underline{b}_M^y} \leq C_I$, the best-response is of Type 3 in Proposition~\ref{prop:agents-br-generic}. In this case, the improvement cost is significantly higher than the manipulation cost, leading all agents to pick manipulation over improvement
    once profitable. 
\end{itemize} 
\section{FICO Data and Additional Experiments}
\subsection{Empirical FICO statistics}\label{app:fico_data}
Credit scores—and in particular, FICO scores—serve as a standard benchmark for evaluating creditworthiness in the United States. Building on publicly available resources from prior study \citep{hardt2016equality,xie2024learning,zhang2022fairness}, we leverage both the released data and accompanying code to implement the $(x,y)$ data-generating process. Specifically, we employ a preprocessed dataset that provides the cumulative distribution functions (CDFs) of scores $P(X \leq x \mid S=s)$ (see Figure~\ref{fig:pdf_cdf_fico}), qualification likelihoods $P(Y=1 \mid X=x, S=s)$ (see Figure~\ref{fig:qualification_likeli}), and the corresponding group-level qualification rates $\alpha_s$. The dataset includes four racial groups—African American (AA), Hispanic (H), Caucasian (C), and Asian (A)—with qualification rates $\alpha_s \in (0.33849, 0.56977, 0.75972, 0.80467)$.

We construct an empirical simulator from the observed score and repayment data. Using each group’s discrete score distribution (i.e., the empirical histogram), we draw $100{,}000$ scores with sampling probabilities determined by the histogram bin weights. For each sampled score $x$, we retrieve the group-specific repayment probability $P(Y=1 \mid X=x, S=s)$ and generate the binary outcome $Y \sim \mathrm{Bernoulli}\!\left(P(Y=1 \mid X=x, S=s)\right)$. This process yields an empirical labeled dataset $\{(X_i, Y_i)\}$ for each group, from which we form two subpopulations: $\mathcal{X}^{(1)}_s = \{X_i : Y_i = 1\}$ and $\mathcal{X}^{(0)}_s = \{X_i : Y_i = 0\}$.

All scores are normalized to the range $[0,1]$. We sample $(x,y)$ pairs directly from these empirical histograms—without imposing parametric assumptions—to preserve the original score distributions. These data are reproduced directly from the publicly available FICO-based simulators of \citet{xie2024learning} and serve as descriptive statistics of the population used in our experiments (see Figure~\ref{fig:group_fico_score_dist}).

\begin{figure}[htbp!]
    \centering
    \begin{subfigure}[t]{0.55\textwidth}
        \centering
        \includegraphics[width=\linewidth]{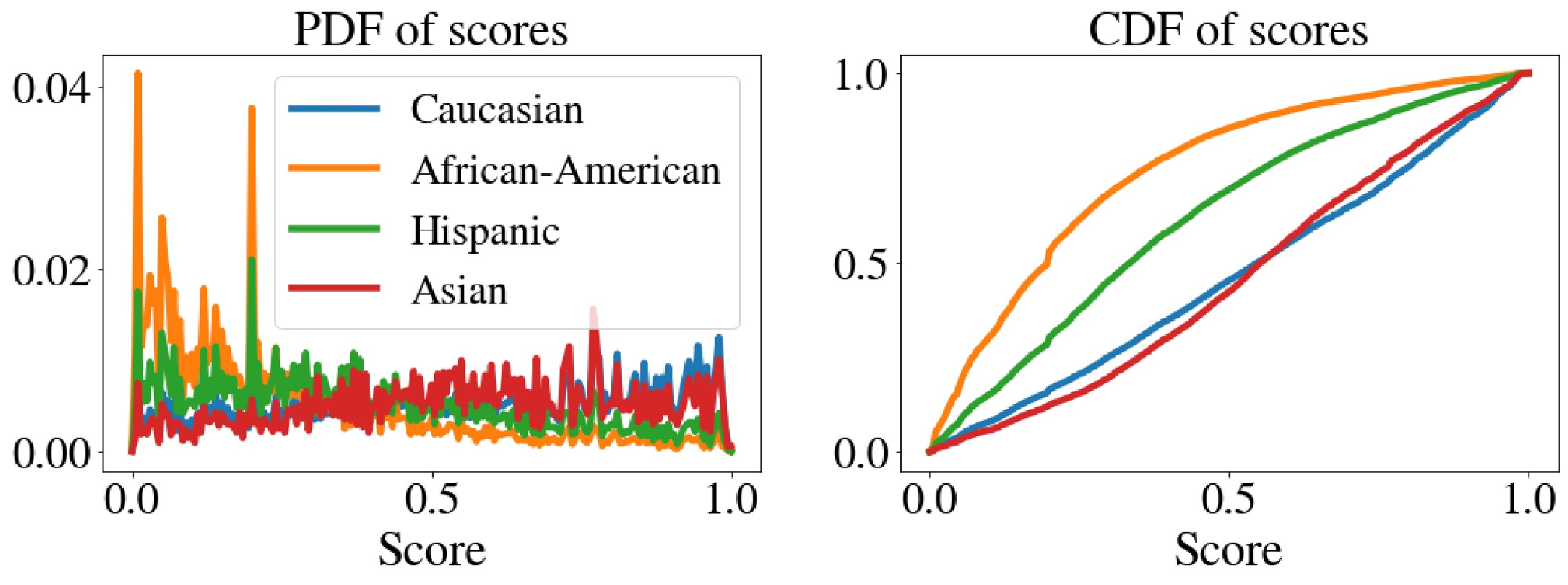}
        \subcaption{FICO score distributions (PDF and CDF).}
        \label{fig:pdf_cdf_fico}
    \end{subfigure}
    \quad 
    \begin{subfigure}[t]{0.3\textwidth}
        \centering
        \includegraphics[width=\linewidth]{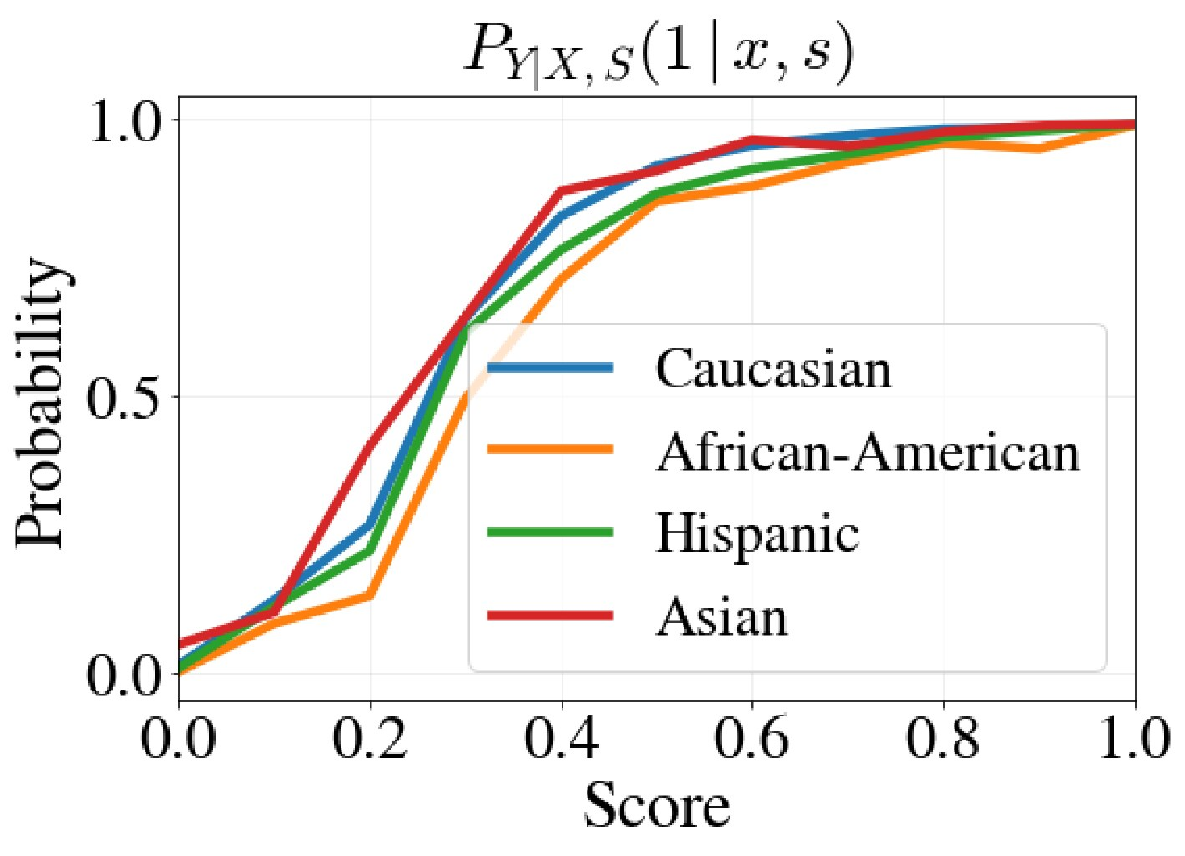}
        \subcaption{Group-wise qualification probability}
        \label{fig:qualification_likeli}
    \end{subfigure}
    \caption{FICO score distributions (a) and group-wise qualification probabilities (b).}
    \label{fig:fico_qual_combined}
\end{figure}

\begin{figure}[tbp!]
    \centering
    \includegraphics[width=\linewidth]{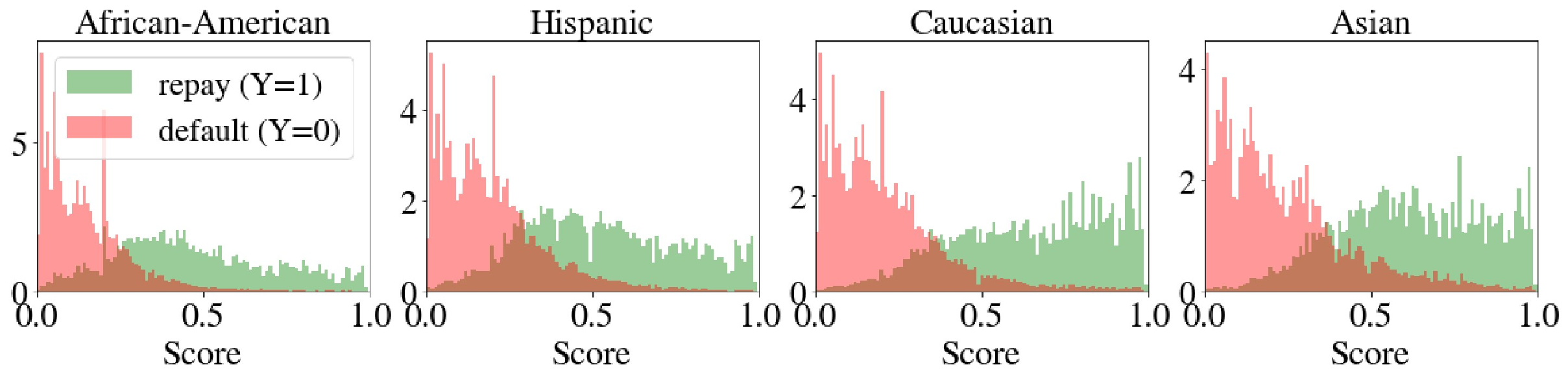}
    \caption{Histogram of score distributions across racial groups.}
    \label{fig:group_fico_score_dist}
\end{figure}

\subsection{Additional Experiments on FICO Data}
\subsubsection{Strategic vs non-strategic policies impact}\label{app:Prop2_all_groups}
\paragraph{Experimental setup for each demographic group.}
We compare policies' impact, without considering fairness, across the demographic groups African American (AA), Caucasian (C), Asian (A), and Hispanic (H). For each group $s$ with qualification rate $\alpha_s \in (0.33849, 0.56977, 0.75972, 0.80467)$, we generate a single replicate by sampling $n=1000$ agents: specifically, $n \alpha_s$ agents are drawn from the empirical score distribution of repayers $(Y=1)$, and the remaining $n(1-\alpha_s)$ agents are drawn from the empirical score distribution of non-repayers $(Y=0)$.

Figures ~\ref{fig:type1_unfair_vis_all_fico_all} show that strategic firm incentive more agents to choose improvement over manipulation across different groups, which reduces (resp. increases) the impact of manipulation (resp. improvement) by the unqualified agents as seen in Figure~\ref{fig:type1_comp_Psi0_fico_all} (resp. Figure~\ref{fig:type1_comp_Phi0_fico_all}). This leads to a higher post-strategic qualification rate $\hat{\alpha_s}$ for all demographic groups under the strategic firm (Figure~\ref{fig:alpha_post_com_unfair_fico_all}). Figure~\ref{fig:type1_comp_theta_fico_all} and Figure~\ref{fig:type1_comp_utility_fico_all} show that the strategic optimal thresholds and the strategic firm’s utility are greater than those under non-strategic policy at all groups; this is consistent with
Proposition~\ref{prop:firm_impact_comp}. Simultaneously, this choice leads to both more improvement (Figure~\ref{fig:type1_comp_Phi1_fico_all}) and (weakly) more manipulation (Figure~\ref{fig:type1_comp_Psi1_fico_all}) by the qualified agents of all groups, both of which also benefit the firm. Only the AA group exhibits relatively little manipulation among qualified agents. This occurs because the strategic threshold faced by this group is substantially higher than its corresponding non-strategic threshold. As a result, a larger share of qualified agents opt for genuine improvement rather than manipulation—particularly those who are far from the threshold—since the potential benefit from improvement is greater than the marginal gain from strategic manipulation. 

\begin{figure}[htbp!]
    \centering
    \begin{subfigure}{0.27\textwidth}
        \centering
        \includegraphics[width=\textwidth,trim={0 0 0 0},clip]{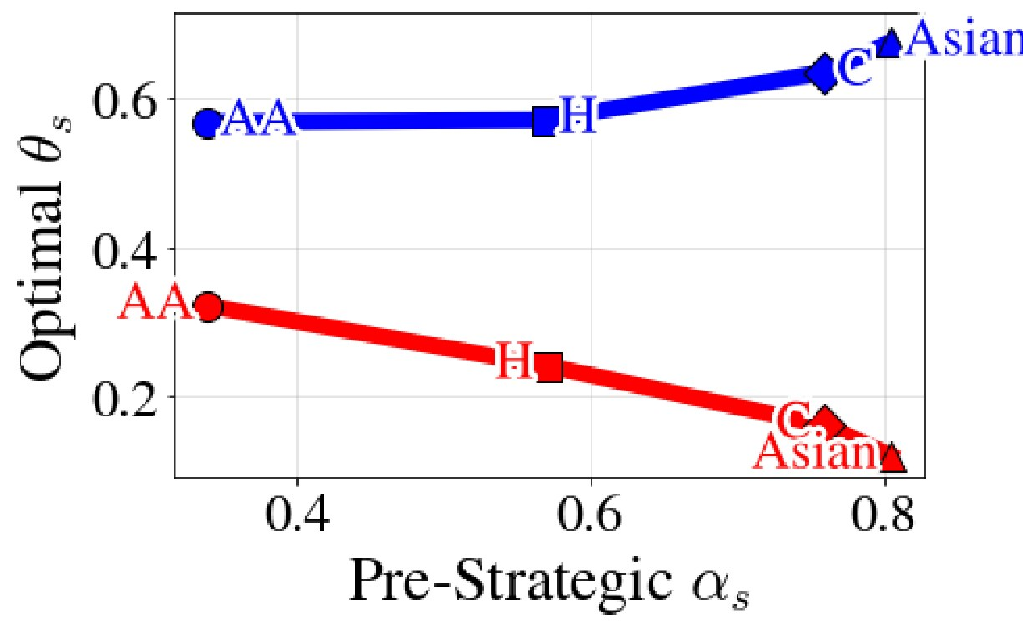}
        \caption{Optimal thresholds}
        \label{fig:type1_comp_theta_fico_all}
    \end{subfigure}
    \begin{subfigure}{0.27\textwidth}
        \centering
        \includegraphics[width=\textwidth,trim={0 0 0 0},clip]{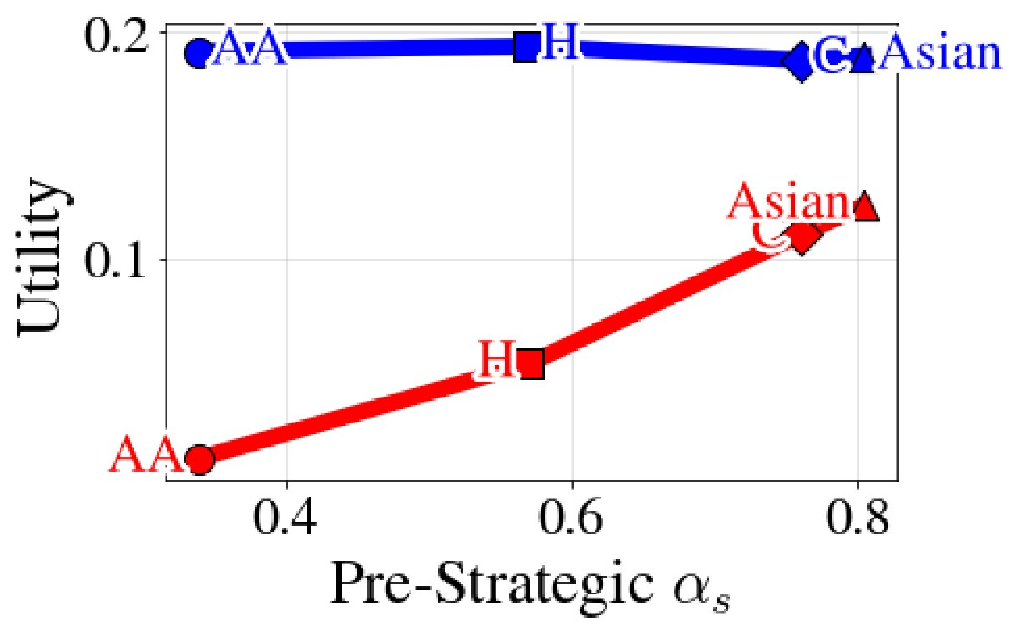}
        \caption{Optimal utility}
        \label{fig:type1_comp_utility_fico_all}
    \end{subfigure}
    \begin{subfigure}{0.27\textwidth}
        \centering
        \includegraphics[width=\textwidth,trim={0 0 0 0},clip]{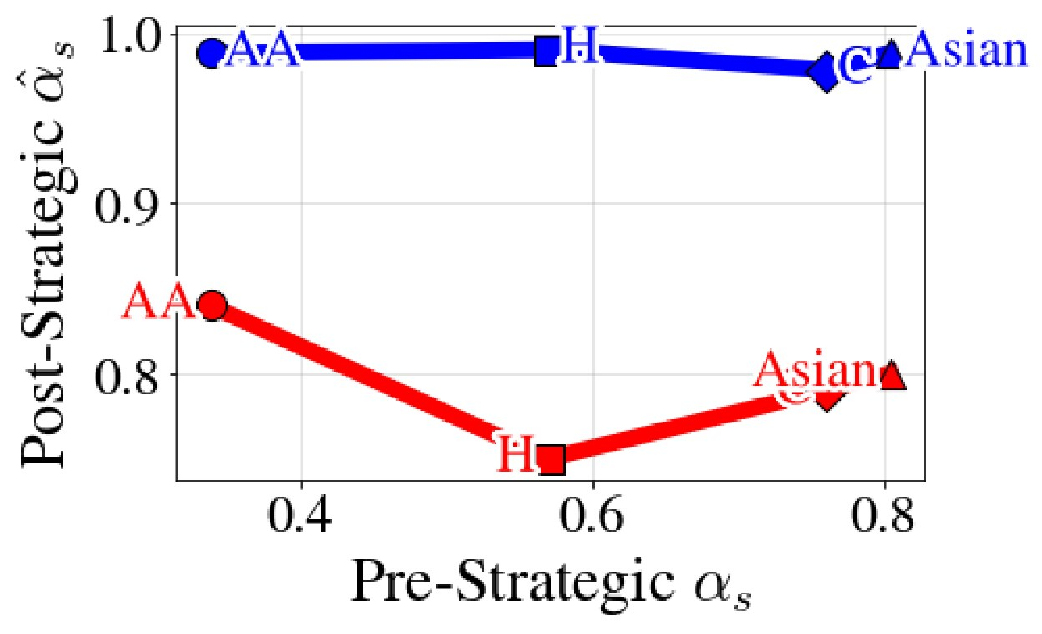}
    \caption{post-strategic $\hat{\alpha}_s$}
    \label{fig:alpha_post_com_unfair_fico_all}
    \end{subfigure}
    \begin{subfigure}{0.27\textwidth}
        \centering
        \includegraphics[width=\textwidth,trim={0 0 0 0},clip]{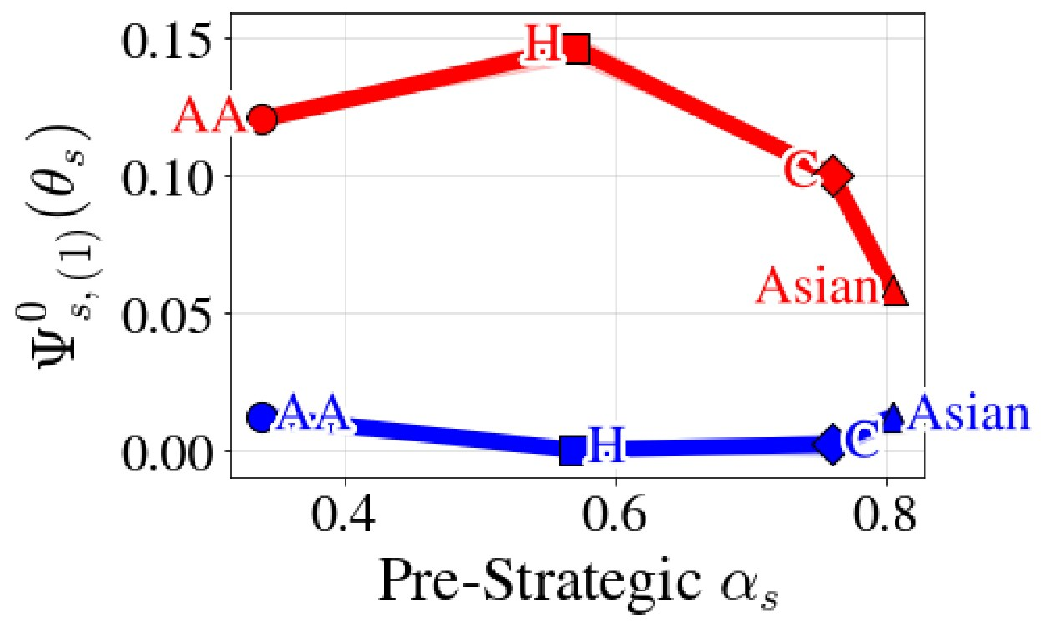}
        \caption{$\boldsymbol{\Psi}^0_{s,(1)}$ (Manipulation)}
        \label{fig:type1_comp_Psi0_fico_all}
    \end{subfigure}
    \begin{subfigure}{0.27\textwidth}
        \centering
        \includegraphics[width=\textwidth,trim={0 0 0 0},clip]{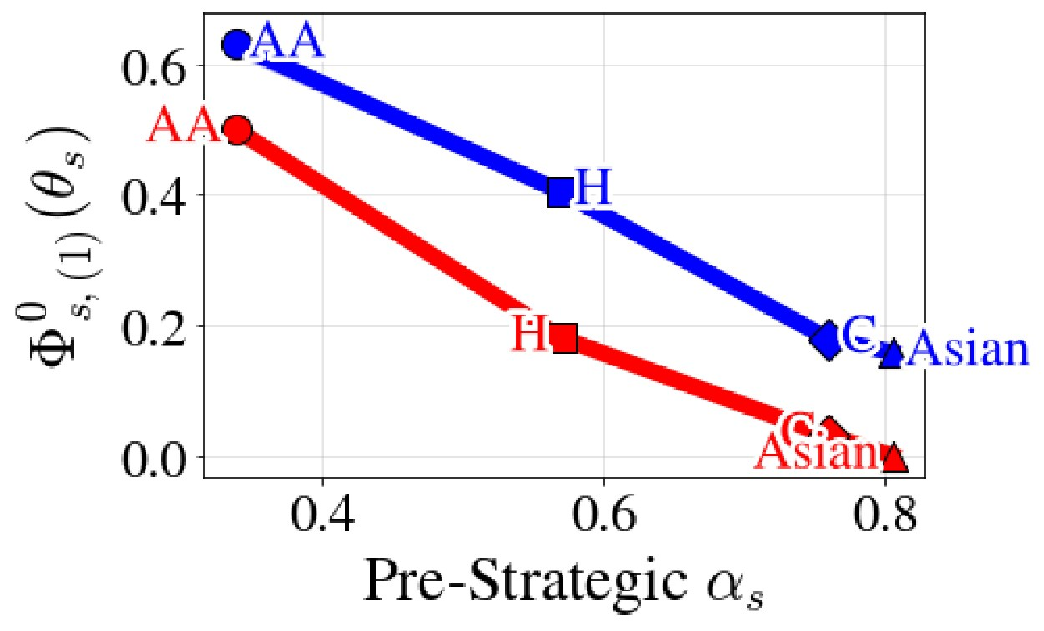}
        \caption{$\boldsymbol{\Phi}^0_{s,(1)}$ (Improvement)}
        \label{fig:type1_comp_Phi0_fico_all}
    \end{subfigure}
    \begin{subfigure}{0.27\textwidth}
        \centering
        \includegraphics[width=\textwidth,trim={0 0 0 0},clip]{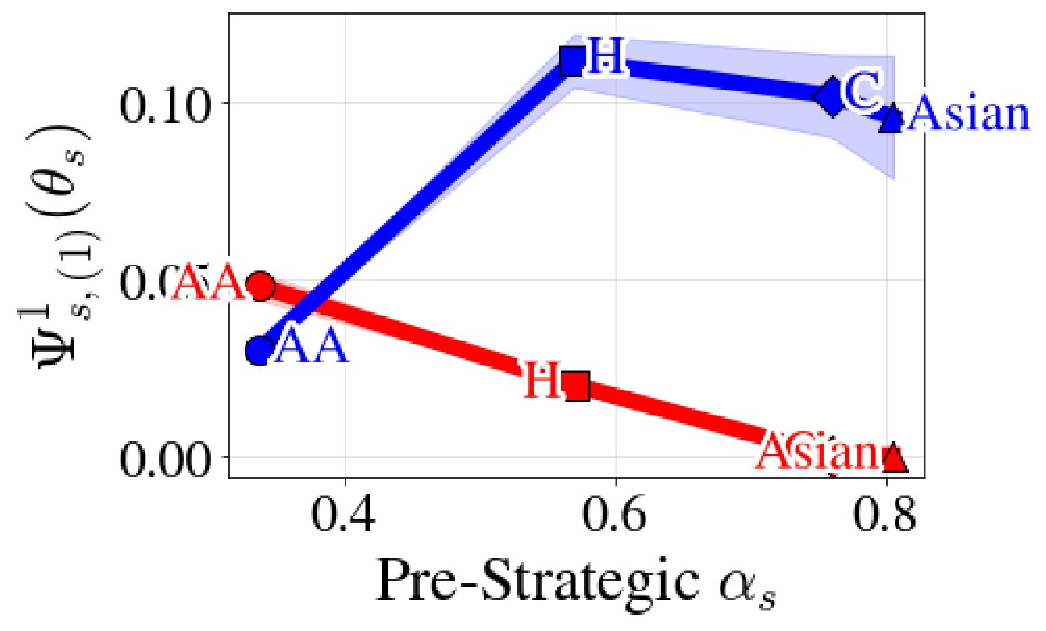}
        \caption{$\boldsymbol{\Psi}^1_{s,(1)}$ (Manipulation)}
        \label{fig:type1_comp_Psi1_fico_all}
    \end{subfigure}
    \begin{subfigure}{0.27\textwidth}
        \centering
        \includegraphics[width=\textwidth,trim={0 0 0 0},clip]{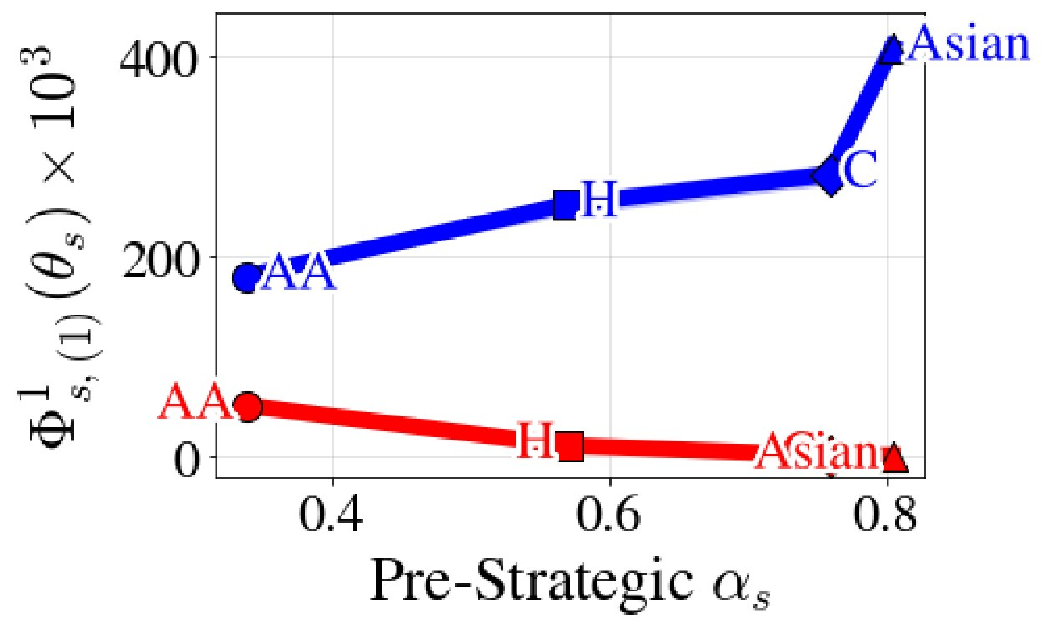}
        \caption{$\boldsymbol{\Phi}^1_{s,(1)}$ (Improvement)}
        \label{fig:type1_comp_Phi1_fico_all}
    \end{subfigure}
    \begin{subfigure}{0.2\textwidth}
        \centering
        \includegraphics[width=\textwidth,trim={0 0 0 0},clip]{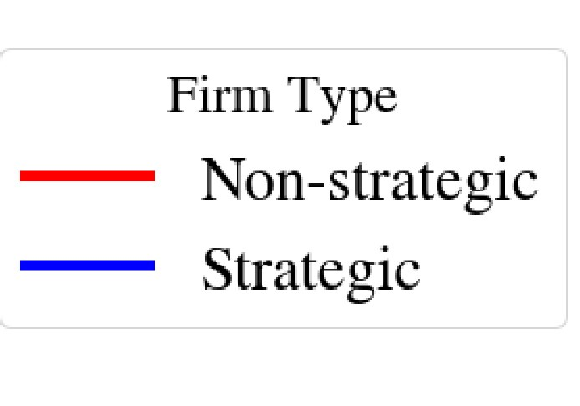}
        \caption{Legend}
        \label{fig:legend}
    \end{subfigure}
    \caption{Unfair policies comparison: non-strategic vs.\ strategic policies across demographic groups.}
    \label{fig:type1_unfair_vis_all_fico_all}
\end{figure}

\subsubsection{Impacts of fairness interventions}\label{app:fairness_all_groups}
\paragraph{Four-groups experiment setup.}
We next consider a population composed of four equal-sized demographic groups (C, AA, H, and A), where groups C, H, and A are majority-qualified, while group AA is majority-unqualified. As in the two-group fairness experiment setup, we use empirical histograms for the feature distributions and assume symmetric access to strategic actions with $C_{M,s}=0.2$ and $C_{I,s}=0.3$. We evaluate both unfair and fair policies (DP and EOP) under strategic and non-strategic firms using ROC curves. Depending on which group statistics are used to compute the true positive rates (TPRs) and false positive rates (FPRs) and the firm’s decision policy, three ROC curves may arise (see Figure~\ref{fig:ROC_cureves}). Figure~\ref{fig:roc_str} computes the TPR–FPR using post-strategic statistics $(\hat{x},\hat{y})$, and highlights the TPR–FPR of the strategic decisions, which is obtained using post-statistics. Figure~\ref{fig:roc_non} computes TPR–FPR using post-strategic statistics $(\hat{x},\hat{y})$, and highlights the TPR–FPR of the non-strategic decisions, which were still obtained using pre-statistics $(x,y)$. Finally, Figure~\ref{fig:roc_fixed_x_y} computes TPR–FPR using pre-strategic statistics $(x,y)$, and highlights the TPR–FPR of non-strategic decisions (obtained using $(x,y)$) and strategic decisions (obtained using post-strategic statistics $(\hat{x},\hat{y})$). 

\begin{figure}[htbp!]
    \centering
    \begin{subfigure}[t]{0.27\textwidth}
        \centering
        \includegraphics[width=\linewidth]{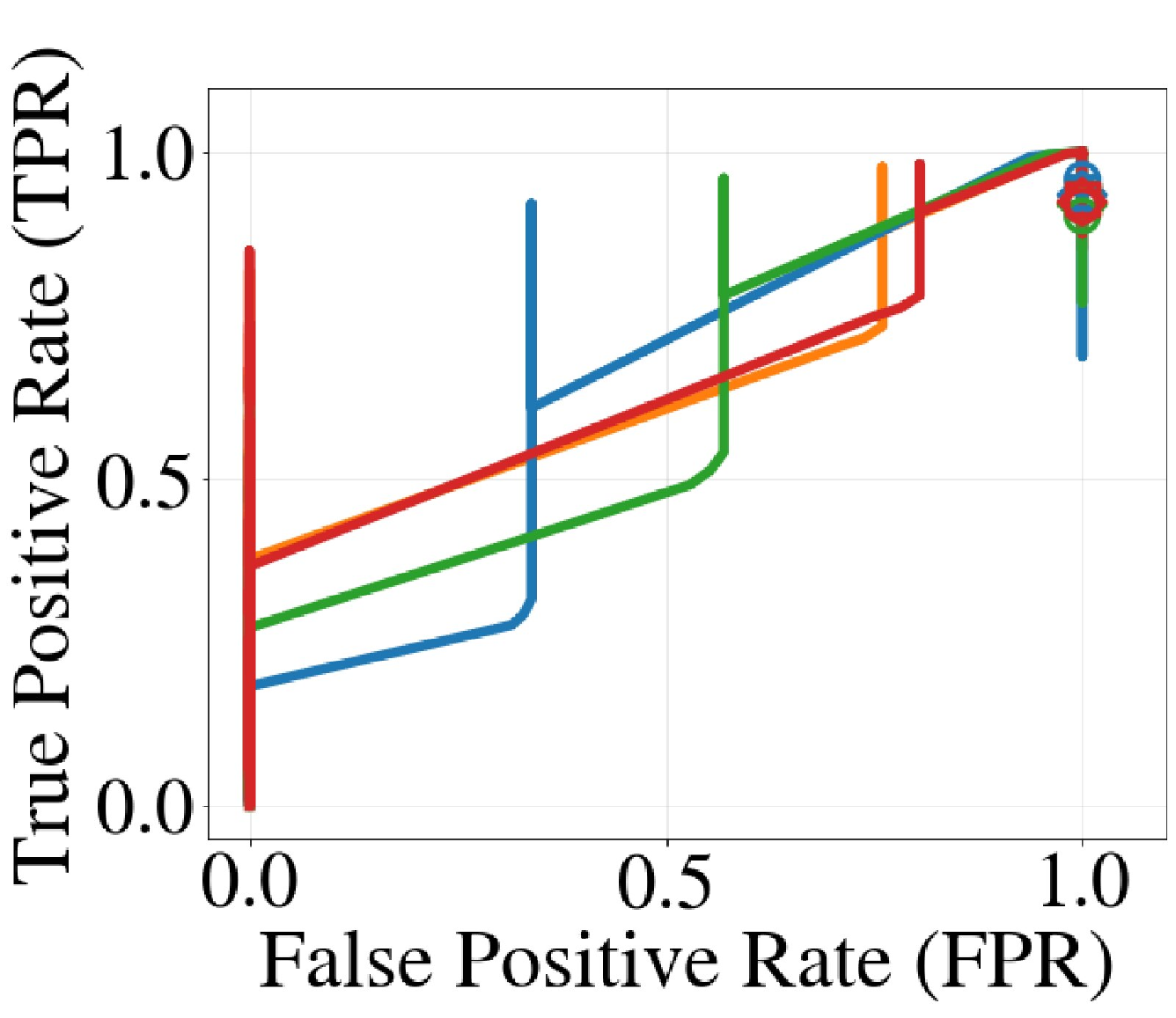}
        \subcaption{Post-strategic ROC (strategic firms)}
        \label{fig:roc_str}
    \end{subfigure}
    ~~~
        \begin{subfigure}[t]{0.27\textwidth}
        \centering
        \includegraphics[width=\linewidth]{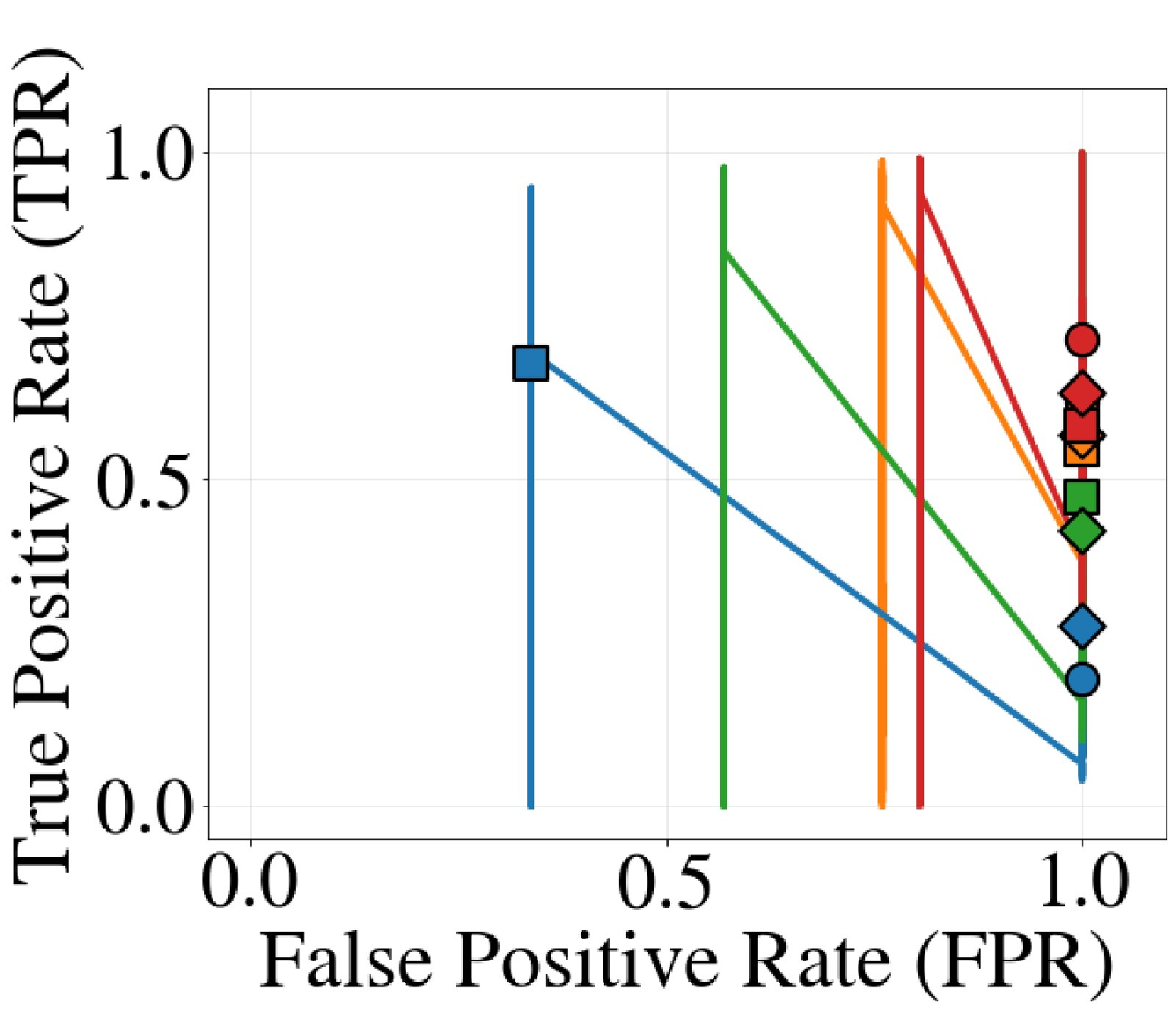}
        \subcaption{Post-strategic ROC (non-strategic firms)}
        \label{fig:roc_non}
    \end{subfigure}
    ~~~
    \begin{subfigure}[t]{0.39\textwidth}
        \centering
        \includegraphics[width=\linewidth]{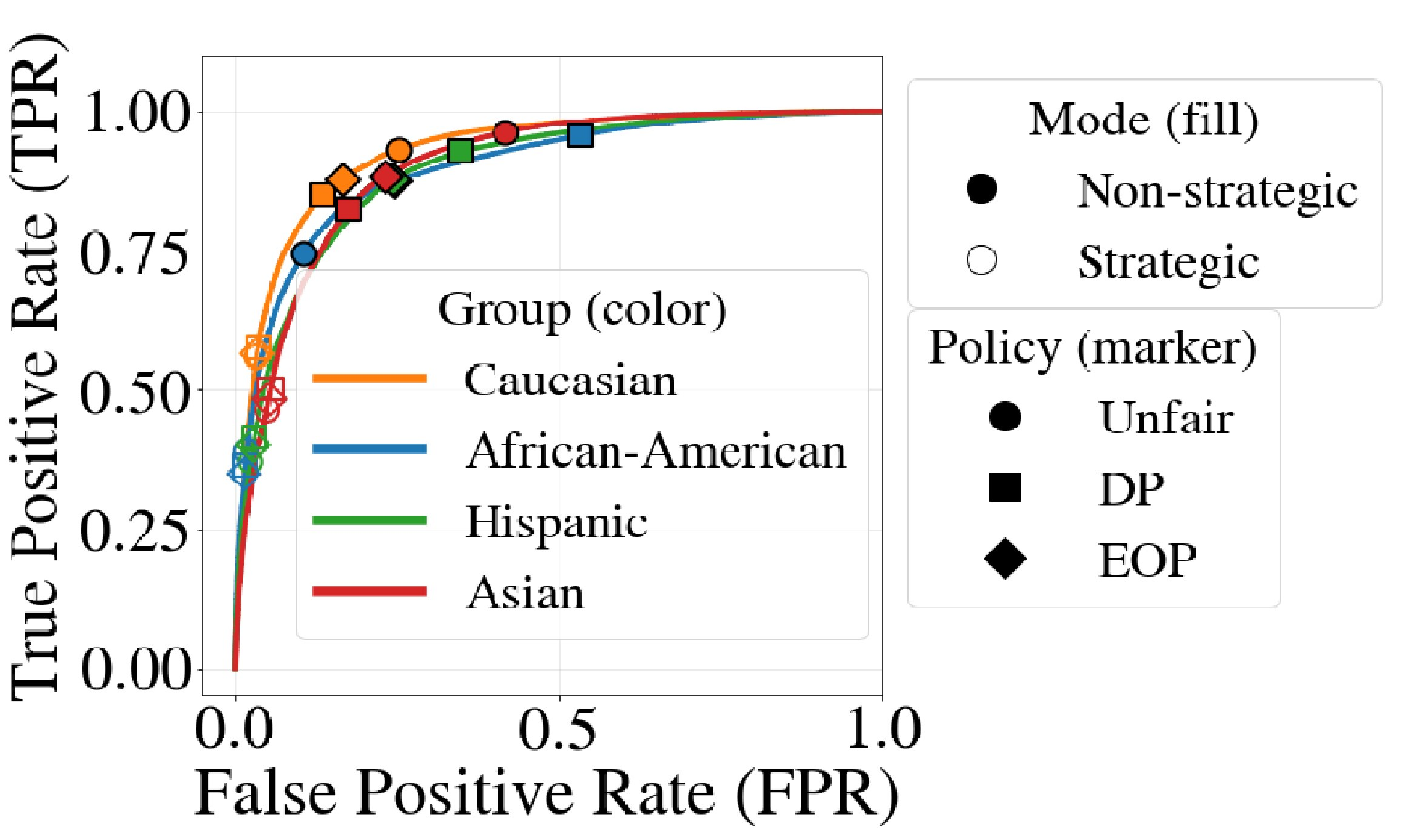}
        \subcaption{Pre-strategic rates (both firms)}
        \label{fig:roc_fixed_x_y}
    \end{subfigure}
    \caption{ROC curves under pre- and post-strategic statistics. Figures~(a)--(b) report ROC based on post-strategic statistics, reflecting the actual fairness performance of strategic and non-strategic decision policies, respectively. Figure~(c) reports the ROC based on pre-strategic statistics, reflecting the intended fairness performance under the non-strategic decision policy.}
    \label{fig:ROC_cureves}
\end{figure}

\paragraph{Strategic vs. non-strategic fair firms.}
First, the ROC curves computed using post-strategic statistics differ substantially from those based on pre-strategic statistics, reflecting the fact that both features and labels are endogenous to the firm’s threshold policy once strategic behavior is taken into account.

Figure~\ref{fig:roc_str} shows that a strategic firm can simultaneously satisfy both DP and EOP fairness constraints without requiring substantial deviations from the threshold used under the unfair policy, consistent with the trends observed in the previous experiment (See Figure~\ref{fig:fair_vis_Type1_c_aa_2}). This is observed by the nearly identical rates indicated by the square, circle, and diamond markers in Figure~\ref{fig:roc_str}. Notably, the TPRs and FPRs for all groups approach 1. This occurs because the strategic firm accepts all previously qualified agents and induces a large proportion of initially unqualified agents across all groups to invest in improvement, after which they are accepted. Consequently, all groups become overwhelmingly qualified, with qualification rates approaching 1, consistent with the trends observed in the previous experiment (See Figure~\ref{fig:c_aa_1}). As a result, the TPRs converge to 1. Moreover, only a negligible number of unqualified agents remain, so although the FPRs approach 1, indicating acceptance of nearly all remaining unqualified agents, this corresponds to a very small proportion. Together, these dynamics explain how both DP and EOP fairness constraints are satisfied in this setting.

Figure~\ref{fig:roc_non} demonstrates that a non-strategic firm fails to satisfy the fairness constraint when evaluated on true (post-strategic) statistics; in particular, EOP is violated, as evidenced by the differing TPRs indicated by the diamond markers. In contrast, Figure~\ref{fig:roc_fixed_x_y} shows that the same non-strategic firm \emph{appears} to satisfy EOP when evaluated using pre-strategic statistics (see the filled diamond markers). This discrepancy highlights a key distinction between the \emph{intended} fairness guarantees under the firm’s assumed behavioral model and the \emph{realized} fairness outcomes once agents respond strategically.

Moreover, we observe FPRs of approximately 1 across all groups under the non-strategic firm’s policies, while TPRs vary within and across groups, with the exception of the AA group under the DP fair non-strategic policy, where TPR $\approx 0.6$ and FPR $\approx 0.3$. 

This pattern arises because the non-strategic firm, in an attempt to satisfy the fairness constraint, raises the decision thresholds for the advantaged groups C, A, and H. This incentivizes agents in these groups to behave strategically. However, due to the higher thresholds, not all initially qualified agents and unqualified agents who choose to improve are ultimately accepted. As a result, TPRs under fair policies are lower than those under the unfair policy for these groups. At the same time, among unqualified agents who opt to manipulate, together with those who are accepted by default, are ultimately accepted, yielding an FPR close to 1. Unlike the strategic firm case, the proportion of unqualified agents in this accepted set is higher (see Figure~\ref{fig:c_aa_1}).

This results in greater acceptance of qualified agents (with or without effort), as well as acceptance of some initially unqualified agents who choose to improve and of remaining unqualified agents who opt to manipulate. Consequently, TPRs increase under the fair policy, with a more pronounced increase under DP. The FPR is approximately 1 under most policies, except under DP, where it decreases to about 0.3.

This is because satisfying DP requires a larger reduction in the threshold, which allows even more qualified agents to be accepted. At the same time, although the lower threshold also encourages some unqualified agents to attempt manipulation (cheaper actions at a risk of rejection), not all of these agents are ultimately accepted. Nevertheless, the AA group contains a larger proportion of post-strategic unqualified agents overall (see Figure~\ref{fig:c_aa_1}), which shapes the observed outcomes.

\subsection{FICO dataset-Based Type 3 Experitment}\label{app:type3_fico_experiments}
In this appendix, we present numerical experiments to illustrate the Type~3 equilibrium analysis discussed in Section~\ref{sec:effect-strateg-prediction}, particularly Proposition~\ref{prop:firm_impact_comp} and Corollary~\ref{cor:fair-policies}. We examine the impact of accounting for agents’ strategic behavior—limited to manipulation under this equilibrium—by comparing the optimal policies and responses of strategic versus non-strategic firms. We further evaluate the effects of fairness interventions under both Equality of Opportunity (EOP) and Demographic Parity (DP) constraints. In this appendix, unless otherwise noted, we consider settings that ensure a Type~3 equilibrium (manipulation-only) strategic response. Action costs are set to $C_{M,s}=0.1$ and $C_{I,s}=0.9$, and boost distributions follow truncated Gaussian forms: $\tau^y_{M,s}(b) \sim \text{TruncNorm}([0.35, 0.70], 0.525, 0.22)$, $\tau^0_{I,s}(b) \sim \text{TruncNorm}([0.45, 0.85], 0.65, 0.15)$, and $\tau^1_{I,s}(b) \sim \text{TruncNorm}([0.57, 0.97], 0.77, 0.15)$.

\paragraph{Experimental setup for two demographic groups.}
We begin by comparing the behavior of policies without fairness constraints across two demographic groups—African American (AA) and Caucasian (C). For each group and for each target qualification level $\alpha_s$, we simulate outcomes using the same two-group protocol as in Section~\ref{sec:numerical-exp}. The target rate $\alpha_s$ is swept over $\{0.1, 0.2, \ldots, 0.9\}$, and for every setting we perform 50 independent trials, reporting the empirical mean across runs. 

\paragraph{Strategic vs.\ non-strategic (unfair) firms.}
The trends in Figure~\ref{fig:fair_vis_Type3_fico} follow the theoretical predictions. Figure~\ref{fig:type3_c,aa_theta} shows that strategic firms systematically adopt higher thresholds than non-strategic firms for all values of $\alpha_s$, in agreement with Proposition~\ref{prop:firm_impact_comp}. Correspondingly, Figure~\ref{fig:type3_c,aa_utility} demonstrates larger equilibrium utility under strategic decision-making. Figure~\ref{fig:type3_c,aa_psi1} and Figure~\ref{fig:type3_c,aa_psi0} reveal how this threshold gap shapes agent responses: qualified agents engage more in beneficial manipulation, while unqualified agents reduce costly manipulation, both of which increase firm utility—again matching the comparative statics established in Proposition~\ref{prop:firm_impact_comp} (parts (ii) and (iii)). It is worth noting that for $\alpha_s \leq 0.25$, the strategic firm induces slightly lower manipulation among qualified agents compared to the non-strategic firm. This aligns with its higher threshold policy, which reduces the incentive for qualified manipulation (see Figure~\ref{fig:type3_c,aa_psi0}). Conversely, for high $\alpha_s \geq 0.75$, the strategic firm permits slightly more manipulation by unqualified agents, as it lowers the threshold to avoid excessive rejection of qualified applicants. In doing so, the firm tolerates limited manipulation by a small fraction of unqualified agents to preserve acceptance for the much larger pool of qualified ones. 

\begin{figure}[hbt]
    \centering
    \begin{subfigure}{0.2\textwidth}
        \centering
        \includegraphics[width=\textwidth,trim={0 0 0 0},clip]{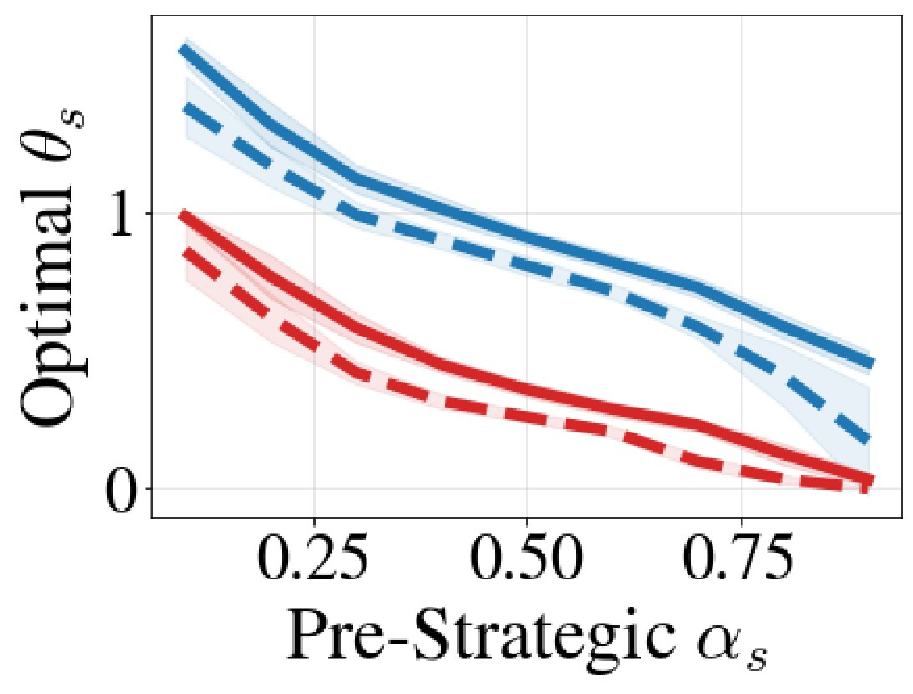}
        \caption{Optimal thresholds}
        \label{fig:type3_c,aa_theta}
    \end{subfigure}
    \begin{subfigure}{0.215\textwidth}
        \centering \includegraphics[width=\textwidth,trim={0 0 0 0},clip]{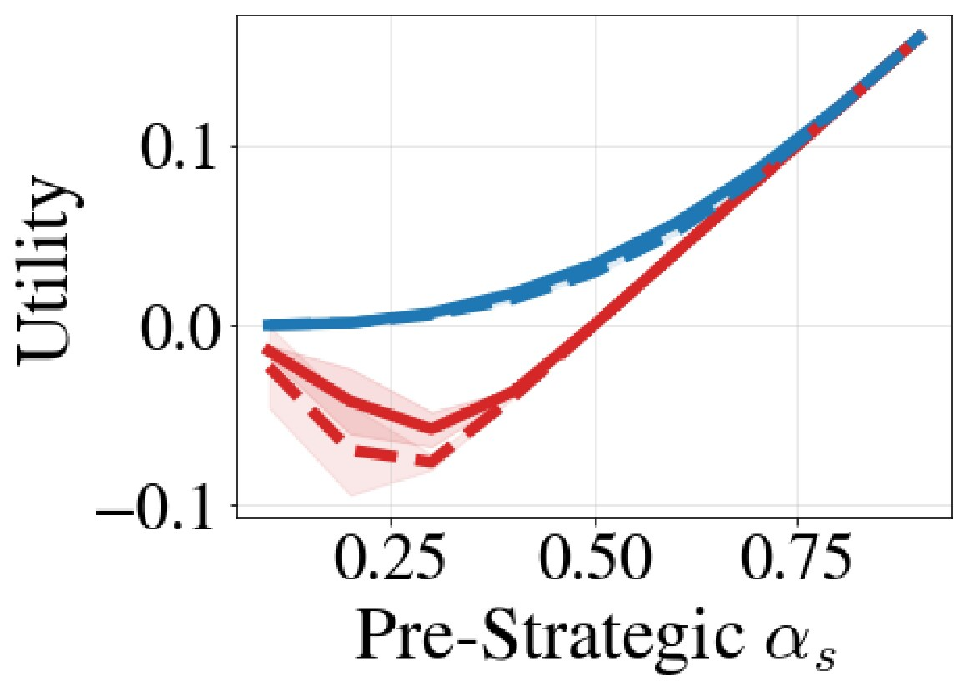}
        \caption{Optimal utility}
        \label{fig:type3_c,aa_utility}
    \end{subfigure}
    \begin{subfigure}{0.215\textwidth}
        \centering \includegraphics[width=\textwidth,trim={0 0 0 0},clip]{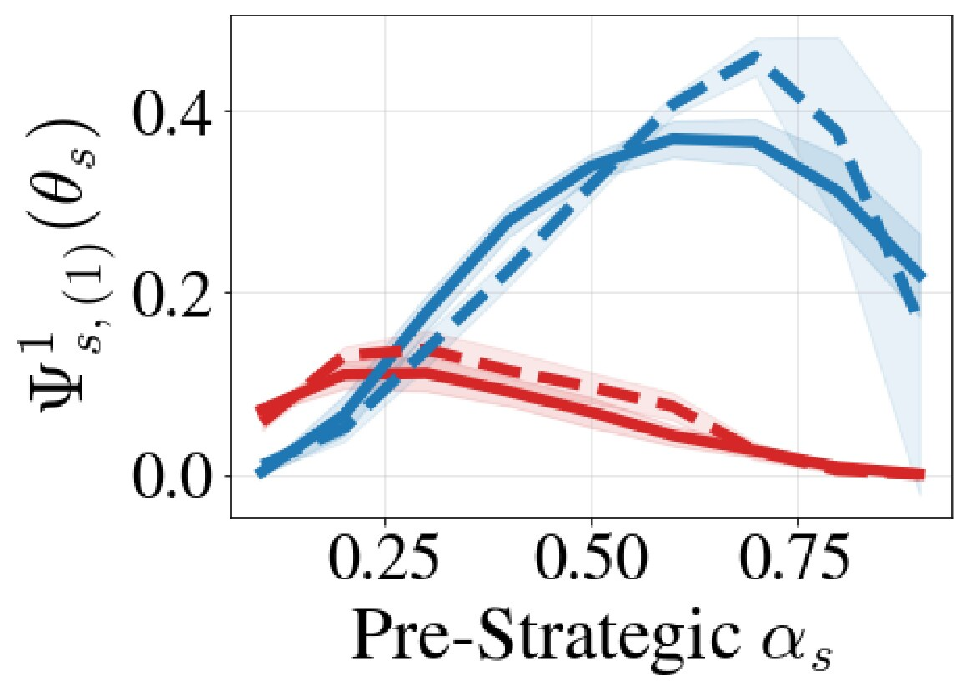}
        \caption{$\Psi^1_{a,(1)}$ (Manipulation)}
        \label{fig:type3_c,aa_psi1}
    \end{subfigure}
    \begin{subfigure}{0.35\textwidth}
        \centering \includegraphics[width=\textwidth,trim={0 0 0 0},clip]{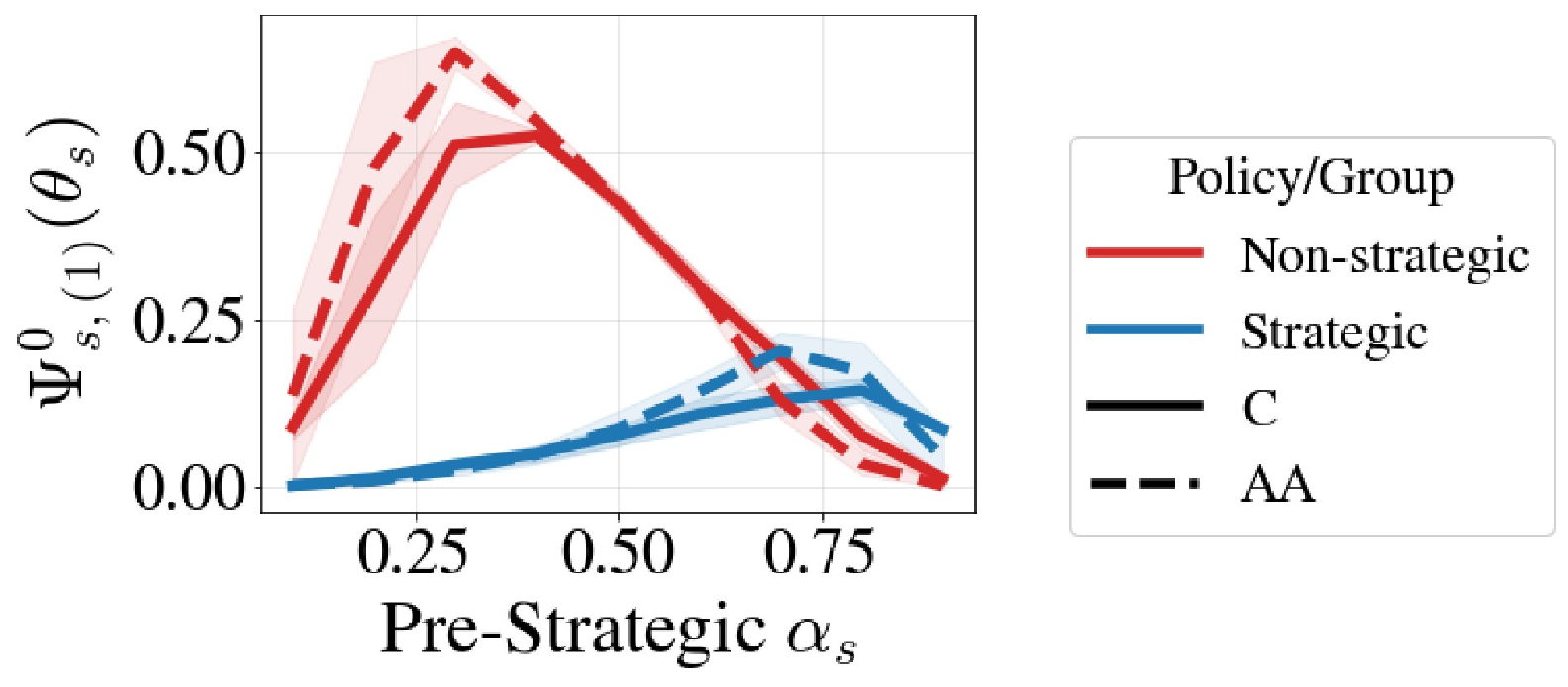}
        \caption{$\Psi^0_{a,(1)}$ (Manipulation)}
        \label{fig:type3_c,aa_psi0}
    \end{subfigure}
    \caption{Unfair policies: non-strategic vs. strategic policies' comparison when varying pre-strategic $\alpha_s$ under Type 3 equilibrium.} 
    \label{fig:fair_vis_Type3_fico}
\end{figure}

\paragraph{Two-groups experiment setup.}
For comparison, we also consider a two-group setting consisting of a majority-qualified (Caucasian) group as group A and a majority-unqualified (African-American) group as group B. As in the four-group case, we use empirical histograms for the feature distributions and assume symmetric access to strategic actions.

\paragraph{Strategic vs. non-strategic fair firms.}
Figure~\ref{fig:fair_vis_Type3_c_aa} provides contour visualizations of the firm’s utility surface, together with the corresponding utilities under DP and EOP parity constraints, as the group-specific thresholds $\theta_C$ and $\theta_{AA}$ vary. The optimal thresholds for the unfair, DP-fair, and EOP-fair cases are also indicated in the figure. These same thresholds, along with the resulting manipulation effects on firm utility, are reported in Figure~\ref{fig:c_aa_type3_impact}. A key distinction is that the strategic firm evaluates fairness over post-strategic statistics, whereas the non-strategic firm evaluates fairness over \emph{pre}-strategic statistics. Consequently, the fairness desiderata of a non-strategic firm are not satisfied ex-post, even though they appear satisfied under its own (incorrect) evaluation.

As seen in Type~1 equilibrium, Figure~\ref{fig:fair_vis_Type3_c_aa} shows that both firms raise (resp. lower) thresholds for the majority-qualified (resp. majority-unqualified) group to meet fairness constraints, per Corollary\ref{cor:fair-policies} and prior literature, with the strategic firm setting higher thresholds for both groups, consistent with Proposition~\ref{prop:firm_impact_comp}.

\begin{figure}[ht]
    \centering
        \begin{subfigure}{0.45\textwidth}
        \centering
        \includegraphics[width=\textwidth,trim={0 0 0 0},clip]{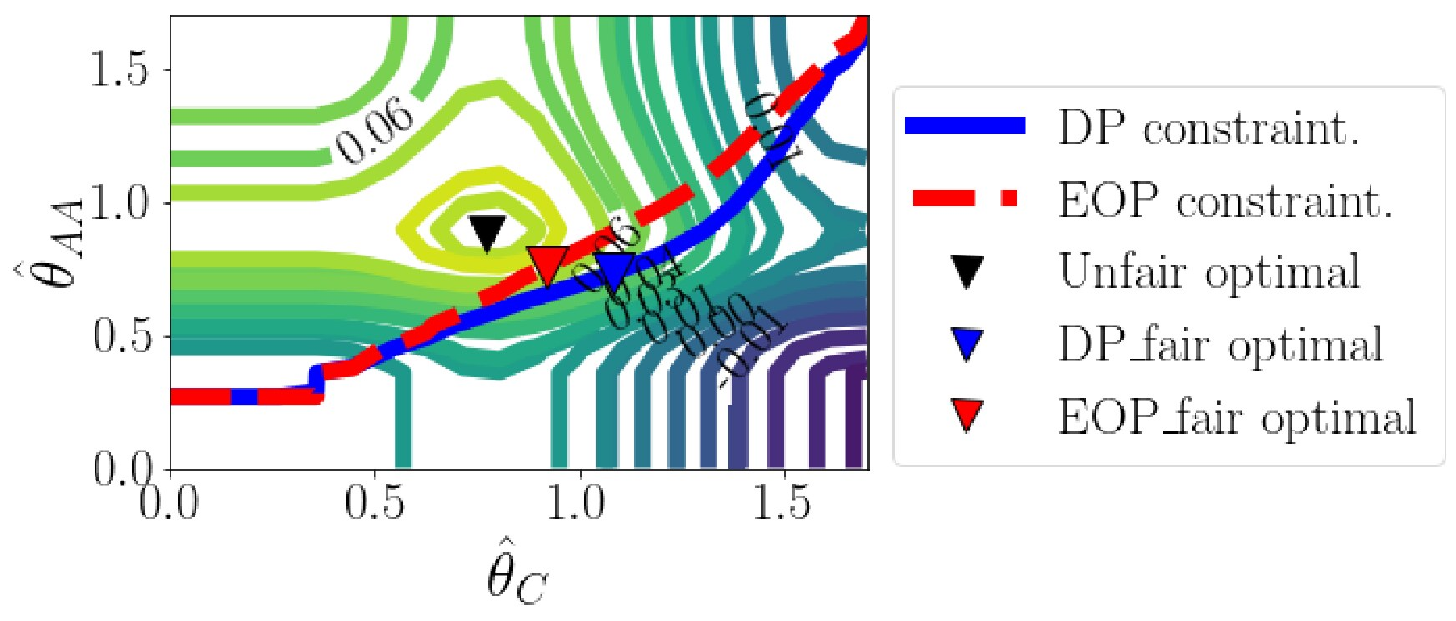}
        \caption{Strategic firm's utility}
        \label{fig:c_aa_str_type3}
    \end{subfigure}
    \begin{subfigure}{0.45\textwidth}
        \centering
        \includegraphics[width=\textwidth,trim={0 0 0 0},clip]{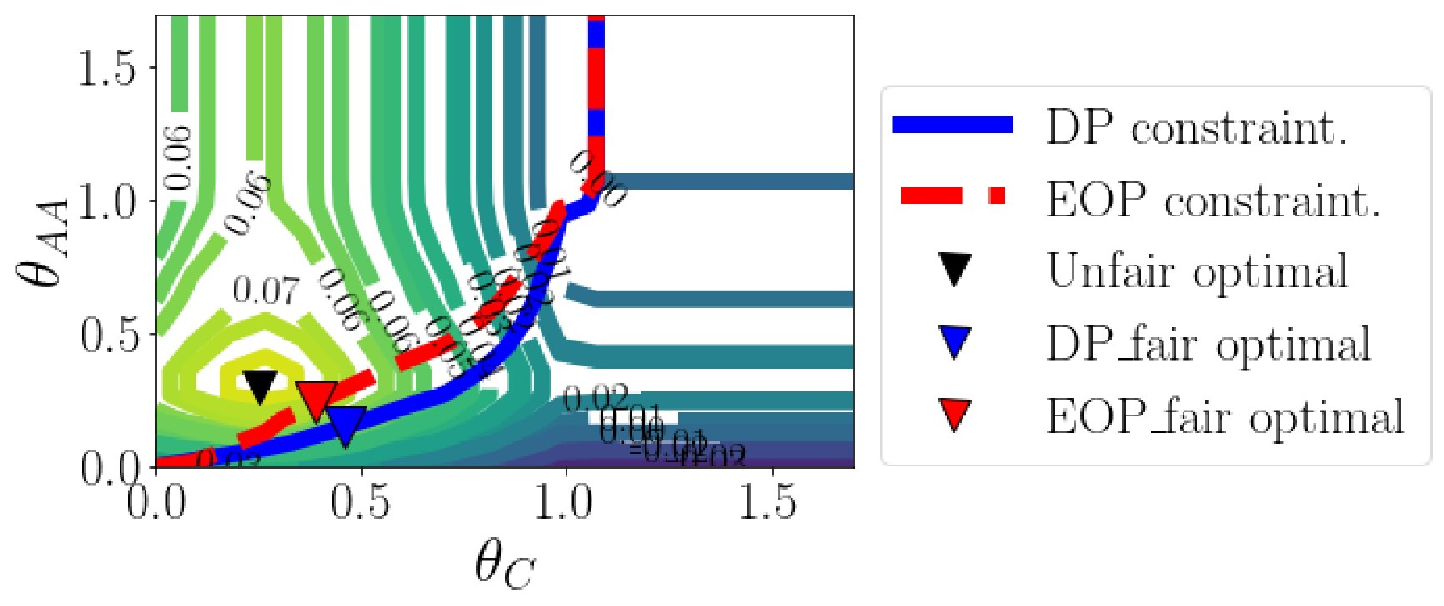}
        \caption{Non-strategic firm's utility}
        \label{fig:c_aa_nonstr_type3}
    \end{subfigure}
    \caption{ Caucasian vs. African American: Firm's utility and optimal thresholds when imposing fairness constraints under Type 3 equilibruim.}
    \label{fig:fair_vis_Type3_c_aa}
\end{figure}

Unlike the Type 1 setting, the comparison of Demographic Parity (DP) and Equality of Opportunity (EOP) constraints in the Type 3 equilibrium reveals pronounced shifts in decision thresholds for strategic firms relative to their non-strategic counterparts (see Figure~\ref{fig:fair_vis_Type3_c_aa}). Because no \emph{improvement} actions are available, the overall qualification rate $\alpha_s$ remains fixed, compelling strategic firms to substantially lower $\theta_{AA}$ and raise $\theta_C$ to satisfy fairness constraints. In this case, strategic behavior alone cannot achieve selection parity. 

For group AA, both the DP- and EOP-fair strategic firms produce higher levels of unqualified manipulation, increasing $\boldsymbol{\Psi}^0_{AA,(1)}(\hat{\theta}^{\mathcal{C}}{AA})$ relative to the unfair policy’s higher threshold that had previously suppressed manipulation (see the solid red lines in the first Figure of Figure~\ref{fig:c_aa_type3_impact}). The non-strategic firm also relaxes $\theta{AA}$ but exhibits a reduction in manipulation (Figure~\ref{fig:c_aa_type3_impact}, second Figure) because unqualified agents already engage heavily in such behavior, and many are now accepted by default. Overall, the absence of improvement actions forces fairness to be achieved through larger threshold adjustments under strategic awareness, resulting in lower firm utility.

\begin{figure}[htb]
    \centering
    \includegraphics[width=\textwidth,trim={0 0 0 0},clip]{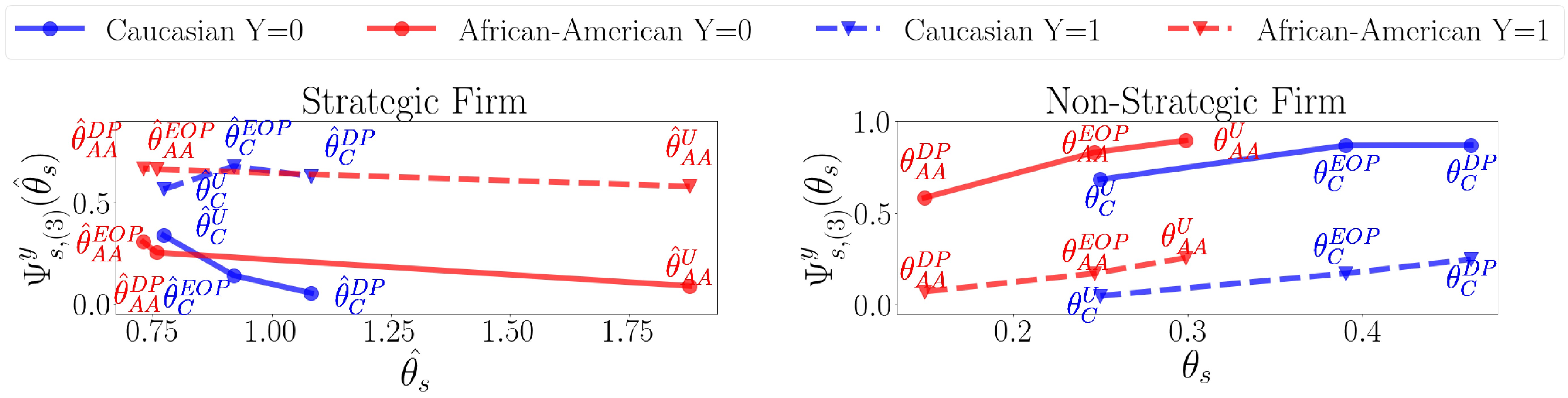}
    \caption{Caucasian vs. African American: Strategic behavior's impact comparison: fair vs unfair policies}
    \label{fig:c_aa_type3_impact}
\end{figure}

\paragraph{Four-groups experiment setup}
Building on the four-group setup in Section~\ref{sec:numerical-exp}, we follow the same experimental design and evaluation protocol, and highlight only the differences here. Here, we adopt different cost and boost parameters corresponding to a Type~3 equilibrium. As outlined at the beginning of this section, this setting permits only manipulation actions, which affect observable features but not the true qualification label~$y$. We again evaluate unfair and fair policies (DP and EOP) under both strategic and non-strategic firms using ROC curves, following the same methodology as in Section~\ref{sec:numerical-exp}. However, note that labels are now fixed and remain unchanged under strategic manipulation.

\paragraph{Strategic vs. non-strategic fair firms.}
First, as in Section~\ref{sec:numerical-exp}, although agents now manipulate only their features, ROC curves computed using post-strategic statistics continue to differ from those based on pre-strategic statistics. However, in contrast to the Type~1 equilibrium studied in Section~\ref{sec:numerical-exp}, Figure~\ref{fig:roc_str_type3} shows that under the Type~3 (manipulation-only) setting, a strategic firm attains fairness by adjusting its decision thresholds relative to those used under the unfair policy, as indicated by the differing rates among the square, circle, and diamond markers.

The observed shifts in TPR–FPR values from circles to diamonds and squares across all groups can be explained as follows. To achieve fairness, the strategic firm increases the decision thresholds for the C, A, and H groups, which have higher qualification rates ($>0.5$). This encourages agents in these groups to manipulate their features in order to meet the higher thresholds and gain acceptance. In contrast, the AA group—being the most disadvantaged group in our setting ($\alpha_s < 0.4$) faces a reduced threshold under the fair policy. Previously, this threshold was set high to curb manipulation by unqualified agents. Lowering it incentivizes manipulation, particularly among the large fraction of unqualified AA agents who were previously rejected, some of whom may now obtain acceptance. Finally, we note that the strategic firm only weakly satisfies the EOP constraint, due to the uncertainty in manipulation outcomes for the AA group. 

By comparison, Figure~\ref{fig:roc_non_type3} shows that a non-strategic firm \emph{fails} to satisfy the EOP constraint, even though Figure~\ref{fig:roc_fixed_x_y_type3} suggests apparent fairness when performance is evaluated using pre-strategic data (see the filled diamond markers). This divergence, as noted previously, highlights the gap between the firm’s \emph{intended} fairness guarantees and the \emph{realized} outcomes once agents respond strategically.

Moreover, the group-wise non-strategic policies exhibit consistent FPRs across groups (markers of the same color in Figure~\ref{fig:roc_non_type3}). Although the non-strategic firm attempts to achieve fairness by adjusting its thresholds, doing so primarily incentivizes qualified agents to manipulate, with higher thresholds assigned for C, A, and H groups; more among those may not be accepted, leading to lower TPRs relative to the unfair policy. However, for the AA group, reducing the already high threshold primarily accepts more qualified agents who may or may not choose to manipulate, leading to higher TPRs relative to the unfair policy. Fair non-strategic firm, however, leaving incentives for unqualified agents largely unchanged. This illustrates a negative consequence of enforcing fairness without accounting for agents’ strategic responses. 

\begin{figure}[htbp!]
    \centering
    \begin{subfigure}[t]{0.28\textwidth}
        \centering
        \includegraphics[width=\linewidth]{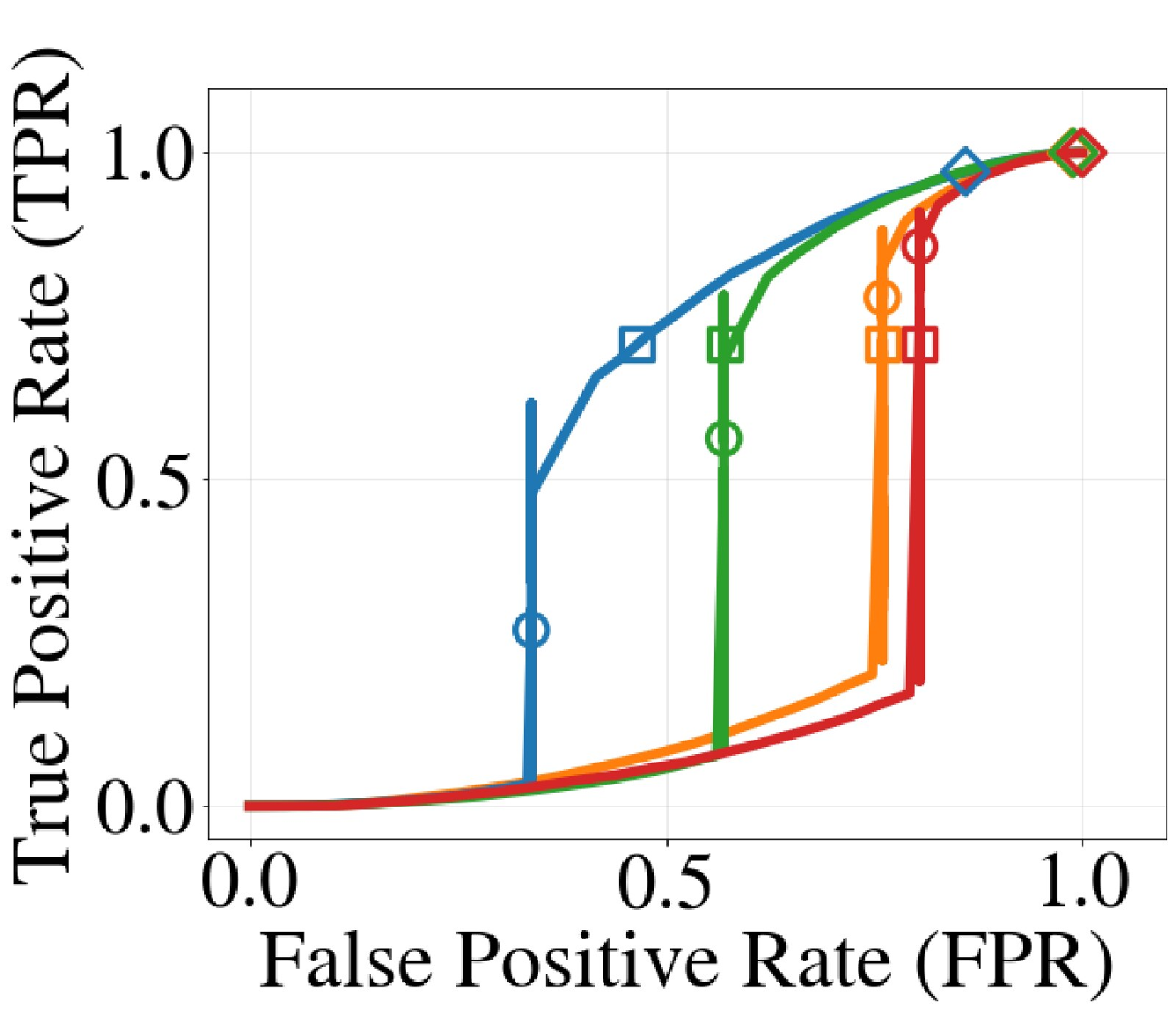}
        \subcaption{Post-strategic ROC (strategic firm)}
        \label{fig:roc_str_type3}
    \end{subfigure}
        \begin{subfigure}[t]{0.28\textwidth}
        \centering
        \includegraphics[width=\linewidth]{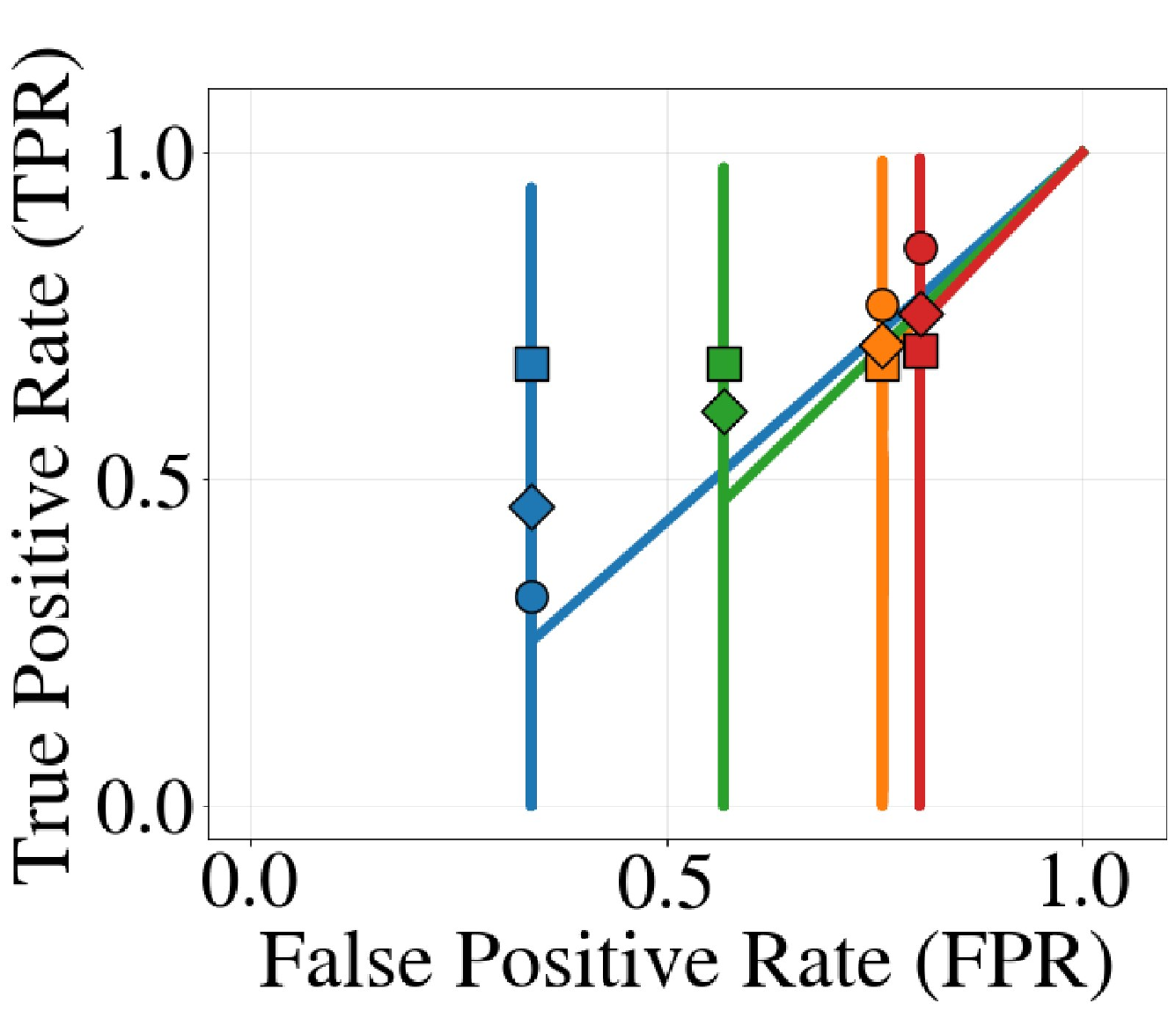}
        \subcaption{Post-strategic ROC (non-strategic firm)}
        \label{fig:roc_non_type3}
    \end{subfigure}
    \begin{subfigure}[t]{0.4\textwidth}
        \centering
        \includegraphics[width=\linewidth]{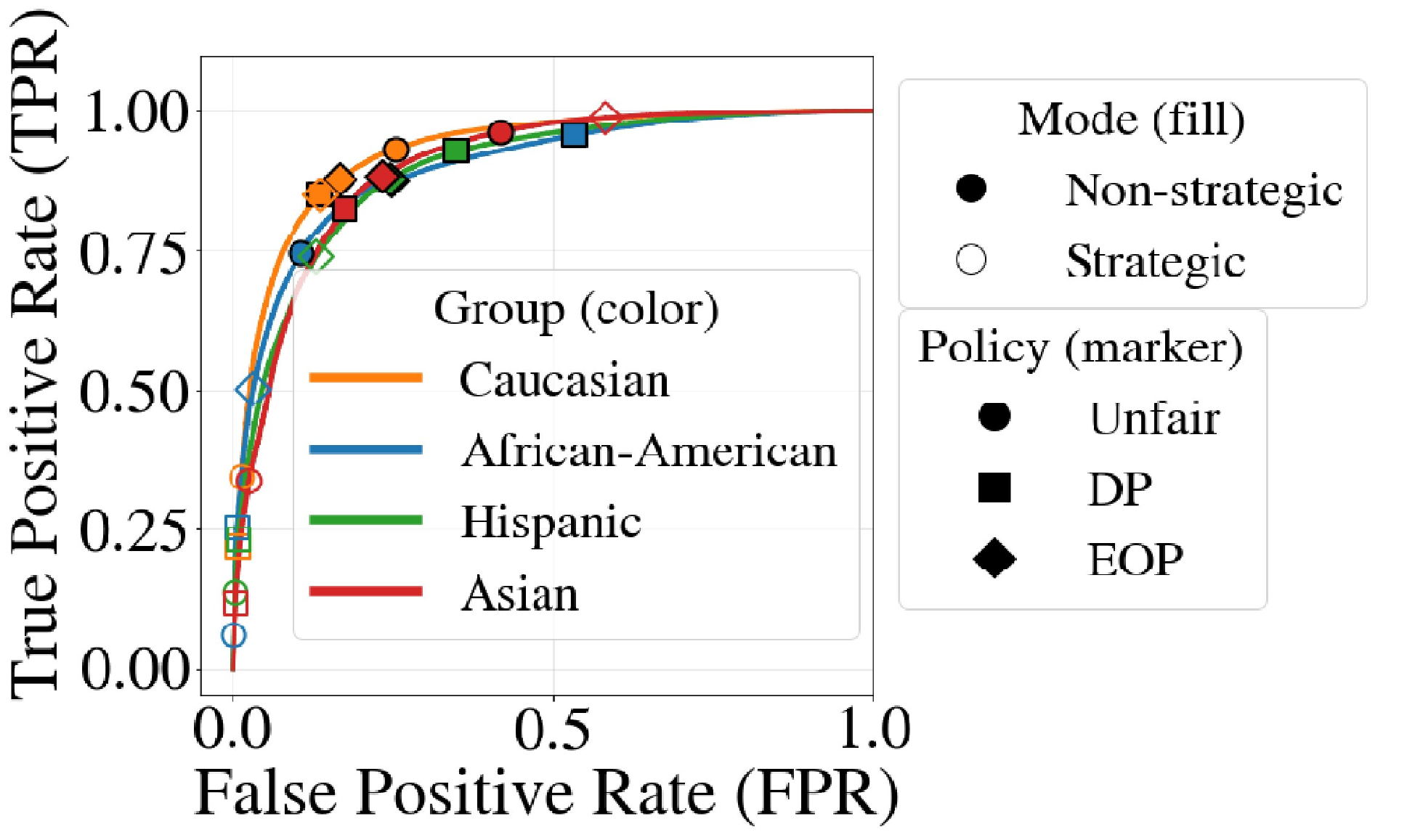}
        \subcaption{Pre-strategic ROC (both firms}
        \label{fig:roc_fixed_x_y_type3}
    \end{subfigure}
    \caption{ROC curves under pre- versus post-strategic statistics. Figures~(a)--(b) report ROC based on post-strategic statistics, reflecting the actual fairness performance of strategic and non-strategic decision policies, respectively.  Figure~(c) reports ROC based on pre-strategic statistics, reflecting the intended fairness performance under the non-strategic decision policy.}
    \label{fig:ROC_cureves_type3}
\end{figure}

\section{Synthetic Data–Based Experiments}\label{app:add_NE}
We present additional numerical experiments on synthetic datasets. We will start with a population consisting of a single demographic group to highlight the main observations from Proposition~\ref{prop:firm_impact_comp} (including under some relaxations of its assumptions). 
We will then present experiments on a population with two distinct demographic groups, placing emphasis on the impacts of fairness interventions, contrasting the induced strategic behavior in each demographic group, and analyzing the induced post-strategic qualification rates across groups at equilibrium. We first conduct these experiments under a Type~1 equilibrium (Appendix~\ref{app:type1_experiments}) and then extend the analysis to a Type~3 equilibrium (Appendix~\ref{app:type3_experiments}).

\subsection{Synthetic dataset Type 1 Experiments}\label{app:type1_experiments}
\paragraph{Single-group experiment setup.} 
We start with a population of agents from a single group $s$, with truncated Gaussian feature distributions. In particular, we let $G^0_s \sim \text{TruncNorm}([20, 60], 40, 15^2) \linebreak[4] \text{ and } G^1_s = \text{TruncNorm}([53, 113], 83, 15^2)$. We vary the (pre-strategic) qualification rate $\alpha_s$ in $(0,1)$. For the characteristics of the strategic actions, we set $C_{M,s}=0.1$, and $C_{I,s}=0.6$. The boost distributions are truncated Gaussian distributions, parameterized as follows: $\tau^y_{M,s}(b) \sim \text{TruncNorm}([10, 50], 30, 22^2),\linebreak[4] \tau^0_{I,s}(b) \sim \text{TruncNorm}([37, 79], 58, 15^2)$, and $\tau^1_{I,s}(b) \sim \text{TruncNorm}([40, 80], 60, 15^2).$ {We draw $n=1000$ total agents from the feature distributions, with the number drawn from in each label determined from the selected $\alpha_s$. For each $\alpha_s$, we run the experiment $50$ times, and report the average across these runs.} 

\paragraph{Strategic vs. non-strategic (unfair) firms.} In Figure~\ref{fig:type1_unfair_vis_all}, we explore the impacts of anticipating agents' strategic behavior (in the absence of any fairness interventions). Figure~\ref{fig:type1_comp_theta} shows that the strategic optimal thresholds are greater than the optimal non-strategic threshold at all levels of $\alpha_s$; this is consistent with Proposition~\ref{prop:firm_impact_comp}. Figure~\ref{fig:type1_comp_utility} supports this by showing that a strategic firm's utility is greater than the non-strategic one's. Lastly, consistent with parts (ii)-(iv) of Proposition~\ref{prop:firm_impact_comp}, we observe that the anticipation of agents' strategic behavior reduces the impacts of manipulation by the unqualified agents (Figure~\ref{fig:type1_comp_Psi0}) while increasing improvement by them (Figure~\ref{fig:type1_comp_Phi0}). Simultaneously, this choice leads to both more manipulation (Figure~\ref{fig:type1_comp_Psi1}) and (weakly) more improvement (Figure~\ref{fig:type1_comp_Phi1}) by the qualified agents, both of which also benefit the firm. Figure~\ref{fig:alpha_post_com_unfair} further supports these claims, by showing that post-strategic qualification rates $\hat{\alpha}_s$ are higher under the strategic firm's policy compared to the non-strategic firm's policy. In other words, strategic firms succeed at incentivizing agents to opt for improvement, with the impacts being more pronounced when the group is initially majority unqualified (low $\alpha_s$). Finally, we also note that all these observations hold even though the assumption $\overline{x}^0_s +(\mathbb{T}^y_{M,s})^{-1}(C_{I,s}-C_{M,s})  < \mu^0_s + \underline{b}^0_{I,s}$
is not satisfied in the current experiment setup, hinting that Proposition~\ref{prop:firm_impact_comp} may hold beyond its current assumptions. 

\begin{figure}[ht]
    \centering
    \begin{subfigure}{0.32\textwidth}
        \centering
        \includegraphics[width=\textwidth,trim={0 0 0 0},clip]{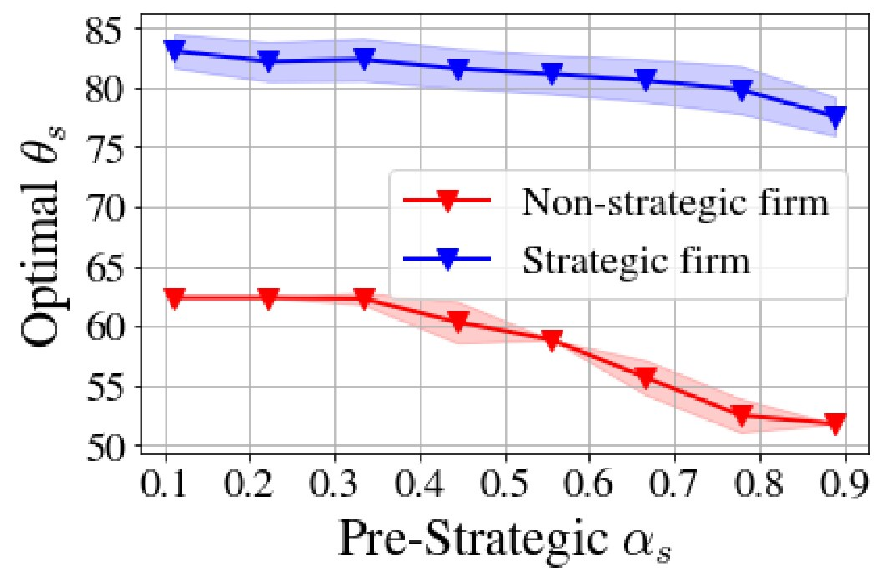}
        \caption{Optimal thresholds comparison}
        \label{fig:type1_comp_theta}
    \end{subfigure}
    \begin{subfigure}{0.32\textwidth}
        \centering
        \includegraphics[width=\textwidth,trim={0 0 0 0},clip]{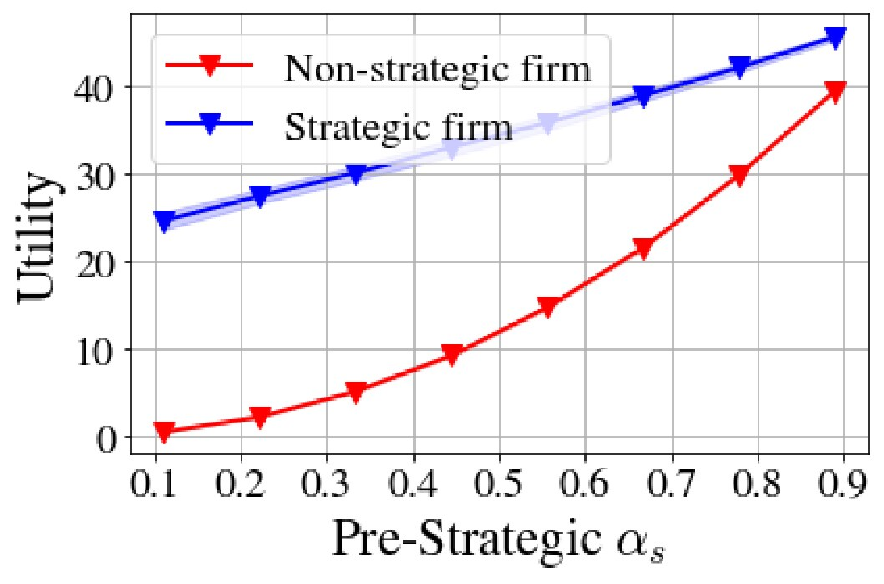}
        \caption{Optimal utility comparison}
        \label{fig:type1_comp_utility}
    \end{subfigure}
    \begin{subfigure}{0.32\textwidth}
        \centering
        \includegraphics[width=\textwidth,trim={0 0 0 0},clip]{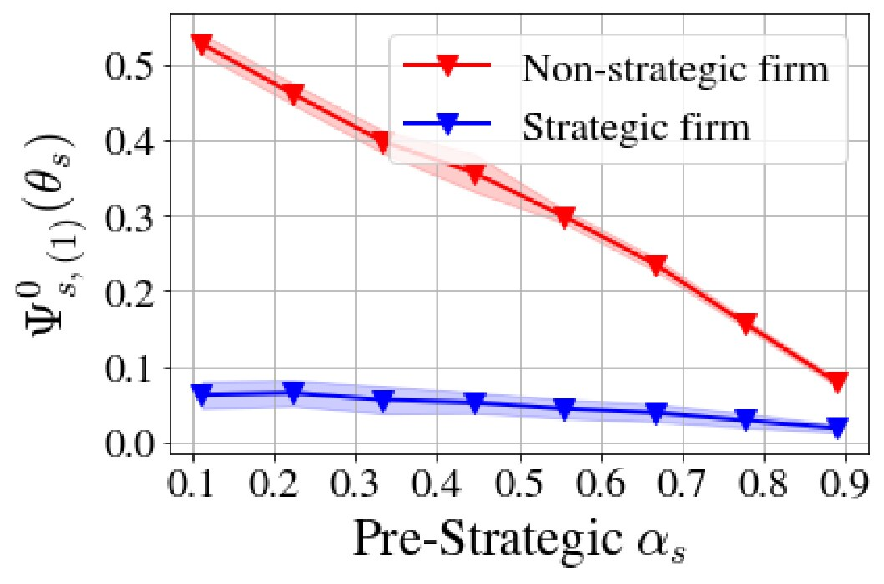}
        \caption{$\boldsymbol{\Psi}^0_{s,(1)}$ (Manipulation) comparison}
        \label{fig:type1_comp_Psi0}
    \end{subfigure}
    \begin{subfigure}{0.32\textwidth}
        \centering
        \includegraphics[width=\textwidth,trim={0 0 0 0},clip]{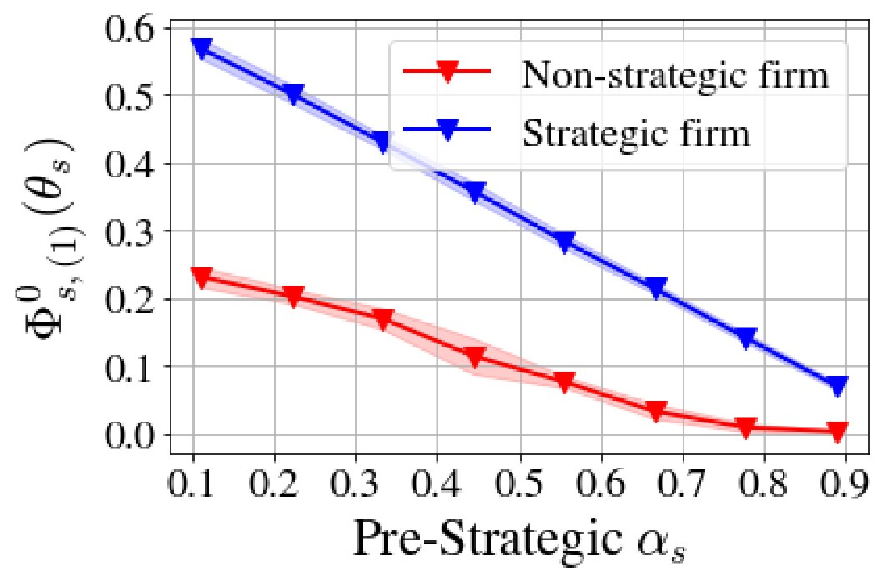}
        \caption{$\boldsymbol{\Phi}^0_{s,(1)}$ (Improvement) comparison}
        \label{fig:type1_comp_Phi0}
    \end{subfigure}
    \begin{subfigure}{0.32\textwidth}
        \centering
        \includegraphics[width=\textwidth,trim={0 0 0 0},clip]{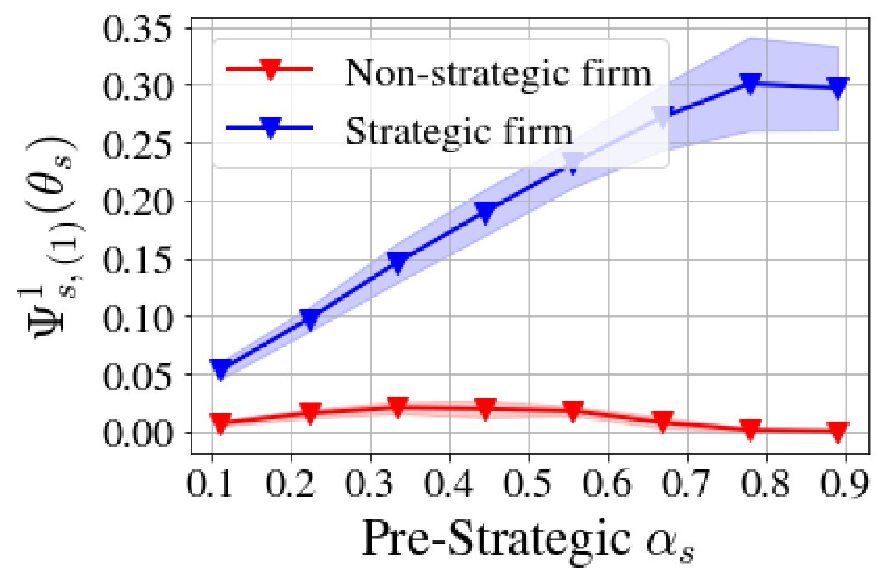}
        \caption{$\boldsymbol{\Psi}^1_{s,(1)}$ (Manipulation) comparison}
        \label{fig:type1_comp_Psi1}
    \end{subfigure}
    \begin{subfigure}{0.33\textwidth}
        \centering
        \includegraphics[width=\textwidth,trim={0 0 0 0},clip]{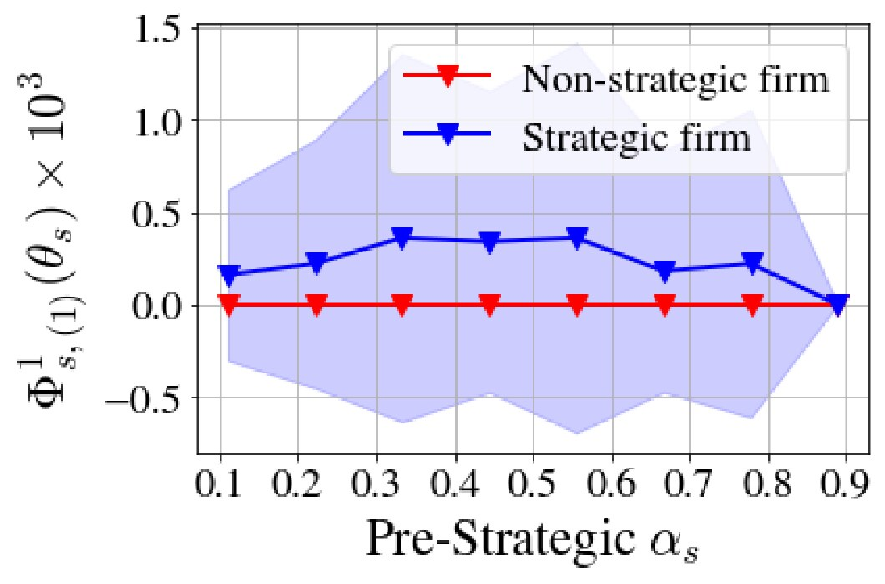}
        \caption{$\boldsymbol{\Phi}^1_{s,(1)}$ (Improvement) comparison }
        \label{fig:type1_comp_Phi1}
    \end{subfigure}
    \caption{Unfair optimal non-strategic vs. strategic policies' comparison when varying pre-strategic $\alpha_s$.}
    \label{fig:type1_unfair_vis_all}
\end{figure}
\begin{figure}[htb]
    \centering
    \includegraphics[width=0.32\linewidth]{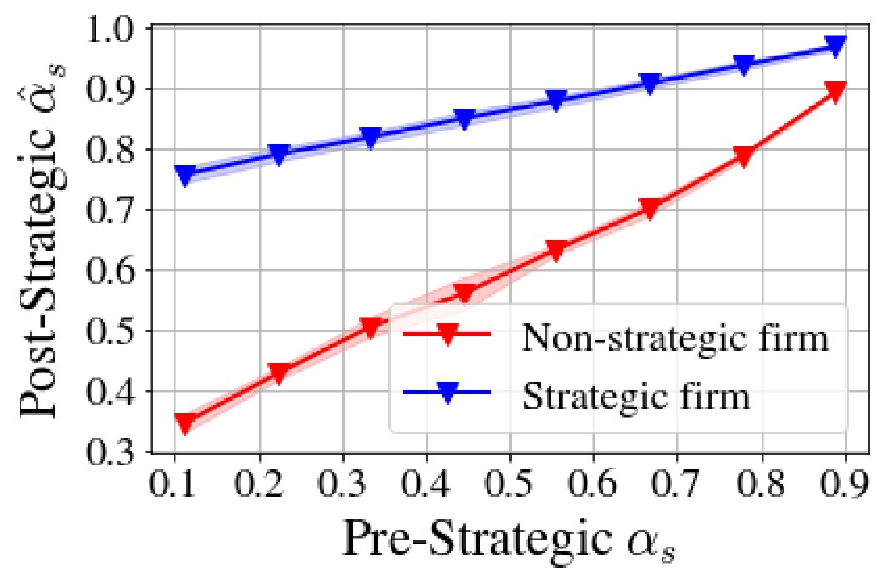}
    \caption{Unfair non-strategic vs. strategic policies' impact on post-strategic $\hat{\alpha}_s$ across pre-strategic $\alpha_s$.}
    \label{fig:alpha_post_com_unfair}
\end{figure}

\paragraph{Two-group numerical illustration setup.} We next consider a population consisting of two equal-size groups $a$ and $b$, with group $a$ being majority-qualified ($\alpha_a=0.7$) and group $b$ being majority-unqualified ($\alpha_b=0.2$). The feature distributions for both groups follow \emph{truncated} Gaussian distributions, with $G^0_a \sim \text{TruncNorm}([20, 110], 65, 15^2)$, $G^0_b \sim \text{TruncNorm}([0, 90], 45, 15^2)$, $G^1_a\sim \text{TruncNorm}([98, 188], 143, 15^2)$, and $G^1_b\sim \text{TruncNorm}([78, 168], 123, 15^2)$. We assume both groups have access to the same strategic actions, with $C_{M,s}=0.2$, $C_{I,s}=0.3$, and the boost distributions $\tau^y_{M,s}(b) \sim \text{TruncNorm}([20, 70], 45, 22^2)$, $ \tau^0_{I,s}(b) \sim \text{TruncNorm}([75, 115], 95, 15^2)$, and $\tau^1_{I,s}(b) \sim \text{TruncNorm}([72, 118], 97, 15^2)$. Note that with these assumptions, the disparities between groups are assumed to be \emph{historical} (in qualification rates and feature distributions) but not \emph{current} (in access to strategic resources).

\paragraph{Strategic vs. non-strategic fair firms.} Figure~\ref{fig:fair_vis_Type1} uses contour plots to illustrate the firms' utility, and the fairness-constrained utilities under two types of constraints (Demographic Parity and True Positive Rate parity) when varying the group thresholds $\theta_a$ and $\theta_b$. It also indicates the optimal thresholds for each case (unfair, DP-fair, and TPR-fair). Figure~\ref{fig: Type_1_fair_comp} also shows these thresholds, and the manipulation and improvement impacts on the firm's utility under each choice. It is worth noting that the strategic firm is aware of post-strategic statistics, and uses these when evaluating and imposing the fairness constraints. In contrast, due to unawareness of strategic behavior, the non-strategic firm evaluates its fairness constraints over the \emph{pre-strategic} constraints, which in turn means that its fairness desiderata are in fact \emph{not} satisfied ex-post. 

We first observe that both strategic and non-strategic firms set higher (resp. lower) thresholds on the majority-qualified (resp. majority-unqualified) group in order to meet a fairness constraint; this is consistent with existing literature and our assumption in Corollary~\ref{cor:fair-policies}. We also note that the strategic firm (whether fair or not) sets higher thresholds on both groups that the non-strategic firm; this is also consistent with  Proposition~\ref{prop:firm_impact_comp}. 

The more notable and newly observed distinction is when we compare the impact of the \emph{type} of fairness intervention (DP vs. TPR) on the difference between strategic and non-strategic firms. In particular, there is a more significant change in DP-fair thresholds when the firm is non-strategic. To see why, consider the DP-Fair policy selected by a non-strategic firm (Figure~\ref{fig:fair_Non_utlity_type1}). Such a firm does not account for agents' strategic responses, simply lowering the decision threshold on group $b$ and increasing it on group $a$, in order to satisfy the fairness constraint. 
For group $b$, this DP-Fair policy is in fact so low that it accepts many agents (by default) and accepts others with even lower features who opt for manipulation. That is why $\boldsymbol{\Phi}^0_{b,(1)}(\theta^{\mathcal{C}}_b)$ (resp. $\boldsymbol{\Psi}^0_{b,(1)}(\theta^{\mathcal{C}}_b)$) decreases (resp. increases) compared to those under the unfair non-strategic policy, as illustrated by the solid red lines in Figures~\ref{fig: Type_1_fair_comp} the second and the fourth subplots. In contrast, a strategic firm lowers its fair thresholds on group $b$ much less drastically (Figure~\ref{fig:fair_Str_utlity_type1}), as it realizes that parity between selection rates (as required by DP) can be achieved by a combination of adjusting the thresholds \emph{and} driving agents' best-responses. In fact, we can observe that fairness desiderata are achieved with very small changes to the decision thresholds when accounting for agents' strategic responses, therefore leading to far less loss in utility for the firm. 

\begin{figure}[ht]
    \centering
    \begin{subfigure}{0.45\textwidth}
        \centering
        \includegraphics[width=\textwidth,trim={0 0 0 0},clip]{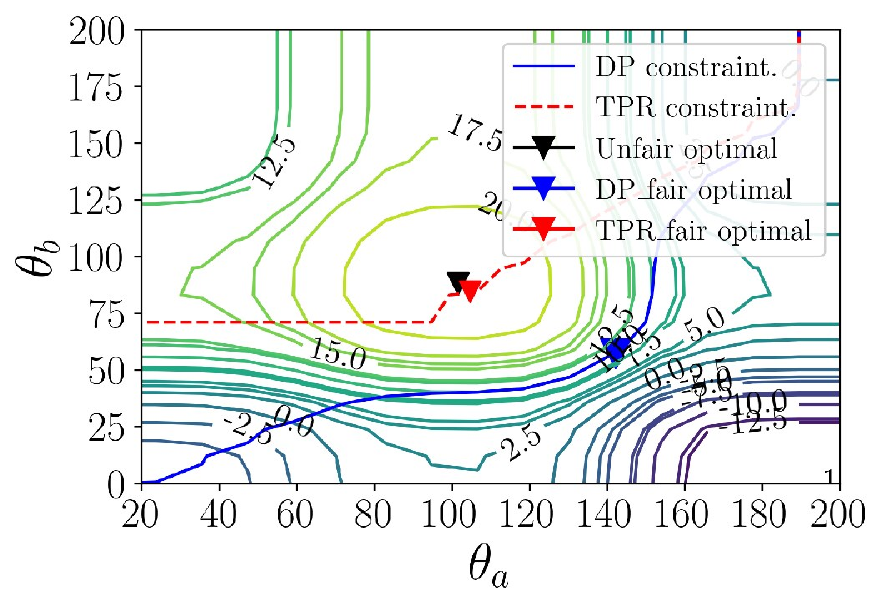}
        \caption{Non-strategic firm's utility}
        \label{fig:fair_Non_utlity_type1}
    \end{subfigure}
    \hfill
    \begin{subfigure}{0.45\textwidth}
        \centering
        \includegraphics[width=\textwidth,trim={0 0 0 0},clip]{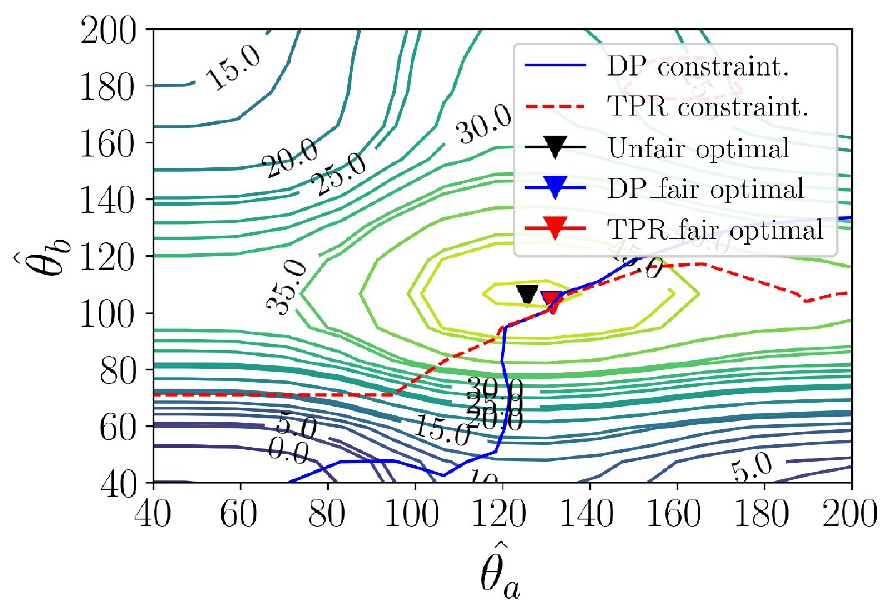}
        \caption{Strategic firm's utility}
        \label{fig:fair_Str_utlity_type1}
    \end{subfigure}
    \caption{Firm's utility and optimal thresholds when imposing fairness constraints.}
    \label{fig:fair_vis_Type1}
\end{figure} 

\begin{figure}
    \centering
    \includegraphics[width=1\linewidth]{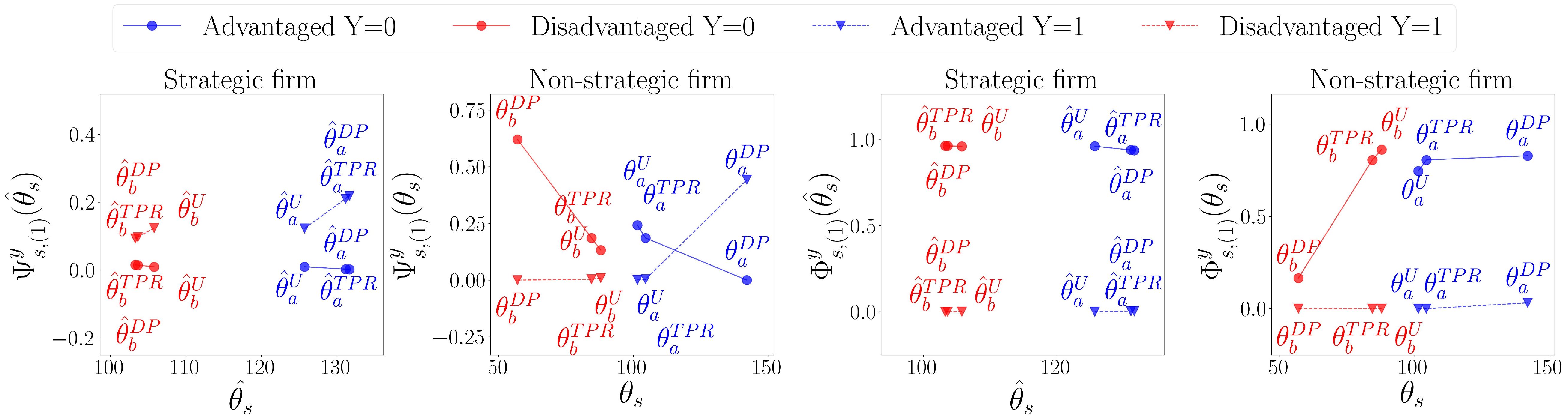}
    \caption{Strategic behavior's impact comparison: fair vs unfair policies}
    \label{fig: Type_1_fair_comp}
\end{figure}

\paragraph{Impacts on agents' post-strategic qualification rates.} To further explore the impacts of a strategic policy on agents' behavior under different fairness interventions (unfair, DP, and TPR), in Figure~\ref{fig:alpha_post_com_fair}, we contrast the post-strategic $\hat{\alpha}_s$ qualification rates for each group. 
First, as also noted earlier, we note that the strategic firm is more successful at incentivizing improvement, as evidenced by the higher $\hat{\alpha}_s$ in both groups $a$ and $b$ compared to the non-strategic firm (in Figure~\ref{fig:alpha_post_com_fair}, blue bars are consistently higher than red bars). This increase is even more significant for group $b$, as this group has a lower qualification rate to begin with. 

Perhaps more interestingly, we observe that if a firm does not account for strategic responses by agents,  fairness interventions can have a negative implication as they \emph{decrease} improvement by the disadvantaged group $b$ (in Figure~\ref{fig:alpha_post_com_fair}, the red bars are lower under the fair policies compared to the unfair policy). This is because fairness interventions typically lower the threshold for group $b$ compared to the unfair policy, which in turn drives the adoption of manipulation over improvement decisions. In comparison, a strategic firm recognizes that fairness goals can be met by a combination of lowering the threshold and incentivizing improvement actions, ultimately leading to higher improvement rates in the disadvantaged group $b$. 

\begin{figure}[ht]
    \centering
    \includegraphics[width=0.6\linewidth]{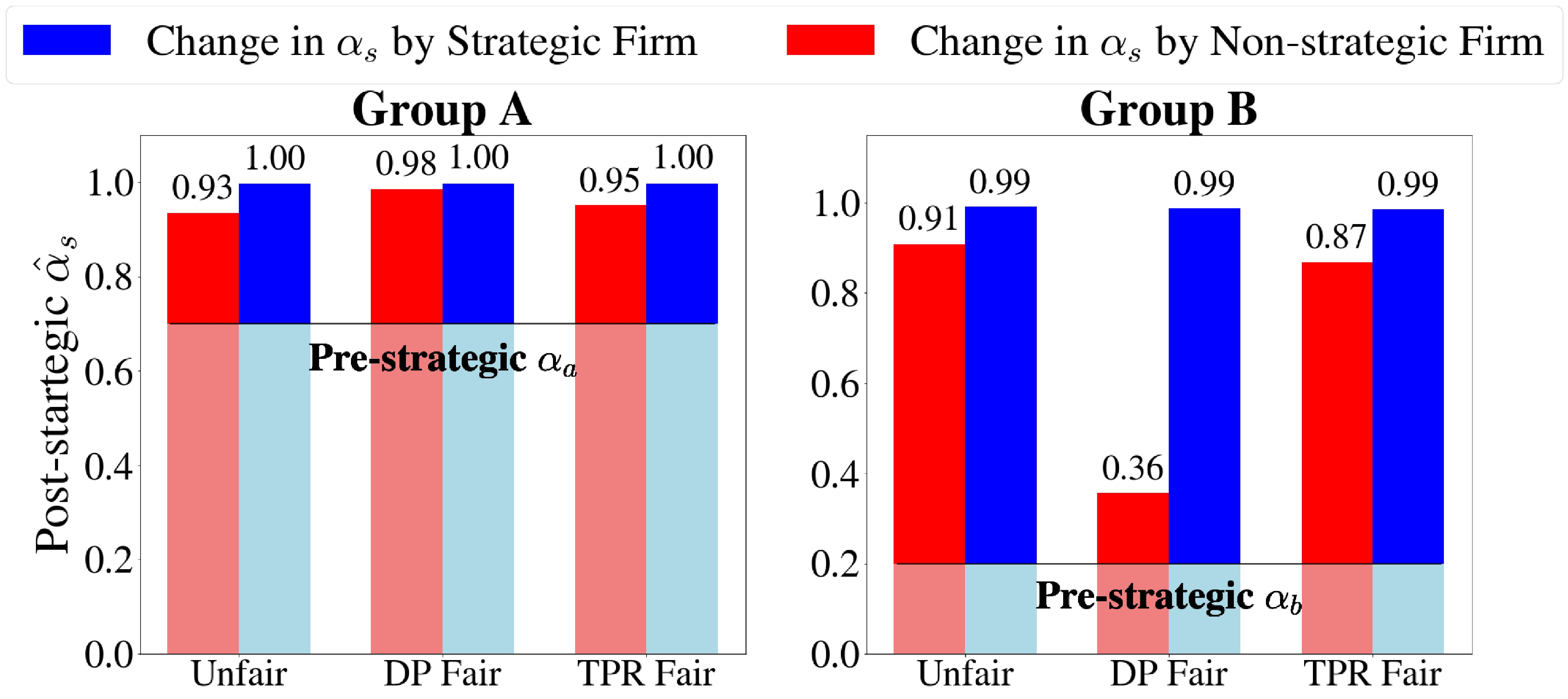}
    \caption{Fair vs. unfair, non-strategic vs. strategic policies' impact on post-strategic $\hat{\alpha}_s$.}
    \label{fig:alpha_post_com_fair}
\end{figure}

\subsection{Synthetic dataset Type 3 Experiments}\label{app:type3_experiments}
\paragraph{Single-group experiment setup.}
We model a population of agents from group $s$ with truncated Gaussian feature distributions: $G^0_s \sim \text{TruncNorm}([0, 90], 45, 15^2)$ and $G^1_s \sim \text{TruncNorm}([78, 168], 123, 15^2)$. The pre-strategic qualification rate $\alpha_s \in (0, 1)$ varies, with strategic action costs $C_{M,s}=0.2$ and $C_{I,s}=0.8$. Boost distributions are truncated Gaussians: $\tau^y_{M,s}(b) \sim \text{TruncNorm}([20, 70], 45, 22) \linebreak[4] \tau^0_{I,s}(b) \sim \text{TruncNorm}([40, 82], 61, 15)$, and $\tau^1_{I,s}(b) \sim \text{TruncNorm}([42, 82], 62, 15)$. We sample $n=1000$ agents based on $\alpha_s$, run 50 experiments per $\alpha_s$, and report averages.

\paragraph{Strategic vs. non-strategic (unfair) firms.}
In Figure~\ref{fig:all_analysis_Type3}, the leftmost picture shows strategic optimal thresholds (in the absence of any fairness interventions) exceeding non-strategic ones across all $\alpha_s$ levels, aligning with Proposition~\ref{prop:firm_impact_comp}, while the second-left picture confirms higher utility for a strategic firm. The fourth-left picture indicates reduced manipulation by unqualified agents due to anticipating strategic behavior, whereas the third-left picture shows increased manipulation by qualified agents, benefiting the firm, consistent with Proposition~\ref{prop:firm_impact_comp} parts (ii) and (iii).

\begin{figure}
    \centering
    \includegraphics[width=1\linewidth]{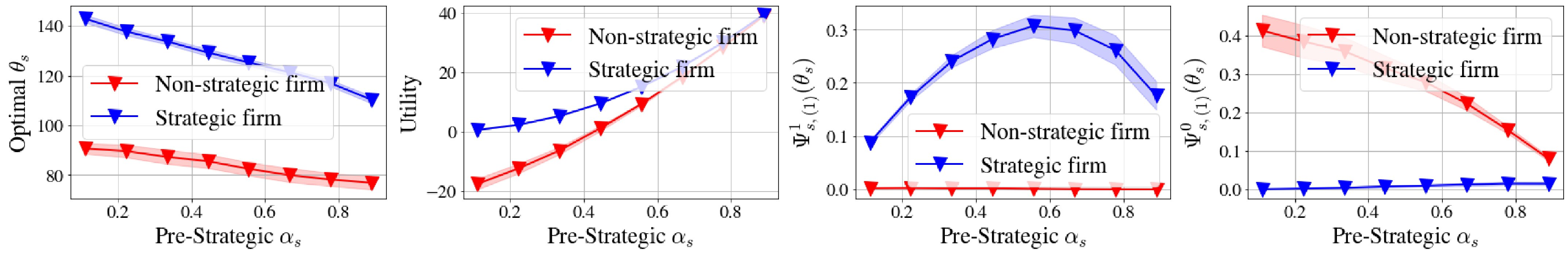}
    \caption{Proposition~\ref{prop:firm_impact_comp} under Type 3 equilibrium}
    \label{fig:all_analysis_Type3}
\end{figure} 

\paragraph{Two-group numerical illustration setup.}
We next consider a population consisting of two equal-size
groups a and b, with group a being majority-qualified ($\alpha_a$ = 0.7) and group b being majority-unqualified ($\alpha_b$ = 0.2). The feature distributions for both groups follow \emph{truncated} Gaussian distributions, with $G^0_a\sim \text{TruncNorm}([20, 110],65,15^2), G^0_b \sim \text{TruncNorm}([0, 90],45,15^2), G^1_a\sim \text{TruncNorm}([98, 188],143,15^2)$ and $G^1_b\sim \text{TruncNorm}([78, 168],123,15^2)$. We assume both groups have access to the same strategic actions, with $C_{M,s}=0.2$, $C_{I,s}=0.8$, and the same boost distribution as in the "Single-group experiment".
   
\paragraph{Strategic vs. non-strategic fair firms.}
Figure~\ref{fig:fair_vis_Type3} uses contour plots to display firm utility and fairness-constrained utilities (DP and TPR parity) as thresholds $\theta_a$ and $\theta_b$ vary, marking optimal thresholds (unfair, DP-fair, TPR-fair). Figure~\ref{fig:fair_analysis_type3} shows these thresholds and manipulation effects on utility. The strategic firm, using post-strategic statistics, enforces fairness constraints, unlike the non-strategic firm, which relies on pre-strategic constraints.

As seen in Type 1 equilibrium, both firms raise (resp. lower) thresholds for the majority-qualified (resp. majority-unqualified) group to meet fairness constraints, per Corollary~\ref{cor:fair-policies} and prior literature, with the strategic firm setting higher thresholds for both groups, consistent with Proposition~\ref{prop:firm_impact_comp}.

\begin{figure}[hbt]
    \centering
    \begin{subfigure}{0.35\textwidth}
        \centering
        \includegraphics[width=\textwidth,trim={0 0 0 0},clip]{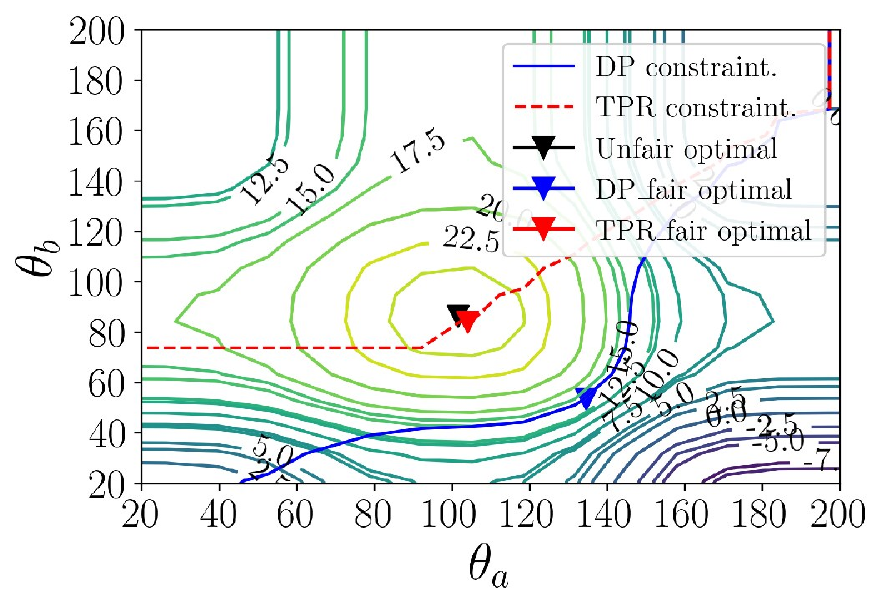}
        \caption{Nonstrategic utility and fairness constraints}
        \label{fig:fair_Non_utlity_type3}
    \end{subfigure}
    \begin{subfigure}{0.35\textwidth}
        \centering \includegraphics[width=\textwidth,trim={0 0 0 0},clip]{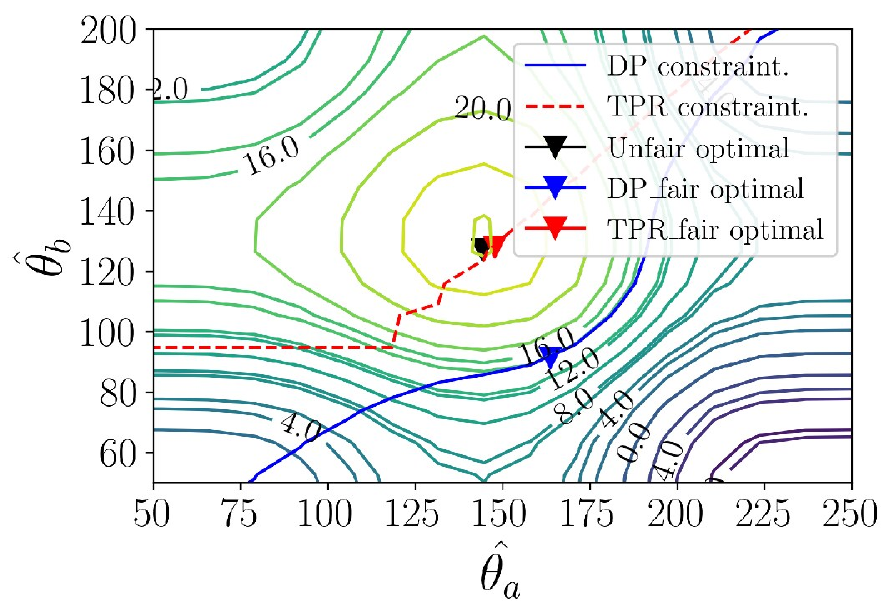}
        \caption{Strategic utility and fairness constraints}
        \label{fig:fair_Str_utlity_type3}
    \end{subfigure}
    \caption{Firm's Utility under agents' best response Type 3 with the fairness constraint}
    \label{fig:fair_vis_Type3}
\end{figure}

Unlike Type 1, comparing DP vs. TPR fairness impacts reveals greater threshold changes under DP for both firms, as seen in Figure~\ref{fig:fair_vis_Type3}. Without \emph{improvement} actions, $\alpha_s$ remains unchanged, forcing firms to lower $\theta_b$ and raise $\theta_a$ to meet DP constraints, as strategic responses can’t achieve selection parity. For group $b$, the DP-fair strategic firm increases unqualified manipulation more, raising $\boldsymbol{\Psi}^0_{b,(1)}(\hat{\theta}^{\mathcal{C}}_b)$ compared to the unfair policy’s high threshold (curb many unqualified manipulation), shown by solid red lines in Figure~\ref{fig:fair_analysis_type3}’s first plot. The non-strategic firm lowers $\theta_b$ similarly, but with less manipulation increase (Figure~\ref{fig:fair_analysis_type3}’s second plot) due to already high unqualified manipulation. Thus, without improvement actions, fairness requires larger threshold shifts under strategic responses, reducing firm utility.

\begin{figure}[tb]
    \centering
    \includegraphics[width=1\linewidth]{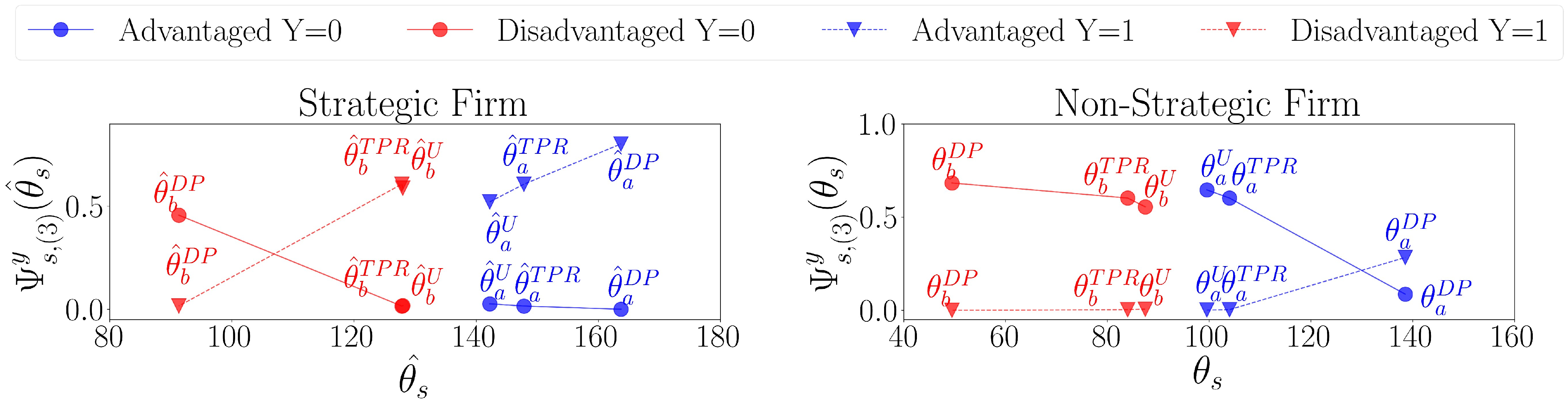}
    \caption{Strategic behavior’s impact comparison: fair vs unfair policies under Type 3 equilibrium}
    \label{fig:fair_analysis_type3}
\end{figure}

\end{document}